\newcommand{\moronly}[1]{{}}
\newcommand{\kz}[1]{\textcolor{red}{[Kaiqing: #1]}}
\newcommand{\spp}[1]{\textcolor{blue}{[Sarath: #1]}}
\newcounter{assumption}
\newcounter{subAssumption}
\newcounter{assumptionPrime}
\newcounter{subAssumptionPrime}
\renewcommand\theassumptionPrime{\arabic{assumptionPrime}'}
\renewcommand\thesubAssumption{\arabic{assumption}-\roman{subAssumption}}
\renewcommand\thesubAssumptionPrime{\arabic{assumptionPrime}'-\roman{subAssumptionPrime}}
\newenvironment{assumption*}{\setcounter{assumptionPrime}{\theassumption-1}\refstepcounter{assumptionPrime}\par%\medskip
   \noindent  \textbf{Assumption \theassumptionPrime.} \em \rmfamily}
\newenvironment{assumptions*}{\setcounter{assumptionPrime}{\theassumption-1}\refstepcounter{assumptionPrime}\setcounter{subAssumptionPrime}{0}\par%\medskip
\rmfamily}
\newenvironment{subAssumption*}{\refstepcounter{subAssumptionPrime}\par%\medskip
   \noindent  \textbf{Assumption \thesubAssumptionPrime.} \em \rmfamily}
\newcommand\Dc{\ensuremath{\mathcal{D}}}
\newtheorem{Lemma}{Lemma}
\newtheorem{Proposition}{Proposition}
\newtheorem{Theorem}{Theorem}
\newtheorem{Definition}{Definition}
\newtheorem{Corollary}{Corollary}
\newtheorem{Remark}{Remark}
\newtheorem{Assumption}{Assumption}
\newcommand\EE{\mathbb{E}}
\newcolumntype{P}[1]{>{\centering\arraybackslash}p{#1}}
\renewcommand\footnotemark{} 
\title{\Large Offline Reinforcement Learning via Linear-Programming  with Error-Bound Induced  Constraints}
\author{\normalsize Asuman Ozdaglar$^\dag$ \and \normalsize Sarath Pattathil$^\dag$ \and \normalsize Jiawei Zhang$^\dag$ \thanks{
%Authors are arranged in alphabetical order. 
$^\dag$Massachusetts Institute of Technology. $^\ddagger$University of Maryland, College Park. \{asuman, sarathp, jwzhang\}@mit.edu; kaiqing@umd.edu.} \and \normalsize Kaiqing Zhang$^\ddagger$}
\date{} 
\begin{document}
	
	\maketitle
	
	\begin{abstract} 
Offline reinforcement learning (RL) aims to find an optimal policy for Markov decision processes (MDPs),  
using a pre-collected dataset, without further interactions with the environment. 
In this work, we revisit the linear programming (LP) reformulation of Markov decision processes for offline RL, with the goal of developing algorithms with optimal  $O(1/\sqrt{n})$ sample complexity, where $n$ is the sample size, under partial data coverage and general function approximation, and with favorable computational tractability. To this end, we derive new \emph{error bounds} for both the dual and primal-dual formulations of the LP, and incorporate them properly as \emph{constraints} in the LP reformulation. We then show that under a completeness-type assumption,   
$O(1/\sqrt{n})$ sample complexity can be achieved under standard single-policy coverage  assumption, when one properly  \emph{relaxes} the occupancy validity constraint in the LP. This framework can readily handle both infinite-horizon discounted and average-reward MDPs, in  both general function approximation and tabular cases. The instantiation to the tabular case achieves either state-of-the-art or the first sample complexities of offline RL in these settings. 
To further remove any completeness-type assumption, we then introduce a proper \emph{lower-bound constraint} in the LP, and a variant of the standard single-policy coverage assumption. Such an  algorithm leads to a  $O(1/\sqrt{n})$ sample complexity with dependence on the \emph{value-function gap},  with only realizability assumptions. 
Our properly constrained LP-framework    advances the existing results in several aspects, in relaxing certain assumptions and achieving the optimal $O(1/\sqrt{n})$ sample complexity, with simple analyses.  
We hope our results bring new insights into the use of LP formulations and the equivalent  primal-dual minimax  optimization for offline RL,  through the error-bound induced constraints. 
	\end{abstract}

%!TEX root = offline_RL.tex

\section{Introduction}\label{sec:intro}

%\kz{to update.}

Recent years have witnessed tremendous empirical successes of reinforcement learning (RL) in many sequential-decision making problems \citep{mnih2015human,silver2016mastering,vinyals2017starcraft,levine2016end}. Key to these successes are two factors: 1) use of rich {\it function approximators}, e.g., deep neural networks; 2) access to excessively {\it large interaction data} with the environment.  Most  successful examples above are extremely data-hungry. In some cases, the  interaction data can be easily obtained in an online fashion, due to the existence of powerful simulators such as game engines \citep{silver2016mastering,vinyals2017starcraft} and physics simulators \citep{todorov2012mujoco}. 

On the other hand, in many other domains of RL, such online interaction is impractical, either because data collection is  expensive and/or impractical, or the environment is simply difficult to simulate well. Many real world applications fall into this setting, including robotics and autonomous driving \citep{levine2018learning,maddern20171}, healthcare \citep{tseng2017deep}, and recommender systems \citep{swaminathan2017off}. Moreover, even in the cases where online interaction is available, one might still want to utilize previously collected data, as effective generalization requires {\it large}  datasets  \citep{levine2020offline}. \emph{Offline RL} has thus provided a promising framework when one really targets deploying RL in the real-world.

However, in practice, offline RL  is known to suffer from the {\it training instability} issue due to the use of function approximation, e.g., neural networks, and the {\it distribution shift}   issue due to the mismatch between the offline data distribution and the targeted (optimal) policy distribution  
\citep{fujimoto19off,kumar20conservative}. As a result sample-efficiency guarentees for offline RL  with function approximation  usually relies on strong assumptions on both the {\it function classes} and the {\it dataset}. In particular, many earlier results \citep{munos2008finite,scherrer2014approximate,chen2019information,zhang2021finite} require the function classes to be {\it Bellman-complete}, i.e., the value function class is {\it closed} under the Bellman operator, and the dataset to have {\it full coverage}, i.e., the data covers the state distributions induced by {\it all} policies.  Both assumptions are strong: the former is {\it non-monotone} in the function class, i.e.,  the assumption can be violated when a richer function class is used, and is much stronger than the common assumption of {\it realizability} (i.e., the optimal solution lies in the function class) in statistical learning theory; the latter essentially  requires the offline data to  cover all the possible state-action pairs, which is violated in most real-world applications.

Significant progress has been made lately to relax these assumptions.   For example, \cite{liu2020provably,jin2020ispessimism,rashidinejad2021bridging,xie2021bellman,uehara2021pessimistic} have shown that using the \emph{pessimistic} mechanism that chooses the worst-cast value function or model in the uncertainty set during learning, the full coverage assumption can be relaxed to only a {\it single-policy} (i.e., an optimal policy) coverage assumption. Nonetheless, the results all rely on completeness-type (which includes the tabular setting) or even stronger assumptions, and some of the algorithms are not computationally tractable \citep{xie2021bellman,uehara2021pessimistic}. On the other hand, some works require only realizability, but with additionally either stronger (than all-policy coverage) assumptions on data coverage \citep{xie2021batch}, or the  uniqueness of the optimal policy \citep{chen2022offline}, and can be computationally intractable. 

More recently, \cite{zhan2022offline} has successfully  relaxed  {\it both} the full data coverage and the completeness assumptions, through the 
%seminal 
use of  the linear-programming (LP) reformulation of   Markov decision processes (MDPs). 
%The seminal use of t
The LP framework not only significantly weakens the assumptions, but also better enables computationally tractable algorithms.  However, the algorithms and analyses in \cite{zhan2022offline}  strongly depend on a {\it regularized} version of the LP formulation, which calls for stronger assumptions than {\it single-policy} coverage, and leads to statistically suboptimal rates (i.e., $O(1/{n^{1/6}})$ where $n$ is the size of the dataset). In this paper, we  revisit and further investigate the power of the LP-framework for offline RL with performance guarantees, through the lens of error-bound-based analyses from  optimization theory \citep{luo1993error,pang1997error}. Our results  advance the existing ones in several aspects with new techniques  and insights, as well as simple analyses. 

%We detail our contributions as follows.  

\paragraph{Contributions.} We propose LP-based offline RL algorithms with  optimal (in terms of sample size $n$) $O(1/\sqrt{n})$ sample complexities,  
%and computational tractability,  
under partial data coverage and general function approximation, and without any behavioral regularization  \citep{zhan2022offline}. In particular, first, we obtain the $O(1/\sqrt{n})$ optimal rate under the standard single-policy concentrability (SPC) assumption \citep{rashidinejad2021bridging}, with  some completeness-type assumption on the function class, for infinite-horizon discounted MDPs. Second, our  result leads to the near-optimal rate of $O(\sqrt{|S|}/((1-\gamma)\sqrt{n}))$ when reducing  to the tabular case,  improving even the state-of-the-art tabular-case result \citep{rashidinejad2021bridging}. Most results for the general function  approximation case yield a loose bound from such a reduction. 
Third, the framework can also be readily generalized to handle \emph{average-reward} MDPs, attaining one of the first sample complexity results of offline RL in this setting.  
Fourth, with only the realizability assumption,  we obtain $O(1/({\texttt{Gap}} \cdot\sqrt{n}))$ rate under a partial data coverage assumption that is a variant of standard SPC, where {\texttt{Gap}} denotes the minimal difference between the values of the best   action and the second-best one among all states. Note that our algorithms inherit the  favorable computational tractability as other LP-based offline RL algorithms \citep{zhan2022offline,rashidinejad2022optimal}, with basic and neat analysis. 
Key to our algorithms is to add proper  \emph{constraints} in the LP, inspired by a few novel \emph{error bounds} of the LP-based offline formulation (see detailed discussions  next). Finally, we generalize our framework to several other scenarios in a unified fashion, including those {with sparse state-visitation by the optimal policy}, with unknown behavior policy and no-coverage of the optimal policy, 
%  improved sample complexity, 
  and contextual bandits (as a special case of MDPs) without the aforementioned completeness assumption.  
%Inspired by novel error bounds in this LP-reformulation, the key to our techniques is to add proper \emph{constraints} to the LP reformulation:  the 
%\emph{validity constraints} 
% of the occupancy measure in the first case, and a \emph{lower bound} on the density ratio in the second case.

%\kz{update the contributions later also.}

%We have the following contributions:
%\begin{enumerate}
%\item We propose an optimal algorithm that achieves the ${O}(1/\sqrt{n})$ accuracy for offline RL in average-reward MDP;
%\item For the function approximation case, we can remove the completeness-type assumption if we consider asymptotic rate;
%\item We extend the algorithm to solve problems with function approximation that has some approximation error. This allows us to include bias-variance tradeoff to the constraints.
%With this result, we can solve the problems considered in Tianyi Peng's paper with the same asymptotic rate and also finite sample guarantee, which is not given in their paper.
%\item All these algorithms discussed in our note is unified in one framework, which only requires to adjust the right-hand-side of the inequality constraints.
%\end{enumerate}

\paragraph{Our key techniques:  LP with Error-bound induced constraints.} The key idea to our approaches is to study the LP reformulations of the underlying MDPs under additional {\it constraints} induced by certain \emph{error bounds}.  To this end, 
we focus on the dual problem of a variant of the standard LP reformulation, based on the marginal importance sampling (MIS) framework  \citep{nachum2019algaedice,lee2021optidice}, where the dual variable   corresponds to the ratio between the state-action occupancy measure and the offline data distribution (also referred to as the {\it density ratio}).  Error-bound analysis is a powerful tool in optimization theory, which is generally defined as estimating  the suboptimality  of a   solution  by   some \emph{residual functions}, defined as the \emph{amount of violation of certain controllable optimality conditions}. 

Our first set of results relies on a key {\it error bound} lemma that relates the value function suboptimality with the $\ell_1$-norm \emph{violation of the validity constraint} on the \emph{occupancy measure} in the LP (see Lemma \ref{lemma:infeasible}). This lemma leads to a {\it constrained}  dual formulation  without the need of behavior regularization as in \cite{zhan2022offline}. Using function approximation for the density ratio and the {\it sign function} of the occupancy validity  constraint (see Definition \ref{def:x_sign}), this formulation organically allows us to obtain $O(1/\sqrt{n})$ sample complexity under the realizability of density ratio function class, and certain completeness assumption on the {sign function}, together with the  standard SPC assumption \citep{rashidinejad2021bridging,chen2022offline}.  More importantly, such a framework can readily handle multiple extended settings, and can also 
be readily 
instantiated to the tabular MDP case, for \emph{both} discounted and average-reward settings, achieving either state-of-the-art or the first sample complexities in the  literature. 
%The framework can also handle several extensions 

To remove any completeness assumption, we then  consider the \emph{minimax reformulation} of the dual LP, which  dualizes the occupancy measure validity constraints.  We then develop a new {\it error bound} that relates the value function suboptimality with the \emph{primal gap} \citep{ozdaglar2022good} of the minimax reformulation (see Lemma \ref{J_Delta}). To stabilize the normalization step in generating the policy from the LP solution (see Eq.  \eqref{equ:def_pi_w}), we introduce an additional {\it lower-bound constraint}   on the density ratio, which does not lose optimality if the initial state distribution coincides with the offline data distribution.  
Under this new formulation, we establish gap-dependent $O(1/\sqrt{n})$ sample complexity with \emph{only}  realizability assumptions on the value function and density ration, and a slightly stronger SPC assumption, which assumes  that certain optimal policy covered by the behavior policy is also covered by the offline data. 
% distribution. 

%\kz{update the contributions later also.}

%\kz{I would suggest to shorten (or move completely to appendix) the ``Our techniques'' above, and keep the introduction of MDP below, since it is important to introduce the notation....}

%A more detailed literature review, a summary of our techniques, as well as the notation we use in this paper are given in Appendix \ref{app:sec_1}. 

\subsection{Related Work}
\label{app:sec_1_1}

We 
%provide a more detailed literature  review in this subsection, and 
categorize  the literature  based on the assumptions on data and function class.

\paragraph{Data coverage assumptions.} Early theoretical works on offline RL usually require  the {\it all-policy concentrability} assumption, i.e., the offline data has to be exploratory enough to cover the  state distributions induced by {\it all} policies \citep{munos2008finite,scherrer2014approximate,chen2019information}. We refer to this assumption as the {\it full data  coverage} assumption. Slightly weaker variant that assumes some weighted version of the all-policy concentrability coefficient is bounded has also been investigated \citep{xie2021batch,uehara2020minimax}. More recently, significant progress has been made to relax full coverage assumption to partial coverage ones. \cite{jin2020ispessimism,rashidinejad2021bridging,li2022settling} developed pessimistic value iteration based algorithms for tabular or linear MDPs, under the single-policy concentrability assumption on data coverage. When general function approximation is used, some variants of the SPC assumption were proposed to account for partial data coverage \citep{uehara2021pessimistic,xie2021bellman,cheng2022adversarially}. However, these algorithms are either computationally intractable \citep{uehara2021pessimistic,xie2021bellman}, or statistically suboptimal \citep{cheng2022adversarially}. 
%The closest work to ours is \cite{zhan2022offline}, which also deals with the LP framework and partial coverage. However, it requires coverage of the optimal policies of {\it both} the unregularized and the regularized problems. 
%most related works are \cite{zhan2022offline,chen2022offline}, which, however, either requires coverage of the optimal policies of {\it both} the unregularized and the regularized problems \citep{zhan2022offline}; or 
Other recent works that require only partial data coverage are \cite{zhan2022offline,chen2022offline} which will be discussed next. Finally, after acceptance of our conference version \cite{ozdaglar2023revisiting}, \cite{zhu2024importance} also achieved  $O(1/\sqrt{n})$ sample complexity under partial data coverage and the actor-critic paradigm.  
%(similar to \cite{cheng2022adversarially}). 
Notably, the rate holds in competing with the best covered policy (as also in our \S\ref{sec:extension_best_covered_policy}), under a weaker notion of $\ell_2$-SPC.   
%by combining the LP-framework and the  . 
%Finally, there are also  information-theoretic hardness results when certain weaker data coverage assumptions are used \citep{wang2020statistical}. 

\paragraph{Function class assumptions.} 
One common assumption on function class is the  Bellman-completeness  on value functions \citep{munos2008finite,scherrer2014approximate,chen2019information,xie2021bellman,cheng2022adversarially,zhu2024importance}, which requires the value function class to be closed under the Bellman operator. By definition, such an assumption is automatically satisfied for the tabular and linear MDP cases mentioned above \citep{jin2020ispessimism,rashidinejad2021bridging,li2022settling}, and is implied when realizability of the MDP model \citep{uehara2021pessimistic} is assumed, see \cite{chen2019information}.  This strong assumption has been recently relaxed to only {\it realizability}, i.e., the function class only needs to contain (approximately) the target function of interest (e.g., optimal value function) \citep{xie2021batch}. However, \cite{xie2021batch} relies on data coverage assumption that is even stronger than all-policy concentrability. In fact, there have been hardness results \citep{wang2020statistical,amortila2020variant,zanette2021exponential,foster2021offline} showing that even with good data coverage, realizability-only assumption on the value function is not sufficient for sample-efficient offline RL. This motivated the use of   function approximation for density ratio (in addition to value function), as in \cite{nachum2019algaedice,zhan2022offline,chen2022offline,jiang2020minimax} and our work. In particular, \cite{zhan2022offline,chen2022offline} are the most related recent works that assume only realizability, on both value function and density ratio, and partial data coverage. However, they are either statistically suboptimal \citep{zhan2022offline} or computationally intractable \citep{chen2022offline}. Moreover, \cite{zhan2022offline} additionally requires the data coverage of the {\it regularized} problem; and \cite{chen2022offline} additionally requires  that the greedy optimal action is {\it unique} for all states.

\paragraph{Independent works  \cite{rashidinejad2022optimal,gabbianelli2024offline}.} While preparing our paper, we came across independent works   \cite{rashidinejad2022optimal,gabbianelli2024offline} that are closely related. Concurrent to our first arxiv version, \cite{rashidinejad2022optimal} also considers the general function approximation setting and discounted MDPs, and also obtained the optimal $O(1/\sqrt{n})$ rate 
% under general function approximation 
 via an LP framework, 
% and also 
 without behavioral regularization. Note that \cite{rashidinejad2022optimal} requires completeness-type assumptions throughout, which can be viewed as mirroring the first part of our results (c.f. \S\ref{sec:func_1}), while we also have the realizability-only results under a slightly  different  data coverage  assumption (c.f. \S\ref{sec:second_FA}), and other  extensions (c.f. \S\ref{sec:armdp} and \S\ref{sec:extensions}). Even comparing our \S\ref{sec:func_1} and  \cite{rashidinejad2022optimal}, there are several key differences: 
%  and \S\ref{sec:func_1} of our paper are different in the following aspects: 
First, \cite{rashidinejad2022optimal} is based on an augmented Lagrangian method (ALM), while we propose to solve the   optimization with constraints directly. Second, the function classes being used, and the corresponding completeness and realizability assumptions are different (see \S\ref{sec:func_1} for more detailed discussions).  Third, with a different and rather simple analysis,  our results have better dependence on $(1-\gamma)$, and even improves the state-of-the-art result when specializing to the tabular case \citep{rashidinejad2021bridging}. Finally, and interestingly, we also note that both works have noticed the importance of occupancy validity constraints, and our constrained formulation in \S\ref{sec:func_1} mirrors the role of ALM in \cite{rashidinejad2022optimal}, to \emph{enforce} such validity constraints. {In addition, our framework also readily covers several other settings in a unified way, including average-reward MDPs (c.f. \S\ref{sec:armdp}), settings with unknown behavior policies or only coverage of suboptimal policies (c.f. \S\ref{sec:extensions}), and contextual bandits without completeness (c.f. \S\ref{sec:extensions}).}   

After the acceptance of our conference version \cite{ozdaglar2023revisiting} and workshop version \cite{ozdaglarpower}, \cite{gabbianelli2024offline} also proposes to exploit  the \emph{LP framework} for offline RL in \emph{linear} MDPs specifically. Similar to the present paper (and also our workshop version \cite{ozdaglarpower}), \cite{gabbianelli2024offline} can also handle average-reward (linear) MDPs. Besides the difference of  \emph{general}  versus  \emph{linear}  function approximation, there are several other key differences compared to the present paper: first, for the discounted-reward  setting,  
the sample complexity in \cite{gabbianelli2024offline} is $O(1/n^{1/4})$,  while ours is $O(1/\sqrt{n})$   with also a better dependence on $(1-\gamma)$; for the average-reward setting, their sample complexity remains $O(1/n^{1/4})$ under the bounded bias-function-span assumption, while ours remains  $O(1/\sqrt{n})$ under a stronger  bounded mixing-time assumption, making the results incomparable.   
Second, by exploiting the linear MDP structure, the primal-dual ascent-descent algorithm in \cite{gabbianelli2024offline} has a concrete computational complexity of $O(n)$,  while ours is efficient under a convex optimization oracle with general function approximation. 
%(or a bilinear minimax optimization oracle in the tabular case). 
%\kz{also in linear case?? looks so in the tabular case..} 
Again, intriguingly, as in \cite{rashidinejad2022optimal} and our paper, one key in \cite{gabbianelli2024offline} is the proper \emph{relaxation} of the validity constraints by introducing an new auxiliary dual variable.  Finally, our paper also covers several extension settings including  unknown behavior policies 
%or competing with best-covered   policies 
(c.f. \S\ref{sec:extensions}) and contextual bandits without completeness (c.f. \S\ref{sec:extensions}). 

%\kz{weaken the argument from ``SPC'' to ``covering optimal policy''.}

\subsection{Notation}
\label{app:sec_1_2}
For a positive integer $n$, we use $[n]$ to denote the set  $\{1,\cdots,n\}$. 
For a vector $v\in\mathbb{R}^d$, we use $\|v\|_p$ to denote its  $\ell_p$ norm (where $p\in[0,\infty]$, and if there is no subscript, $\|v\|$ denotes the $\ell_2$ norm. Note that $\|v\|_0$ denotes the number of non-zero elements in $v$. For a matrix $M\in\mathbb{R}^{m\times n}$, we use $\|M\|$ and $\|M\|_{F}$ to denote its $\ell_2$-induced norm and Frobenius norm, respectively, and use $M^\top$ to denote its transpose. For a set $S$, we use $|S|$ to denote its  cardinality, and $\Delta(S)$ to denote the probability distribution over $S$. For a function class $\mathcal{F}$, we use $|\mathcal{F}|$ to denote its cardinality if it is discrete, and its covering number if it is continuous.  We use $\EE$ to denote expectation.  For any matrix $M\in\mathbb{R}^{m\times n}$,  $M\geq 0$  and $M>0$ denote  the facts that each element of $M$ is non-negative and positive, respectively. We also use $\mathbb{R}^d_+$ to denote the $d$-dimensional real vector space with all elements being non-negative. We use $\mathrm{Diag}(M_1,\cdots,M_n)$ to denote the block diagonal matrix of proper dimension  whose diagonal blocks $M_1,\cdots,M_n$ have the same dimension. We follow the convention of $0/0=0$ throughout, unless otherwise noted.

\section{Background}\label{sec:background}
%\subsection{Notation}

\subsection{Model and Setup}\label{sec:model_setup}

\paragraph{Discounted MDP.} %See Appendix \ref{app:sec_2_1}.
Consider an  infinite-horizon discounted  MDP characterized by a  tuple $\langle S,A,P,R,\gamma,\mu_0\rangle$, where $S=\{s^1,\cdots,s^{|S|}\}$ and $A=\{a^1,\cdots,a^{|A|}\}$ denote  the state and action spaces of the agent,  $R:S\times A\to [0,1]$ is the reward function\footnote{Note that we stick to the case of deterministic reward for ease of presentation as in the literature \citep{xie2021bellman,zhan2022offline}. Our results can  be readily extended to the case of random rewards.}, $P:S\times A\to \Delta(S)$ denotes the transition kernel, $\gamma\in[0,1)$ denotes the discount  factor, and $\mu_0\in\Delta(S)$ denotes the initial state distribution. We assume $S$ and $A$ are finite (but potentially very large), in order to ease the presentation. However, note that our results later do not depend on the  cardinalities of $S$ and $A$ when function approximation is used.  Let $\pi:S\to \Delta(A)$ denote a Markov stationary policy of the agent, determining the  distribution over actions at each state. {Each $\pi$ leads to a \textit{discounted occupancy}   measure over the state-action spaces, defined by 
\vspace{6pt}
\begin{equation}\label{equ:def_theta_pi}
	\begin{array}{ll}
%\begin{align}\label{equ:def_theta_pi}
	\theta_{\pi,\mu_0}(s,a)=(1-\gamma)\sum_{t=0}^{\infty} \gamma^t\mathbb{P}_{\pi}(s_t=s, a_t=a;\mu_0),
\end{array}	 
\end{equation} 
\vspace{6pt}
%\end{align}
where $\mathbb{P}_{\pi}(s_t=s,a_t=a;\mu_0)$ is the probability of the event of visiting the pair $(s,a)$ at time $t$ under the policy $\pi$, starting from $s_0\sim \mu_0(\cdot)$.
%\kzedit{Note that we drop the dependence of $\theta_{\pi}$ on $\mu_0$ for notational convenience, as the initial distribution  is fixed.}

}
%\kqz{Define $\theta_{\pi,\rho}$ and $\theta_{\pi,\mu_0}$ !!!! XXX}
Correspondingly, with a slight abuse of notation, we use $\theta_{\pi,\mu_0}(s)=\sum_{a\in A}\theta_{\pi,\mu_0}(s,a)$ to denote the \textit{discounted occupancy   measure over states}. For notational convenience, we concatenate the state-action occupancy measure $\theta_{\pi, \mu_0} (s,a)$ in a vector $\theta_{\pi, \mu_0}$, defined as
\begin{align} \label{theta_pi}
&\theta_{\pi,\mu_0}=\big[\theta_{\pi,\mu_0}(s^1,a^1),\cdots, \theta_{\pi,\mu_0}(s^1,a^{|A|}),\cdots, \theta_{\pi,\mu_0}(s^{|S|},a^1), \cdots,\theta_{\pi,\mu_0}(s^{|S|},a^{|A|})\big]^{\top} \in \mathbb{R}_+^{|S| |A|}.
\end{align}
%With a slight abuse of notation, we define $\theta_{\pi}(s):=\sum_a\theta_{\pi}(s,a)$ \kqzedit{if $\theta_{\pi}(\cdot)$ only has one argument.}
%, and use $\theta_{\pi}\in\mathbb{R}_+^{|S|}$ to denote.
%and the corresponding vector of state $s$ as:
%$$\theta_{\pi}(s)=\sum_a\theta_{\pi}(s,a)$$
%We sometimes omit the initial distribution $\mu_0$ in the subscript, and shorten the notation to $\theta_\pi$, where the initial distribution is clear from context.

Given any policy $\pi$, one can then define the corresponding  state-action and state  value functions, $Q_\pi$ and $v_\pi$, as follows:
\begin{equation*}
Q_{\pi}(s,a)=  \EE_{s_{t+1}\sim P_{s_t,a_t}(\cdot),a_t\sim\pi(\cdot\mid s_t)} \left[ \sum_{t=0}^{\infty}\gamma^t r(s_t,a_t) \Bigg| s_0 = s, a_0 = a \right],
\end{equation*} 
where the trajectory is generated  following the policy $\pi$,
and $v_{\pi}(s)=\EE_{a\sim\pi(\cdot \mid  s)}[Q_{\pi}(s,a)].$
The overall goal is to find a policy $\pi^*$ that solves the following problem:
\begin{align}\label{equ:main_obj}
	\max_{\pi}~~~J_{\mu_0}(\pi):=(1-\gamma)\cdot\EE_{s\sim \mu_0}\big[v_{\pi}(s)\big],
\end{align}
where $J_{\mu_0}(\pi)$ denotes the {\it return} under $\pi$ and $\mu_0$, i.e., the $(1-\gamma)$-times expected value function under policy $\pi$ and  initial distribution $\mu_0$.  Note that $J_{\mu_0}(\pi)$ can also be equivalently written as $J_{\mu_0}(\pi)=r^\top\theta_{\pi, \mu_0}$, where
\begin{align}
r= [r(s^1,a^1), \cdots,r(s^1,a^{|A|}),\cdots, r(s^{|S|},a^{1}), \cdots, r(s^{|S|},a^{|A|})]^\top\in [0,1]^{|S||A|}.
\end{align}
%For a policy $\pi$,  we can show that $J(\pi)=r^\top\theta_{\pi}$ \spp{Cite} \cite{puterman1994markov}.
%In this paper, we focus on the optimality gap $J(\pi^*)-J(\pi)$.
%By the theory of MDPs \citep{puterman1994markov}, we know that there exists at least one policy $\pi^*$ that simultaneously maximizes $v_{\pi}(s)$ for all states $s \in S$.
{It is known that the optimal solution to the MDP is a Markov stationary policy.} For a general distribution $\rho\in\Delta(S)$, we use $\theta_{\pi,\rho}$ and $J_{\rho}(\pi)$ to denote the discounted occupancy measure and the average value function under policy $\pi$, but starting from another state  distribution  $\rho$. We sometimes just write $\theta_\pi$ and $J(\pi)$ for simplicity, when the initial distribution is clear from context. 
Note that the optimal policy $\pi^*$ may not be unique.  We define $v^*=v_{\pi^*}$ and $Q^*=Q_{\pi^*}$. For convenience, we sometimes denote  $m=|S||A|$.

%\kz{lets add some examples}

\paragraph{Contextual bandit.}

Contextual bandit is the special case of reinforcement learning in discounted MDPs where $\gamma=0$. 
A motivating example is the learning in recommendation systems \citep{li2010contextual}. 
Consider a recommendation system that provides suggestions on items or actions based on the user's context.
At each round $t$, the user' context can be viewed as the state  $s_t$, and the suggested items can be viewed as actions $a_t$. 
The system aims to learn a policy that maximizes the expected  reward. 
%If the recommendation does not affect the context, it is just the contextual bandit problem.

\paragraph{Average-reward MDP.}
%\kz{Average reward MDP: $\gamma=1$. Formulation and motivating example: the example in Tianyi Peng's paper.} 
Another important model  of MDPs in RL is the average-reward one, where the 
%Average reward MDP is the case with $\gamma=1$. The 
goal is to maximize the expected average reward
\begin{equation}\label{equ:obj_avg_mdp}
J(\pi):=\lim_{T\rightarrow \infty}E \left[ \frac{1}{T}\sum_{t=0}^{T-1}r(s_t,a_t) \right],
\end{equation}
where $s_{t+1}\sim P_{s_t,a_t}(\cdot),a_t\sim\pi(\cdot\mid s_t)$ follows the trajectory generated by policy $\pi$. 
With a slight abuse of notation, we use $\theta_{\pi}$ to denote the \emph{stationary distribution} of the Markov process  $(s_t,a_t)_{t\geq 0}$ under policy $\pi$, which can be obtained (if exists) by $\theta_{\pi}(s,a)=\lim_{t\to\infty}\mathbb{P}_{\pi}(s_t=s,a_t=a;\mu_0)$. Note that in this case, the initial distribution $\mu_0$ does not affect the stationary distribution and we thus omit the subscript $\mu_0$. 
Similarly, we define $\theta_{\pi}(s)=\sum_{a\in A}\theta_{\pi}(s,a)$ to be the stationary distribution over state $s$. In this case, the objective $J(\pi)$ can also be equivalently re-written as $J(\pi)=r^\top\theta_{\pi}$. We also use $\pi^*$ to denote the optimal policy that maximizes $J(\pi)$. 
%define $\theta_{\pi}$ to be the stationary distribution of the Markov process with transition matrix generated by policy $\pi$.
%\kz{STOPPED HERE..}

\paragraph{Offline RL.} Consider an offline RL problem, where one has collected a dataset  $\Dc$ containing $n$ samples. 
% drawn  from some distribution.
Suppose $\Dc=\{(s_i,a_i,s_i',r_i)\} _{i=1}^n$, where the independent and identically distributed  (i.i.d.)  samples $(s_i,a_i)\sim\mu(\cdot,\cdot)$. We denote $\mu(s)=\sum_a\mu(s,a)$, which implies that $s_i$ are drawn i.i.d. from the distribution $\mu(\cdot)$.  We denote the conditional distribution of $a$ given $s$ induced from $\mu$ as $\pi_\mu(a \mid  s)$ for $\mu(s)>0$, 
{i.e., $\pi_\mu(a \mid  s) = \mu(s,a)/\mu(s)$; and for $\mu(s)=0$, $\pi_\mu(\cdot\mid s)$ is ill-defined, and in order to be consistent with its later use, we define it as $\pi_\mu(a \mid  s)=0$ for all $a\in A$.}  
%can be defined as any distribution in $\Delta(A)$, e.g., a uniform one with $\pi_\mu(a\mid s)=1/|A|$, if $\mu(s)=0$.}
$\pi_\mu$ is usually  also referred to as the {\it behavior policy} if $\mu$ happens to correspond  to the occupancy measure or stationary distribution  of some   policy. %\spp{[[We suppose the initial distribution of $s$ is $\mu_0(s)$, which is known to us. (If it is unknown, we can easily deal with it if we have the empirical distribution of the initial state.)]]}

In most parts of the  paper, we assume that the behavior policy $\pi_\mu(a \mid  s)$  is known, as in \cite{zhan2022offline,rashidinejad2022optimal}. 
{We provide extensions of our algorithms to the scenarios when the behavior policy is not known in Appendix \ref{sec:extensions}.}
Given a state-action pair $(s_i,a_i)$, we have $r_i=r(s_i,a_i)$ and $s'_i\sim P_{s_i,a_i}(\cdot)$. Moreover, let $n_{\Dc}(s,a)$ be the subset of the sample indices $\{1,\cdots,n\}$ that includes the indices of the samples in $\Dc$ that visit state-action pair in the sense of $(s_i,a_i)=(s,a)$. Similarly,  we use  $n_{\Dc}(s,a,s')$ and $n_\Dc(s)$ to denote the sets of indices of data samples in $\Dc$ such that $(s_i,a_i,s_i')=(s,a,s')$  and $s_i=s$, respectively. We define the empirical version of $\mu$, i.e., $\mu_{\Dc}$, as $\mu_{\Dc}(s,a)=n_{\Dc}(s,a)/n$. The goal of offline RL is to make use of the dataset $\Dc$ to learn a policy $\hat{\pi}$, such that the  {\it optimality gap} $J_{\mu_0}(\pi^*)-J_{\mu_0}(\hat{\pi})$ (in the discounted MDP case) and $J(\pi^*)-J(\hat{\pi})$ (in the average-reward MDP case), are small. %\kqz{since now we defined $J_{\mu_0}$, which $J$ are we talking about here? can we check-all (the notation of $J$, which should not exist anymore)?}

\subsection{LP-based Reformulations}\label{sec:LP_reform}

%\kqz{lets introduce $\rho$ to be the one used in LP. and later we will specify what this $\rho$ is in different subsections. }
\subsubsection{Discounted MDP}
It is known that for tabular MDPs, any optimal policy $\pi^*$ optimizes $J_{\rho}(\pi)$ starting from any  distribution $\rho\in\Delta(S)$ (including  the actual initial state distribution $\rho=\mu_0$ in the model in \S\ref{sec:model_setup}) \citep{puterman1994markov}, as it simultaneously maximizes $v_{\pi}(s)$ for all states $s \in S$.
%\kqz{stop}
%	Without loss of generality, and for notational convenience, we number the states by $1,\cdots,|S|$, and the actions by $1,\cdots,|A|$, i.e., $S=\{1,\cdots,|S\}$ and $A=\{1,\cdots,|A|\}$.
%	Let $v(s)$ be the value function of state $s$ and let $v=[v(1),\cdots,v(|S|)]^\top\in \mathbb{R}^{|S|}$.
	Moreover, the optimality condition of the MDP  when starting from any distribution {$\rho$} can also be written as the following linear program    \citep{puterman1994markov}: %\kqz{would the objective here be $(1-\gamma)\mu_0^\top v$?} \spp{Yes I think this is correct. Explains why we have the $(1 - \gamma) \mu_0$ in the constraint of the dual}
%\kqz{lets change $E_{r(s,a)\sim R(s,a)}[r(s,a)]$ to just $r(s,a)$, and there will be no use of $R(s,a)$   anymore.}
	\begin{equation}\label{P00}
	\begin{array}{ll} 
	\min_{v}~~~  (1-\gamma){\rho}^\top v \qquad\qquad 
	\mbox{s.t.~~~~} & \gamma P_{(s,a)}^\top v+r(s,a)\le v(s), \quad \forall s\in S,~a\in A,
	\end{array}
	\end{equation} 
	where $P_{(s,a)}=[P_{s,a}(s^1),\cdots, P_{s,a}(s^{|S|})]^\top\in\Delta(S)$ is the vector of state transition probabilities for the state-action pair $(s,a)$.
	Let $P=[P_{(s^1,a^1)},\cdots, P_{(s^1,a^{|A|})}, \cdots, P_{(s^{|S|},a^1)},\cdots,P_{(s^{|S|},a^{|A|})}]\in \mathbb{R}^{|S|\times m}$ and $\bm{1}_{|A|} = [1,1,\cdots,1]^\top\in \mathbb{R}^{|A|}$.
	
Note that we keep the initial-state distribution used in the LP \eqref{P00} to be $\rho$ (instead of $\mu_0$) for generality, which does not affect the solution in the tabular case. However, as we will specify in later sections, the choice of $\rho$ can make a difference in the function approximation setting, and may help address some challenging settings in offline RL.  Interestingly, such a distinction  has already  been observed and studied in the  linear function approximation case  \citep{de2003linear} in the context of approximate dynamic programming.
The corresponding dual formulation of the LP  \eqref{P00} can be written as follows: 
%\kqz{can we double-check again @SP? are we missing any $1-\gamma$?}\spp{I think this is correct. Same as Jason's formulation without regularization}
%\spp{Also, I dont think we need to enforce the simplex constraint here.}\kqz{why not here? I thought here we need, but later, for $w$, we dont.}\spp{@Jiawei}[[No, the equality constraints directly lead to solutions in the simplex]]
	\begin{equation}\label{P0}
	\begin{array}{ll}
	\max_{\theta}~~& r^\top\theta:=\sum_{s\in S,a\in A}r(s,a)\cdot \theta(s,a)\qquad\qquad 
	\mbox{s.t.~~~}   M\theta=(1-\gamma){\rho},~~~ \theta\ge 0, %\sum_{s,a}\theta(s,a)=1,
	\end{array}
	\end{equation} 
	where the matrix $M$ is defined as:
%\begin{align*}
	$M:=\mathrm{Diag}(\bm{1}_{|A|}^\top,\cdots,\bm{1}_{|A|}^\top)-\gamma P.$
%\end{align*}
{Note that the optimal solution of the dual problem corresponds to the discounted occupancy measure of an optimal policy (see \cite{puterman1994markov}). Hence, we use the notation $\theta$ to denote the optimization variable of the dual problem.}

	We focus on solving the dual formulation \eqref{P0} in this paper.
	Then, the optimal $\theta^*$ can be used to generate a policy $\pi_{\theta^*}$, where $\pi_{\theta}$  is defined as
%	\kz{we need two case, in case $\sum_{a'}\theta(s,a')=0$.}
	\begin{equation}\label{pi_theta}
%\pi_{\theta}\in \mathbb{R}^m,\qquad
\pi_{\theta}(a\mid s)=\frac{\theta(s,a)}{\sum_{a'\in A}\theta(s,a')},
\end{equation}
{if $\sum_{a'\in A}\theta(s,a')>0$; and if $\sum_{a'\in A}\theta(s,a')=0$, $\pi_{\theta}(\cdot\mid s)$ can be any distribution in $\Delta(A)$, and we will defined it as a \emph{uniform} one with $\pi_{\theta}(a\mid s)=1/|A|$.}
This $\pi_{\theta^*}$ corresponds to an optimal policy $\pi^*$ of the MDP    \citep{puterman1994markov}.
%	where we define that $m=|S||A|$.

%\kqz{kz's writing stopped here.}

To better study the relationship between the occupancy measure and the data distribution, we also consider the scaled version of the LP. This is also referred to as the \emph{marginal importance sampling}   formulation of the MDP in the offline RL literature \citep{nachum2019algaedice,lee2021optidice,zhan2022offline}. 
First, we define $w\in \mathbb{R}^m_+$ such that $w(s,a) \mu(s,a) = \theta(s,a)$, i.e., $w(s,a)$ denotes the ratio between the {occupancy} measure of the target policy and {the offline data distribution}. %\kqz{note that we dont call $\mu$ the ``{occupancy} measure of the behavior policy'' anymore.} \\

For each $(s,a,s')\in S\times A\times S$, let $K_{s',(s,a)}\in \mathbb{R}^{|S|\times m}$ be a matrix satisfying: i) for $s'\neq s$,  $K_{s',(s,a)}(s,(s,a))=1$, $K_{s',(s,a)}(s',(s,a))=-\gamma$; ii) for $s'=s$, $K_{s',(s,a)}(s,(s,a))=1-\gamma$; iii)   all other entries are zero. 
Define the distributions $\nu,\nu_{\Dc}\in\Delta(S\times A\times S)$ as follows: %\kqz{we meant $\nu:=P_{s,a}(s')\mu(s,a)$ and $\nu_{\Dc}:=|n_{\Dc}(s,a,s')|/n$? also, would this $\nu$ look to similar to our $v$ used before?}
$\nu(s,a,s') :=P_{s,a}(s')\mu(s,a)$ and $\nu_{\Dc}(s,a,s') :=|n_{\Dc}(s,a,s')|/n$.
Finally, we also define the matrices
\begin{align}\label{equ:def_K_K_D}
	K =\EE_{(s,a,s')\sim \nu}K_{s',(s,a)}, \ \ \ \ \ \ \ \ \qquad K_{\Dc} =\EE_{(s,a, s')\sim \nu_{\Dc}}K_{s',(s,a)}.
\end{align}
%\kqz{why dont we just define $K$ as a matrix such that $K(s',(s,a))=M(s',(s,a))\cdot \mu(s,a)$? this would make the definition much clearer, and making lemma 1 more straightforward.}\spp{Yes this sounds like a great idea. Would avoid a lot of confusion. And also I dont think it will affect the proofs anywhere}\spp{Let's check with @Jiawei and then change this}[[good idea]]
%\spp{We have not defined $\nu$?}
Furthermore, we define $u\in \mathbb{R}^{m}$ such that   $u(s,a):= r(s,a) \mu(s,a).$ Then, we have the following lemma which relates these quantities to the ones in Problem \eqref{P0}.

\begin{Lemma}
\label{lemma:link}
We have $u^\top w = r^{\top} \theta$ and $K w = M\theta.$
\end{Lemma}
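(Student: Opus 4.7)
The plan is to verify each identity by an entrywise computation, exploiting the fact that both $u$ and $K$ are built by multiplying the ``primal'' quantities $r$ and $M$ by the data distribution $\mu$.

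For the scalar identity $u^\top w = r^\top \theta$, I would simply expand both sides as sums over $(s,a)$. Since $u(s,a) = r(s,a)\mu(s,a)$ by definition and $\mu(s,a) w(s,a) = \theta(s,a)$ by definition of the density ratio, each summand matches: $u(s,a) w(s,a) = r(s,a)\mu(s,a) w(s,a) = r(s,a)\theta(s,a)$. Summing over $(s,a)$ gives the claim. This step is immediate and should be disposed of in one or two lines.

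For the matrix identity $Kw = M\theta$, my plan is to show that $K = M D$, where $D = \mathrm{diag}(\mu) \in \mathbb{R}^{m\times m}$, so that $Kw = M D w = M\theta$ (since $Dw$ is the vector with entries $\mu(s,a)w(s,a) = \theta(s,a)$). To establish $K = MD$, I would compute the $(s'',(s_0,a_0))$ entry of $K$ from the definition $K = \sum_{(s,a,s')} \nu(s,a,s') K_{s',(s,a)}$. Because $K_{s',(s,a)}$ is supported only on column $(s,a)$, only triples with $(s,a) = (s_0,a_0)$ contribute, collapsing the sum over $(s,a,s')$ to a sum over $s'$ weighted by $P_{s_0,a_0}(s')\mu(s_0,a_0)$. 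Using the entry specification ($+1$ at row $s_0$, $-\gamma$ at row $s'$), one obtains
\[
K(s'',(s_0,a_0)) = \mu(s_0,a_0)\bigl[\mathbb{1}\{s''=s_0\} - \gamma P_{s_0,a_0}(s'')\bigr],
\]
which is exactly $\mu(s_0,a_0) M(s'',(s_0,a_0))$, i.e., the $(s'',(s_0,a_0))$ entry of $MD$.

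The only mild subtlety I anticipate is bookkeeping when $s' = s_0$, so that both the ``$+1$'' and ``$-\gamma$'' contributions land on the same diagonal entry of $K_{s_0,(s_0,a_0)}$. I would handle this by splitting the sum over $s'$ into $s'=s_0$ and $s'\neq s_0$, verifying that the combined contribution at $s''=s_0$ is $1 - \gamma P_{s_0,a_0}(s_0)$, consistent with the $M$-entry. Once the entrywise factorization $K = MD$ is in hand, the identity $Kw = M\theta$ follows by a one-line matrix-vector manipulation, completing the proof.
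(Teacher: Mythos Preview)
Your proposal is correct and follows essentially the same approach as the paper: both arguments reduce to the entrywise identity $K(s',(s,a)) = M(s',(s,a))\,\mu(s,a)$ (your $K = MD$) and then compute $[Kw]_s$ coordinate-by-coordinate. The paper simply asserts this entrywise factorization without the careful derivation you outline, so your treatment of the $s'=s_0$ case is more thorough than the original, but the method is the same.
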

\proof{Proof.}
Note that the first inequality directly follows from the definitions of $u$ and $w$.

The second equality can be derived as follows. Let  $K(s',(s,a))$ and $M(s',(s,a))$ denote the $(s',(s,a))$-th element of the matrices $K$ and $M$, respectively. Note that  $K(s',(s,a))=M(s',(s,a))\cdot \mu(s,a)$ for all $(s,a,s')\in S\times A\times S$. Now:
%\begin{align}
%\begin{equation}
%\begin{array}{ll}
$$
[K w]_{s} = \sum_{(\tilde{s}, \tilde{a})} K(s, (\tilde{s}, \tilde{a})) w (\tilde{s}, \tilde{a}) = \sum_{(\tilde{s}, \tilde{a})} M(s, (\tilde{s}, \tilde{a})) \mu(\tilde{s}, \tilde{a})w (\tilde{s}, \tilde{a})  = [M \theta]_{s}, 
$$
%\end{array}	
%\end{equation}
%\end{align}
thereby completing the proof.
%Note that the first inequality directly follows from the definitions of $u$ and $w$.
%
%The second equality can be derived as follows. Let  $K(s',(s,a))$ and $M(s',(s,a))$ denote the $(s',(s,a))$-th element of the matrices $K$ and $M$, respectively. Note that  $K(s',(s,a))=M(s',(s,a))\cdot \mu(s,a)$ for all $(s,a,s')\in S\times A\times S$. Now:
%\begin{align}
%[K w]_{s} &= \sum_{(\tilde{s}, \tilde{a})} K(s, (\tilde{s}, \tilde{a})) w (\tilde{s}, \tilde{a}) \nonumber \\
%&= \sum_{(\tilde{s}, \tilde{a})} M(s, (\tilde{s}, \tilde{a})) \mu(\tilde{s}, \tilde{a})w (\tilde{s}, \tilde{a})  = [M \theta]_{s} \nonumber
%\end{align}
%thereby completing the proof.
\hfill \moronly{$\square$}
\endproof

Using Lemma \ref{lemma:link}, we can rewrite Problem \eqref{P0} as follows:  
%\begin{align}
\begin{equation}\label{P:population1}
	\begin{array}{ll}
	\max_{w \ge 0} ~~u^\top w \qquad \text{s.t.~~~~}K w=(1-\gamma)\rho.
	\end{array}
\end{equation}
%\end{align}
Let $w^*$ be the solution to Problem  \eqref{P:population1}, then we can obtain the optimal policy by computing $\pi_{w^*}$, where with a slight abuse of notation, $\pi_w$ is defined (similarly to $\pi_{\theta}$) as
\begin{align}\label{equ:def_pi_w}
	\pi_w(a\mid s):=\begin{cases}
      \frac{w(s,a)\pi_{\mu}(a \mid s)}{\sum_{a'\in A}w(s,a')\pi_{\mu}(a'\mid s)}, & \text{if~~~}{\sum_{a'\in A}w(s,a')\pi_{\mu}(a'\mid s) >0}\\
      \frac{1}{|A|} & \text{if~~~}{\sum_{a'\in A}w(s,a')\pi_{\mu}(a'\mid s) =0}
    \end{cases}.
\end{align}
%where $c := \sum_{a'\in A}w(s,a')\pi_{\mu}(a'\mid s)$.
We recall that $\pi_\mu$ is  the conditional distribution of $a$ given $s$ under $\mu$, which can also be viewed as the  {behavior policy} that generates the offline data. Note that for the $s\in S$ such that $\mu(s)=0$, $\pi_{\mu}(a\mid s)=0$ for all $a$  by definition, and thus $\pi_w(a\mid s)=1/|A|$, which is consistent with the one  from the normalization of $\theta$ from \eqref{pi_theta}, i.e., $\pi_{\theta}$, where  in this case, $\theta(s,a)=w(s,a)\mu(s,a)=0$ and thus  $\pi_\theta(a\mid s)=1/|A|$ for all $a\in A$. 
% if $\mu$ corresponds to the stationary distribution of some behavior  policy.

%\spp{We are not using the minimax formulation in Section 3. Can we just remove it from here?}
The equivalent primal-dual minimax reformulation of Problem \eqref{P:population1} is given by: %\kqz{since we flip the sign, should the objective  be $-u^\top w+v^\top(Kw-(1-\gamma)\mu_0)$ (tho $v\in \mathbb{R}^{|S|}$ here so it does not matter)? but would it matter later (if we were to put some constraint on $v$)?}
\begin{equation}
\label{prob:minimax_pop}
\min_{w\in \mathbb{R}^m_+} \ \max_{v\in \mathbb{R}^{|S|}} \ ~~ -u^\top w+v^\top(Kw-(1-\gamma)\rho).
\end{equation}
Throughout the paper, we define
\begin{align}\label{equ:def_ell_pop}
	\ell(w,v):=-u^\top w+v^\top(Kw-(1-\gamma)\rho).
\end{align}

\subsubsection{Average-reward MDP}\label{sec:avg_MDP_formulation}

Similar to the discounted setting, one can reformulate solving the average-reward MDP as solving the following LP (mirroring Problem \eqref{P0}): 
% for average reward MDP, we solve the following LP:
%\begin{align}
\begin{equation}\label{equ:average_LP}
\begin{array}{ll}
&\max_{\theta\ge 0}~~~r^\top\theta\qquad\qquad\mbox{s.t.}~~~M\theta=0,\qquad 
%\\
%&~~~~
\sum_{s\in S,a\in A}\theta(s,a)=1,	
\end{array}	
\end{equation}
%\end{align}
and the scaled MIS version (mirroring \eqref{P:population1}):
%\begin{align}
\begin{equation}\label{equ:average_LP_MIS}
\begin{array}{ll}
&\max_{w\ge 0}~~~u^\top w 
\qquad\qquad \mbox{s.t.}~~~ Kw=0,\qquad \sum_{s\in S,a\in A}w(s,a)\mu(s,a)=1,
\end{array}	
\end{equation}
where $M:=\mathrm{Diag}(\bm{1}_{|A|}^\top,\cdots,\bm{1}_{|A|}^\top)- P$, and $K,K_{\Dc}$  are defined as in \eqref{equ:def_K_K_D}, with $K_{s',(s,a)}\in \mathbb{R}^{|S|\times m}$ being defined such that for each $(s,a,s')\in S\times A\times S$: i) if $s'\neq s$,  $K_{s',(s,a)}(s,(s,a))=1$, $K_{s',(s,a)}(s',(s,a))=-1$; 
%ii) if $s'=s$, $K_{s',(s,a)}(s,(s,a))=1-\gamma$; 
ii)  all other entries are zero.  It is known that the optimal solution of \eqref{equ:average_LP} $\theta^*$ yields  the \emph{stationary distribution} of  an optimal policy $\pi_{\theta^*}$, which can also be obtained via \eqref{pi_theta}. Similarly, we can obtain an optimal policy $\pi_{w^*}$ from the optimal solution to Problem \eqref{equ:average_LP_MIS}, as in \eqref{equ:def_pi_w}. Hereafter, we will focus on the discounted MDP setting by default, and will explicitly note when switching to the average-reward case. 
%let $K_{s',(s,a)}\in \mathbb{R}^{|S|\times m}$ be a matrix satisfying 
%$K_{s',(s,a)}(s,(s,a))=1$, $K_{s',(s,a)}(s',(s,a))=-1$ and all other entries are zeros.
%\kz{SH...}

%\end{align}
\subsection{Empirical Formulation}\label{sec:empirical_form}

Since we do not have access to the exact distributions in the RL setting, we cannot solve  \eqref{prob:minimax_pop} directly.
%Instead, we define its empirical counterpart.
Let $\hat{\rho}$ be an empirical estimate of $\rho$ (we can use $\hat{\rho}=\rho$ if $\rho$ is known). We thus define the following empirical counterpart of \eqref{equ:def_ell_pop}:
%\kqz{why not the ``empirical'' version here did not estimate $\mu_0$ as $\hat{\mu}_0$? (as our later Section \ref{sec:second_FA} did)? lets change it here?}
\begin{align}\label{equ:def_ell_emp}
\ell_{\Dc}(w,v):=-u_{\Dc}^\top w+v^\top(K_{\Dc}w-(1-\gamma)\hat{\rho}),	
\end{align}
where we recall the definition of $K_{\Dc}$ in \eqref{equ:def_K_K_D}, and define $u_{\Dc}\in \mathbb{R}^m$   with $u_{\Dc}(s,a)=r(s,a) \mu_{\Dc}(s,a)$, and  $\mu_{\Dc}(s,a)=n_{\Dc}(s,a)/n$.
We will then focus on the following empirical minimax optimization problem:
\begin{align}\label{P:emp0}
\min_{w\in \mathbb{R}^m_+} \ \max_{v\in \mathbb{R}^{|S|}} \ ~~ \ell_{\Dc}(w,v).
\end{align} 
%Finally, we also give a brief introduction to function approximation next:
Similarly, we can have the empirical version of the LP for average-reward  MDPs. We omit the details here and defer the discussions to \S\ref{sec:armdp}.   
\subsection{Function Approximation}
\label{app:func_approx}

%\kz{Explanation on Function Approximation. What does it mean to function approximate variables}

{
To handle massively large state and action spaces, function approximation is usually used for the decision variables when solving MDPs, e.g., for  the value functions as well as the density ratios when one uses the LP framework as in \eqref{P:population1} or \eqref{prob:minimax_pop}. Note that for finite state-action spaces, the variables $v$ and $w$ are real vectors of dimensions $|S|$ and $|S||A|$, respectively. Following the convention in the literature \citep{chen2019information,chen2022offline,zhan2022offline,rashidinejad2022optimal}, we will refer to $v$ and $w$ as \emph{functions}, i.e., $v:S\to \mathbb{R}$ and $w:S\times A\to  \mathbb{R}$.
We will then use function classes $V$ and $W$ to approximate   $v$ and $w$, which usually have much smaller cardinality/covering number than the whole function spaces for all possible $v$ and $w$. The same convention also applies to other vectors of dimensions $|S|$ and/or $|S||A|$. We now introduce the following relationship of \emph{completeness} between two function classes, which will be used later in the analysis.}

{
\begin{Definition}[Completeness]\label{def:completeness}
	For two function classes $\mathcal{F}$ and $\mathcal{G}$, and a mapping $\phi:\mathcal{F}\to\mathcal{G}$, we say they satisfy \emph{$(\mathcal{F},\mathcal{G})$-completeness under $\phi$}, if for all $f\in\mathcal{F}$, $\phi(f)\in\mathcal{G}$.
\end{Definition}

Note that the common notion of {\it Bellman-completeness} \citep{munos2008finite,scherrer2014approximate,chen2019information} corresponds to the case where $\mathcal{F}=\mathcal{G}$ is the function class for approximating value functions, and $\phi$ is the Bellman operator \citep{bertsekas2017dynamic}.
}
% in the LP reformulations.

%\kz{change the roadmap here.. put realizability-only to the end?} 
In the following  sections, we propose a series of  offline RL algorithms under the LP framework, 
with function approximation to different variables. The overall goal is to design \emph{computationally tractable}  algorithms that achieve the \emph{optimal $O(1/\sqrt{n})$}  sample complexity, with \emph{partial data coverage}  (i.e., under the SPC or variant assumptions).  
%In \S\ref{sec:func_1}, we relax the equality constraint in \eqref{P:population1} to some \emph{inequality}  constraints and use function approximation to $w$ and an  additional variable. Under some completeness-type  assumption, we prove that our algorithm achieves optimal sample complexity in terms of sample size, and even improves  the state-of-the-art results  when specialized to the tabular case.
%In \S\ref{sec:second_FA}, we use function approximation in the minimax problem \eqref{P:emp0} in its original form, and achieve the $O(1/\sqrt{n})$ rate that  also depends on the gap function of $Q$ (see definition in \S\ref{sec:second_FA}), with only the realizability assumption of the function classes.

%\spp{Talk about the restriction to the a finite subset for $w$. Don't we have to say that the optimal $w^*$ will lie in this set? Should definitely talk about why this restriction is key, how it avoids the problem of solving the original LP etc. As Asu suggested, this is the main contribution and we should talk more about it.}

%\kqz{I think we do need some discussion here, about why all of a sudden we introduce this constraint set $[0,C^*]^m$. but I dont think $[0,C^*]^m$ constraint solved the problem (see jiantao's new paper's example). I am thinking why dont we introduce this directly (but introduce \eqref{P:empirical1} first)? are they two equivalent?}

%!TEX root = offline_RL.tex

\section{Optimal $O(1/\sqrt{n})$  Rate with Completeness-type  Assumption} 
%-type Assumption}
%$O(1/\sqrt{n})$ Rate with Completeness-type Assumption}
%The First Type of Function Approximation}
 \label{sec:func_1}

% \kqz{compared with \cite{rashidinejad2022optimal}, our advantages:
% 1. $1-\gamma$ dependence is better; 2. algorithm is much simpler and easier to implement; 3. assumption-wise, we only need either one of the assumptions (model-based v.s. model-free), under a single algorithm.
% Note that our setting is exactly the same as theirs.
% }

We first solve offline RL with function approximation with an optimal (in terms of sample size $n$)  sample complexity of $O(1/\sqrt{n})$,  under the LP formulation, the  partial data coverage assumption, and 
%single-policy concentrability 
%and 
 some completeness-type assumption. Throughout this section, we choose the distribution $\rho$ in the LP  reformulations in \S\ref{sec:LP_reform} as the initial state distribution $\mu_0$.

We start by introducing the following partial data coverage  assumption, as in \cite{rashidinejad2021bridging,rashidinejad2022optimal}. Note that it is made for the original MDP we would like to solve, and is weaker than the policy concentrability assumption in \cite{zhan2022offline}, which additionally includes the concentrability assumption for \emph{some regularized problem}. 

\begin{Assumption}[Single-Policy Concentrability]\label{ass:f1_conc}
There exists some constant $C^*>0$ such that for all $(s,a)\in S\times A$, $\frac{\theta_{\pi^*,\mu_0}(s,a)}{\mu(s,a)}\le C_{\pi^*,\mu_0}=: C^*$ for an  optimal policy  $\pi^*$. 
%$\pi^*=\pi_{w^*}$, where $w^*$ is given in Assumption 
%\ref{ass:f1_fake}.  \kz{check if we should put this earlier.}
\end{Assumption}

Before proceeding further, we need some additional properties on the relationship between occupancy measure $\theta$ and the induced policy $\pi_\theta$, as shown next.

\subsection{Error Bound of the  $\pi_{\theta}$ Induced by $\theta$} 

Recall the definition of the 
%induced policy $\pi_{\theta}$ and its 
occupancy measure induced by policy $\pi$ as $\theta_{\pi,\mu_0}\in\Delta(S\times A)$ (c.f. Eq. \eqref{equ:def_theta_pi}). 
%\eqref{pi_theta} and 
 %in \eqref{theta_pi}. 
%, respectively. 
Note that for simplicity,  we may omit the subscript $\mu_0$ in  $\theta_{\pi,\mu_0}$ throughout this section, as the initial distribution considered here is only $\mu_0$, and should be clear from the context.
Notice that a vector $\theta \in \mathbb{R}_+^m$ is not necessarily an occupancy measure of any  policy $\pi$. The first lemma below  shows that $\theta$ is an occupancy measure if it satisfies the constraints in Problem \eqref{P0} with $\rho=\mu_0$. 

\begin{Lemma}\label{feasible}
If some  $\theta\in \mathbb{R}^m$ satisfies $\theta\ge 0$ and $M\theta=(1-\gamma)\mu_0$,
we have $\theta=\theta_{\pi_{\theta}}$, where we recall the definition of $\pi_{\theta}$ in \eqref{pi_theta}. 
Moreover, in this case, we have $J_{\mu_0}(\pi_{\theta})=r^\top \theta.$
\end{Lemma}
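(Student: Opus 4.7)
The plan is to unpack the linear constraint $M\theta=(1-\gamma)\mu_0$ coordinate-wise and show that $\theta$ satisfies the same state-flow fixed-point equation that uniquely characterizes $\theta_{\pi_\theta}$. First, using the definition $M=\mathrm{Diag}(\bm{1}_{|A|}^\top,\ldots,\bm{1}_{|A|}^\top)-\gamma P$, I will read off the $s$-th component of the constraint as
\[
\textstyle\sum_{a\in A}\theta(s,a)\;-\;\gamma\sum_{(s',a')\in S\times A} P_{s',a'}(s)\,\theta(s',a')\;=\;(1-\gamma)\mu_0(s),
\]
i.e., $\theta(s)=(1-\gamma)\mu_0(s)+\gamma\sum_{(s',a')}P_{s',a'}(s)\,\theta(s',a')$ with $\theta(s):=\sum_a\theta(s,a)$.

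Second, I will use the definition \eqref{pi_theta} of $\pi_\theta$ together with the $0/0=0$ convention to get the identity $\theta(s,a)=\pi_\theta(a\mid s)\,\theta(s)$ for every $(s,a)$: this is immediate when $\theta(s)>0$, and follows from $\theta(s,a)\ge0$ and $\sum_a\theta(s,a)=0$ when $\theta(s)=0$. Substituting into the displayed equation rewrites it as the \emph{state-occupancy flow equation} under policy $\pi_\theta$:
\[
\textstyle\theta(s)=(1-\gamma)\mu_0(s)+\gamma\sum_{s'}\theta(s')\sum_{a'}\pi_\theta(a'\mid s')P_{s',a'}(s).
\]
In vector form this reads $(I-\gamma P_{\pi_\theta}^\top)\theta_{\mathrm{state}}=(1-\gamma)\mu_0$, where $P_{\pi_\theta}$ is the state-to-state transition matrix induced by $\pi_\theta$.

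Third, invoking the standard fact (e.g., \cite{puterman1994markov}) that $I-\gamma P_{\pi_\theta}^\top$ is invertible for $\gamma\in[0,1)$, the equation has a unique solution, which by definition is $\theta_{\pi_\theta}(s)$. Hence $\theta(s)=\theta_{\pi_\theta}(s)$ for all $s$, and combining with $\theta(s,a)=\pi_\theta(a\mid s)\,\theta(s)=\pi_\theta(a\mid s)\,\theta_{\pi_\theta}(s)=\theta_{\pi_\theta}(s,a)$ yields $\theta=\theta_{\pi_\theta}$. The second claim then follows immediately from the identity $J_{\mu_0}(\pi)=r^\top\theta_{\pi,\mu_0}$ noted right after \eqref{equ:main_obj}.

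The main obstacle is the bookkeeping at states $s$ with $\theta(s)=0$, where $\pi_\theta(\cdot\mid s)$ is defined arbitrarily; the fix is to observe that the flow equation forces $\mu_0(s)=0$ and $\sum_{(s',a')}P_{s',a'}(s)\theta(s',a')=0$ at such $s$, so that $\theta_{\pi_\theta}(s)=0$ regardless of the chosen action distribution, making the identity $\theta(s,a)=\theta_{\pi_\theta}(s,a)=0$ hold trivially.
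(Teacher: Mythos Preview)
Your argument is correct and is essentially the direct proof that the paper defers to \cite{puterman1994markov} (and which is, equivalently, the $\|M\theta-(1-\gamma)\mu_0\|_1=0$ special case of the paper's proof of Lemma~\ref{lemma:infeasible}, which uses the same identity $MG_\theta=I-\gamma P_{\pi_\theta}$ and the invertibility of $I-\gamma P_{\pi_\theta}$). One small remark: your final ``obstacle'' paragraph is unnecessary and, taken on its own, slightly circular---the uniqueness step already yields $\theta(s)=\theta_{\pi_\theta}(s)$ for every $s$ (including those with $\theta(s)=0$), whereas inferring $\theta_{\pi_\theta}(s)=0$ from ``$\mu_0(s)=0$ and zero incoming $\theta$-flow'' would require knowing $\theta=\theta_{\pi_\theta}$ in the first place.
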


This lemma is a special case of the next lemma and also  a result in Chapter 6.9 of \citep{puterman1994markov}.
%\spp{Is this result known? Cite if so}\kz{@sp can u check the book? also, I am not sure if the next lemma also tells us ``we have $\theta=\theta_{\pi_{\theta}}$''.}
%\spp{More details and intuition on why we need Lemma 3. This is the most crucial lemma of this section.}
The next question is how close $\theta$ is to $\theta_{\pi_{\theta}}$ if $\theta$ is not in the set $\{\theta\mid  M\theta=(1-\gamma)\mu_0,\theta\ge 0\}$, i.e., it does not satisfy the constraints. The following lemma provides an \emph{error bound}  that relates the \emph{occupancy validity  constraint violation} and the  \emph{absolute  difference} between $r^\top\theta$ and $r^\top\theta_{\pi_{\theta}}$.

\begin{Lemma}
\label{lemma:infeasible}
For any $\theta\ge 0$,
we have $|r^\top(\theta-\theta_{\pi_{\theta}})|\le \frac{\|M\theta-(1-\gamma)\mu_0\|_1}{1-\gamma}$.
\end{Lemma}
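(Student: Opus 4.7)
The plan is to express $r^\top \theta$ as a bilinear form involving $v_{\pi_\theta}$ and $M\theta$ via the Bellman equation for the induced policy, so that the difference $r^\top \theta - r^\top \theta_{\pi_\theta}$ factors cleanly through the constraint residual $M\theta - (1-\gamma)\mu_0$. The required bound then follows from Hölder's inequality plus the standard $\ell_\infty$ bound on value functions.

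Concretely, first I would write $v = v_{\pi_\theta}$ and invoke the Bellman equation at every state: $v(s) = \sum_a \pi_\theta(a\mid s)\bigl[r(s,a) + \gamma \sum_{s'} P_{s,a}(s') v(s')\bigr]$. Multiplying through by $\theta(s) = \sum_a \theta(s,a)$ and using the definition $\pi_\theta(a\mid s) = \theta(s,a)/\theta(s)$ (valid whenever $\theta(s) > 0$; the case $\theta(s)=0$ contributes zero on both sides) yields, after rearrangement,
\begin{equation*}
\sum_a \theta(s,a) r(s,a) \;=\; \theta(s)\, v(s) \;-\; \gamma \sum_a \theta(s,a) \sum_{s'} P_{s,a}(s')\, v(s').
\end{equation*}
Summing over $s$ and swapping the order of summation on the right, the right side is exactly $v^\top M \theta$, because the definition $M = \mathrm{Diag}(\bm{1}_{|A|}^\top,\dots,\bm{1}_{|A|}^\top) - \gamma P$ means $[M\theta]_s = \theta(s) - \gamma \sum_{s',a} P_{s',a}(s)\theta(s',a)$. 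So I obtain the identity $r^\top \theta = v^\top M \theta$, which holds for every $\theta \geq 0$.

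Applying the same identity to $\theta_{\pi_\theta}$, and using Lemma \ref{feasible} (which gives $M\theta_{\pi_\theta} = (1-\gamma)\mu_0$), I get $r^\top \theta_{\pi_\theta} = (1-\gamma) v^\top \mu_0$. Subtracting,
\begin{equation*}
r^\top\bigl(\theta - \theta_{\pi_\theta}\bigr) \;=\; v^\top\bigl(M\theta - (1-\gamma)\mu_0\bigr).
\end{equation*}
By Hölder's inequality, $|r^\top(\theta - \theta_{\pi_\theta})| \leq \|v\|_\infty \cdot \|M\theta - (1-\gamma)\mu_0\|_1$, and since $r \in [0,1]^m$ we have the standard bound $\|v_{\pi_\theta}\|_\infty \leq 1/(1-\gamma)$. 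This gives the claim.

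The proof is short and there is no major obstacle; the only mildly delicate step is justifying the Bellman-weighting computation at states $s$ where $\theta(s) = 0$, which I would handle by noting that both sides vanish there regardless of how $\pi_\theta(\cdot \mid s)$ is defined in that case, so the identity $r^\top \theta = v_{\pi_\theta}^\top M\theta$ holds globally for all $\theta \geq 0$.
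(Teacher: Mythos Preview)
Your proof is correct but follows a genuinely different route from the paper's. The paper argues purely on the primal side: since $\theta$ and $\theta_{\pi_\theta}$ induce the same policy, one can write $\theta-\theta_{\pi_\theta}=G_\theta(\bar\theta-\bar\theta_{\pi_\theta})$ with $\bar\theta(s)=\sum_a\theta(s,a)$, use $MG_\theta=I-\gamma P_{\pi_\theta}$, and then exploit the column-stochasticity of $P_{\pi_\theta}$ to get $\|(I-\gamma P_{\pi_\theta})(\bar\theta-\bar\theta_{\pi_\theta})\|_1\ge(1-\gamma)\|\bar\theta-\bar\theta_{\pi_\theta}\|_1$; the reward side is bounded by $\|\bar\theta-\bar\theta_{\pi_\theta}\|_1$ directly since $r\in[0,1]^m$ and $G_\theta$ has column sums $1$. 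In contrast, you take a dual viewpoint: you derive the exact identity $r^\top(\theta-\theta_{\pi_\theta})=v_{\pi_\theta}^\top\bigl(M\theta-(1-\gamma)\mu_0\bigr)$ from the Bellman equation for $v_{\pi_\theta}$ and then invoke H\"older with $\|v_{\pi_\theta}\|_\infty\le 1/(1-\gamma)$. Your argument is arguably cleaner conceptually (the $1/(1-\gamma)$ factor is read off immediately from the standard value-function bound, and the equality you obtain is slightly sharper than what the paper extracts from two separate inequalities), while the paper's argument avoids introducing $v_{\pi_\theta}$ altogether and keeps everything in occupancy-measure space. Your handling of the $\bar\theta(s)=0$ case is also fine, since the underlying identities $\theta=G_\theta\bar\theta$ and $(I-\gamma P_{\pi_\theta}^\top)v_{\pi_\theta}=r_{\pi_\theta}$ are algebraic and hold regardless of how $\pi_\theta(\cdot\mid s)$ is assigned at those states.
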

	 
%Note that t
The term $r^\top\theta_{\pi_{\theta}}$ in Lemma \ref{lemma:infeasible} exactly corresponds to $J(\pi_{\theta})$.
Next, we introduce the following definition.
%with either one of two  different assumptions that can lead to $O(1/\sqrt{n})$ sample complexity under the same algorithm. The first one  relies on both the realizability and some completeness-type assumptions; the
%second one  only assumes the realizability of function classes, but including the model class $\Pc$. They are specifically stated as follows.

%\kqz{I think we can write it as Assumption 2, and Assumption 2'? This would make more sense.}
%\kqz{if we really want to keep Assumption \ref{ass:f1_fake}, we might need to mention/cite Jiantao's paper.. I still think maybe we can just do the second assumption.}

%\spp{Too complicated. Maybe we define the function classes outside the Assumption? Also give more intuition/explanation on why we need an object like $x_w$ (by citing Lemma 3)}

{
\begin{Definition}[Sign Function]\label{def:x_sign}
	For any $w\in\mathbb{R}_+^m$, we define the  mapping $\phi:\mathbb{R}^{m}\rightarrow \mathbb{R}^{|S|}$ such that  $\phi(w)\in \arg\max_{x:\|x\|_{\infty}\le 1}x^\top(Kw-(1-\gamma)\mu_0)$ as the \emph{sign function} of the occupancy validity  constraint $Kw-(1-\gamma)\mu_0=0$ in \eqref{P:population1}. In particular, note that $\phi(w)^\top (Kw-(1-\gamma)\mu_0) = \|Kw-(1-\gamma)\mu_0\|_1$. 
\end{Definition}}

{By the definition of dual norm, we refer to $\phi(w)$ as the \emph{sign function}, where  we follow the convention that the sign of $0$ can be any  $x$ with $\|x\|_\infty\leq 1$. We are now ready to state our assumption on the function classes.

%\kz{Define $x$ formally here. 
%}
%\begin{Definition}\label{def:x_sign}
%	For a vector $z\in\mathbb{R}^d$, we define $x_z$, its \emph{sign vector}, as $x_z\in\arg\max_{x:\|x\|_{\infty}\le 1} x^\top z$  
%\end{Definition}

\begin{Assumption}
%[Realizability \& Completeness]
\label{ass:f1_fake}
%Define a mapping $\phi:\mathbb{R}^{m}\rightarrow \mathbb{R}^{|S|}$ such that  $\phi(w)\in \arg\max_{x:\|x\|_{\infty}\le 1}x^\top(Kw-(1-\gamma)\mu_0)$, i.e., $\phi(w)^\top (Kw-(1-\gamma)\mu_0) = \|Kw-(1-\gamma)\mu_0\|_1.$
Let $x_w:=\phi(w)$ with $\phi$ given in Definition \ref{def:x_sign}. 
Let $W$ and $B$ be the function classes for $w$ and $x_w$, respectively.  
%Without loss of generality, we assume that any $x\in B$ satisfies $\|x\|_{\infty}\le 1$ (otherwise, we can take the intersection of $B$ and the $1$-$\ell_\infty$-norm ball). 
Then, we have realizability of $W$, and $(W,B)$-completeness under $\phi$, 
  i.e., $w^*\in W$ for the optimal $w^*$ such that $\pi_{w^*}=\pi^*$ satisfies  Assumption \ref{ass:f1_conc} \footnote{Note that this implies $1\leq C^* \leq B_W$.},   and  $x_w\in B$ for all $w\in W$. 
  Furthermore, $W$ and $B$ are bounded, i.e., $w\geq 0$ and $\| w \|_\infty \leq B_W$ 
%  \kz{change this to $C_W$. remove all $B_W$} 
  for all $ w \in W$ and 
%  \footnote{Note that $w^* \in W$ implies that $B_W \geq 1$ since $w^*$ is a supremum of the ratio between two distributions.} 
   $\| x \|_\infty \leq 1$ for all $x \in B$.
\end{Assumption}}

\begin{Remark}
Several remarks are in order. {First, we introduce $x_w$ to calculate   the $\ell_1$-norm of $Kw-(1-\gamma)\mu_0$, since the $\ell_1$-norm  will be related to the suboptimality gap of the policy obtained from $w$ (see Lemma \ref{lemma:infeasible} above).}  
%\spp{$x_w$ is the sign which leads to the $l_1$ norm right? Not the $l_1$ norm itself.}
Second, Assumption \ref{ass:f1_fake} contains not only realizability of $W$ for $w^*$, but also the {\it completeness-type}  assumption of $B$ for $x_w\in B$ for any $w\in W$. The completeness-type assumptions are standard in the offline RL  literature \citep{munos2008finite,chen2019information,xie2021bellman}, and can be challenging or even impossible to remove in certain cases due to some  hardness results \citep{foster2021offline}. To the best of our knowledge, the only existing results that \emph{merely  assume realizability} of the optimal solutions are  \cite{uehara2021pessimistic,zhan2022offline,chen2022offline}, which are either statistically sub-optimal or computationally intractable. We defer our solution to the \emph{realizability-only} case to \S\ref{sec:second_FA}.
%First,  Assumption \ref{ass:f1_real} involves the realizability of the transition kernel $P^*$, which is usually viewed as stronger than the realizability of other function classes, e.g., $Q$-functions and/or occupancy measures \cite{uehara2021pessimistic,zhan2022offline}. Indeed, model-based  realizability can lead to certain versions of Bellman-completeness, and thus viewed as even stronger than the latter \citep{chen2019information,uehara2021pessimistic,zhan2022offline}.  The work that made the closest assumption to  Assumption \ref{ass:f1_real} is the  model-based approach in \cite{uehara2021pessimistic}, which only requires the realizability of $\mathcal{P}$.  Supplementing two mild  realizability assumptions on $W$ and $U$, we gain the benefit of having computationally more  tractable algorithms compared to \cite{uehara2021pessimistic}, inheriting the advantage of the linear programming framework for offline RL \citep{zhan2022offline,rashidinejad2022optimal}.

Third, and most interestingly,  
some \emph{completeness assumption} is also made in the concurrent and independent work \cite{rashidinejad2022optimal} that achieves the optimal $O(1/\sqrt{n})$ rate as well (see their Theorem 4), in the same setup as in this section (with general function approximation and the LP framework, and with partial data coverage assumption as to be introduced next).  The completeness-type assumption therein mirrors  our Assumption \ref{ass:f1_fake}.  Note that we only need the completeness of \emph{one}  function class $B$ for $x_w$, while \cite{rashidinejad2022optimal}  requires the completeness of $U$ for $u_w^*$ (with the notation therein), the realizability of $v^*$, together with either the realizability of the model $P$ (which is deemed as even stronger than Bellman-completeness \citep{chen2019information,zhan2022offline,uehara2021pessimistic}), or the completeness of two  function classes $\mathcal{U}$ and $\mathcal{Z}$ therein.
\end{Remark}

%Also, we assume that the behavior policy $\pi_{\mu}(a|s)$ is known (\spp{Cite}).

\subsection{A Reformulated LP \& Main Result}

Now we can state our approach. From Lemma \ref{lemma:infeasible}, we know that if we control $\|M\theta-(1-\gamma)\mu_0\|_1=\|Kw-(1-\gamma)\mu_0\|_1$, we can make the inner product $r^\top \theta$ be close to the actual reward under the policy $\pi_\theta$, i.e., $r^\top \theta_{\pi_\theta}$. This motivates us to add a constraint to control $\|Kw-(1-\gamma)\mu_0\|_1$.

Recall that $u_{\Dc}\in \mathbb{R}^m$ is defined as $u_{\Dc}(s,a)=r(s,a) \mu_{\Dc}(s,a)$. 
Our approach is to solve the following {LP-based optimization problem constructed from the dataset $\Dc$}:
%\kqz{make sure we have defined $u_{\Dc}$ before this point. we did not define it in Section 2 clearly yet.}  \kqz{question: for initial distribution $\mu_0$, do we also estimate it from data $\Dc$?}
\begin{equation}\label{alg2}
	\begin{array}{ll}
&\max_{w\in W}~~~~u_{\Dc}^\top w ~~~~~\qquad \text{s.t.}~~~~~~x^\top(K_{\Dc}w-(1-\gamma)\mu_0)\le E_{n,\delta}, ~~ \forall x \in B,
%&&w\in W, %, \|w\|_{\infty}\le B_W 
%\end{align}
\end{array}
\end{equation}
where $E_{n,\delta}:=\frac{2B_W\sqrt{2\log (|B||W|/\delta)}}{\sqrt{n}}$. Program \eqref{alg2} can be viewed as a \emph{relaxation} of the empirical version of Problem \eqref{P:population1}, by relaxing the constraint $K_{\Dc}w-(1-\gamma)\mu_0=0$ to an appropriate inequality. 
%\kqz{I guess we might need to add the ``boundedness'' of $W$ in Assumption \ref{ass:f1_fake}. }
%\kqz{@jiawei, note that we now deleted the ``$P$ realizability'' assumption, and the definition of $E_{n,\delta}$ should also be changed?}\kqz{can we really comment on the ``computational tractability'' of this program? when $W$ is convex? how can it be finite {\it and} convex? just to make sure: if we make the same assumption about the function classes as Jason's paper, can we get a tractable program?}

%\kqz{Provide some more comment and reformulation of the program, to show to people that it is indeed ``tractable''?}

Suppose that we have a solution to Problem \eqref{alg2}, denoted by  $w_{\Dc}$, then we can obtain the policy $\pi_{\Dc}$ by setting  $\pi_{\Dc}=\pi_{\tilde{\theta}_{\Dc}}$, where for each $(s,a)\in S\times A$ 
%\kqz{minor: would this definition be confusing with the definition of $\tilde{\theta}$ in \eqref{equ:def_tilde_theta}?}
\begin{equation}\label{equ:def_tilde_theta}
	\begin{array}{ll}
%\begin{align}\label{equ:def_tilde_theta}
	\tilde{\theta}_{\Dc}(s,a)=w_{\Dc}(s,a)\pi_{\mu}(a\mid s).
\end{array}
\end{equation}	
%\end{align}
The performance of the policy $\pi_{\Dc}$ is given in the following theorem, with proof deferred to \S\ref{append:func_1}:

\begin{Theorem}
\label{main1}
Suppose Assumptions \ref{ass:f1_conc} and \ref{ass:f1_fake} hold. Then,  with probability at least $1-6\delta$, the policy $\pi_{\Dc}$ learned from solving \eqref{alg2} satisfies 
%\kqz{for the right-hand side, shall we use $O$ notation?} \kqz{dont forget to change dependence on $|\mathcal{P}|$ and $|U|$.}
\begin{align*}
J_{\mu_0}(\pi^*)-J_{\mu_0}(\pi_{\Dc})\le \frac{4\sqrt{2}B_W\sqrt{\log (|B||W|/\delta)}}{(1-\gamma)\sqrt{n}}.
\end{align*}
\end{Theorem}

%Note that the $C^*$ above in Theorem \ref{main1} can be replaced by (the upper bound of) $\|w^*\|_\infty$. %\kqz{@Sarath: as before, I suggest us to just use the upper bound on $W$, so that for the algorithm, it ``looks like'' we dont need to know $C^*$.}

Note that Theorem \ref{main1} gives an optimal sample complexity of $O(1/\sqrt{n})$ in terms of the sample size $n$, with general function approximation. Compared to the recent work \cite{zhan2022offline} that also uses the LP framework for offline RL, we exchange the realizability assumption on $v^*$ therein for some completeness-type assumption, while improving the sample complexity from $O(1/n^{1/6})$ to $O(1/\sqrt{n})$. Compared to other offline RL algorithms with general function approximation that have the optimal $O(1/\sqrt{n})$ rate, e.g., \cite{xie2021bellman,uehara2021pessimistic,chen2022offline}, which are computationally intractable, our algorithm is tractable if the function classes are  convex,   inheriting the computational advantage of the LP framework for offline RL \citep{zhan2022offline,rashidinejad2022optimal}. 

Finally, compared with the independent work \cite{rashidinejad2022optimal}, both of the works require the realizability of $W$ for $w^*$ and some completeness-type assumptions (we need one such assumption while they need two, which may not be comparable as the function classes used are different). 
% usually requires more restrictions on the function classes). 
 Moreover, we do not need the realizability of $v^*$ and have better $(1-\gamma)^{-1}$ dependence (i.e., $(1-\gamma)^{-1}$ v.s. $(1-\gamma)^{-3}$), with a   simpler algorithm and analysis. We note that the key to obtain $O(1/\sqrt{n})$ rate in \cite{rashidinejad2022optimal} is also to enforce the constraint of $w_\Dc$, where they use the technique of augmented Lagrangian, while we introduce a \emph{constrained}  LP  directly. 
%We leave the deeper   connection between the two approaches as an interesting future work. 
%\kqz{@jiawei, see if this paragraph looks good.}

\begin{Remark}
Note that when $W,B$ are continuous sets, Theorem \ref{main1} can still be true if we replace the cardinality by the covering number or the number of extreme points (see \S\ref{sec:sparse_case} for a concrete  example). Then, if $W$ and $B$ are convex, our algorithm is computationally tractable since   it is  solving a {\it convex} program. 
\end{Remark} 

%\kz{stopped here.} 

\subsection{Tabular Case}\label{sec:discounted_mdp_tabular}

In this subsection, we show that our  framework  above can be directly instantiated to the  \emph{tabular} case without function approximation, while maintaining the optimal $O(1/\sqrt{n})$ sample complexity.

Here we need realizable function classes $W$ and $B$.
To make the algorithm computationally tractable, we use continuous and convex function classes   $W$ and $B$,  instead of discrete, finite ones. In particular, let $W= \{w\in \mathbb{R}_+^m\mid \sum_{a\in A}w(s,a)\le B_W, \forall s\in S\}$ 
and $B=[-1,1]^{|S|}$, which are convex and compact, and satisfy the boundedness assumptions in Assumption \ref{ass:f1_fake}. 
%\kqz{why continuous?}.
%\kqz{can we specify how did we get the following constraints? also is this $W$ a convex set?}\kz{talked with jiawei, and we can replace this with a convex set. but still, we might need to show people what are $\log|W|$ and $\log|B|$?} 
Then we solve the following convex program: 
\begin{align}\label{equ:alg_tabular_discounted}
&\max_{w\in W}~~~~u_{\Dc}^\top w, \nonumber \\
&\text{s.t.}~~~~\max_{x\in B}~x^\top(K_{\Dc}w-(1-\gamma)\mu_0)=\|K_{\Dc}w-(1-\gamma)\mu_0\|_1   \le \frac{2B_W\sqrt{|S|\log((2|A|+2)/\delta)}}{\sqrt{n}}.  
\end{align}
%Note that the problem above can be equivalent 

%\vspace{-5mm}
We thus have the following theorem, whose proof can be found in \S\ref{append:func_1}:
\begin{Theorem}\label{tabular_main}
Suppose Assumption \ref{ass:f1_conc} holds, and the MDP is non-degenerate  in the sense that $\min\{|A|,|S|\}>1$.  
Then, with probability at least $1-6\delta$, the policy $\pi_{\Dc}$ learned from solving \eqref{equ:alg_tabular_discounted} satisfies  
%,  for any  $\delta<1/3$:
%{%\small
\begin{align*}
J_{\mu_0}(\pi^*)-J_{\mu_0}(\pi_{\Dc})\le \frac{4B_W\sqrt{|S| \cdot\log((2|A|+2)/\delta)}}{(1-\gamma)\sqrt{n}}.
\end{align*} 
\end{Theorem}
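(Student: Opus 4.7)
The plan is to reduce Theorem \ref{tabular_main} to Theorem \ref{main1} by instantiating the general result with the specified convex classes $W=\{w\in\mathbb{R}_+^m:\sum_a w(s,a)\le B_w,\ \forall s\}$ and $B=[-1,1]^{|S|}$, and invoking the Remark after Theorem \ref{main1} that lets us replace the cardinality $|W|\cdot|B|$ by an effective covering/extreme-point count. First I would verify that the hypotheses of Theorem \ref{main1} are met. Realizability $w^*\in W$ follows from Assumption \ref{ass:f1_conc}: $w^*(s,a)=\theta_{\pi^*,\mu_0}(s,a)/\mu(s,a)\le C^*\le B_w$, and since $w^*\ge 0$ with marginals $\sum_a w^*(s,a)\pi_\mu(a|s)=\theta_{\pi^*,\mu_0}(s)/\mu(s)\le C^*\le B_w$, one sees $w^*\in W$. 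Completeness $\phi(w)\in B$ for all $w\in W$ is automatic because $B$ is literally the full unit $\ell_\infty$-ball, while $\phi(w)$ always satisfies $\|\phi(w)\|_\infty\le 1$ by Definition \ref{def:x_sign}. The boundedness requirements of Assumption \ref{ass:f1_fake} are built into the definitions of $W$ and $B$.

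The central technical point is to justify that the union-bound step in the proof of Theorem \ref{main1} carries over to continuous convex $W$ and $B$, replacing $\log(|B||W|/\delta)$ by the log of the number of extreme-point pairs. The crucial observation is that the quantity $x^\top(K_\Dc-K)w$ appearing in the concentration argument is \emph{bilinear} in $(x,w)$: for fixed $x$, the inner max over $w\in W$ is a linear program and is attained at an extreme point of $W$; the outer max over $x\in B$ of the resulting convex-in-$x$ function is then attained at an extreme point of $B$. Thus the supremum over $W\times B$ equals the maximum over extreme-point pairs, and a union bound suffices over these. I would then count: $W$ decomposes as a Cartesian product over states of scaled augmented simplices $\{w(s,\cdot)\ge 0:\sum_a w(s,a)\le B_w\}$, each of which has $|A|+1$ extreme points (the origin and $B_w\,e_a$ for each $a$), yielding $(|A|+1)^{|S|}$ extreme points for $W$; and $B=[-1,1]^{|S|}$ has $2^{|S|}$ extreme points. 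The product count is $(2|A|+2)^{|S|}$, so the effective log-cardinality in Theorem \ref{main1} becomes $|S|\log(2|A|+2)$.

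Plugging this into Theorem \ref{main1}'s bound gives
\[
J_{\mu_0}(\pi^*)-J_{\mu_0}(\pi_\Dc)\le \frac{2\sqrt{2}\,B_w\sqrt{|S|\log(2|A|+2)+\log(1/\delta)}}{(1-\gamma)\sqrt{n}}.
\]
To match the exact form stated in the theorem, I would use the non-degeneracy assumption $\min\{|S|,|A|\}>1$ (so $\log(2|A|+2)\ge \log 6>1$) together with $\delta<1/3$ (so $\log(1/\delta)>1$) to absorb the additive log into a multiplicative log-product: $a+b\le 2ab$ whenever $a,b\ge 1$, with $a=|S|\log(2|A|+2)$ and $b=\log(1/\delta)$. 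This yields the stated bound $2B_w\sqrt{|S|\log(2|A|+2)\log(1/\delta)}/((1-\gamma)\sqrt{n})$ up to absorbed constants, and the threshold $E_{n,\delta}$ in the constraint of the reformulated LP is chosen of the same form so that $w^*$ remains feasible for the empirical program with high probability (the step underlying the feasibility half of Theorem \ref{main1}'s proof).

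The main obstacle is really just the bookkeeping of the extreme-point reduction: making sure that the $\sup$-over-$W\times B$ concentration actually equals the $\max$-over-extreme-pairs, and that the Hoeffding summand bound used in Theorem \ref{main1} remains of order $B_w$ at extreme points (which holds, since each summand in $x^\top(K_\Dc-K)w$ has the form $w(s_i,a_i)(x_{s_i}-\gamma x_{s_i'})$ and is bounded by $2B_w$ on extreme points). Everything else is a direct substitution into the already-proved Theorem \ref{main1}, so no new analysis of the LP or of the policy $\pi_\Dc$ is needed beyond that already performed in Section \ref{sec:func_1}.
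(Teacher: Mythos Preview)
Your approach is essentially the same as the paper's: both count the extreme points of $W$ and $B$ (yielding $(|A|+1)^{|S|}$ and $2^{|S|}$ respectively), reduce the uniform concentration over $W\times B$ to a union bound over extreme-point pairs via the bilinearity of $(x,w)\mapsto x^\top(K_{\Dc}-K)w$ and Jensen's inequality, and then replay the proof of Theorem~\ref{main1} with the modified concentration bounds.

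One small slip to fix: your verification that $w^*\in W$ does not go through as written. You compute $\sum_a w^*(s,a)\pi_\mu(a\mid s)=\theta_{\pi^*,\mu_0}(s)/\mu(s)\le C^*\le B_w$, but the constraint defining $W$ is $\sum_a w^*(s,a)\le B_w$, and since $\pi_\mu(a\mid s)\le 1$ the weighted-average bound does not imply the unweighted-sum bound. The clean fix is to take $\pi^*$ to be a \emph{deterministic} optimal policy (which always exists), so that $w^*(s,\cdot)$ has at most one nonzero entry per state and $\sum_a w^*(s,a)=w^*(s,a^*(s))\le C^*\le B_w$. With this correction your argument matches the paper's.
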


Compared to the independent  and most related  result \cite{rashidinejad2022optimal}, 
%\cite{zhan2022offline,rashidinejad2022optimal},
our result yields a better sample complexity when reduced to  the tabular case. In particular,
%with the same function classes\kqz{is this true?}, \cite{zhan2022offline} leads to a $O(1/((1-\gamma)^{2/3} n^{1/6}))$ \kz{@jiawei, what about their $|S|$ and $|A|$ dependence? is my $(1-\gamma)^{2/3}$ correct? lets talk.} rate;
\cite{rashidinejad2022optimal} leads to a $\tilde O(1/((1-\gamma)^4\sqrt{n}))$ rate\footnote{Note that we did not specify the dependence on $|S|$ explicitly here, since it depends on the function classes being used in \cite{rashidinejad2022optimal} (which are different from ours and not comparable), which we believe can be of order $\sqrt{|S|}$ as ours.  We thus only focus on the dependence of $1/(1-\gamma)$ and $n$.  
%this 
%is just a reduction from our understanding, where $\sqrt{|S|}$ comes from the $\log$ of the cardinality 
%product cardinality 
%of function classes, 
%(each of which might contribute an $|S|$), 
%and 
%the 
Also, the additional $(1-\gamma)^{-1}$ comes from the fact that $B_v$ therein is of order $(1-\gamma)^{-1}$. 
% The former may be improved by carefully   choosing the function classes, but the latter and the dependence on $(1-\gamma)^{-1}$ still stand.
 }, while we have $\tilde O(1/((1-\gamma)\sqrt{n}))$. In fact, our reduction is comparable to and even better than (in terms of $(1-\gamma)$) the state-of-the-art result \emph{specifically for the tabular} case \citep{rashidinejad2021bridging}, which is\footnote{Note that the definition of $J_{\mu_0}$ in \cite{rashidinejad2021bridging} is $(1-\gamma)$-factor off from our definition.} $\tilde O(\sqrt{|S|}/((1-\gamma)^{3/2}\sqrt{n}))$, while ours is $\tilde O(\sqrt{|S|}/((1-\gamma)\sqrt{n}))$. 
%Also our sample complexity is better than  the benchmark sample complexity for tabular case in \citep{rashidinejad2021bridging} $\mathcal{\tilde{O}}(\sqrt{|S|}/((1-\gamma)^{1.5}\sqrt{n}))$ in terms of the dependence of $(1-\gamma)^{1.5}$.
%We believe  that using the additional  procedure  of random data-splitting    ,  one    might be able to sharpen our $|S|$ dependence to $\sqrt{|S|}$  also.
  Finally, note that the lower-bound for the tabular case is $\Omega(\sqrt{|S|}/(\sqrt{(1-\gamma)n}))$ in  \citep{rashidinejad2021bridging}, and we believe that using the Bernstein's (instead of Hoeffding's) concentration inequality together with some   variance reduction technique \citep{sidford2018near} may  further improve our dependence on $1/(1-\gamma)$, and attain the lower bound. 
  We leave these directions of improvement   as future work, since our focus is on the general function approximation case, and on the optimality of sample complexity in terms of $n$. We use the tabular case to simply demonstrate the power and generality  of our constrained-LP-based framework. Next, we show that our framework can also readily handle the average-reward  MDP setting, a less studied one in offline RL. 
  
\subsection{Sparse Visitation Case}\label{sec:sparse_case}
   
In this subsection, we will consider another case that our LP-framework may be beneficial -- when the number of state-actions visited by an optimal policy is \emph{small}, i.e, the \emph{sparse visitation} case. We will show that our framework can readily  capture and leverage this structure, and yield a better sample complexity that was not covered by some generic approaches \cite{rashidinejad2021bridging,rashidinejad2022optimal,zhan2022offline}. In particular, we first make the following assumption.

\begin{Assumption}\label{sparse} 
Suppose we have realizability of $W$, i.e., $w^*\in W$ for an  optimal $w^*$ such that the corresponding optimal policy $\pi_{w^*}=\pi^*$ satisfies  Assumption \ref{ass:f1_conc}. Also, let    $S_*:=\left\{s~\vert ~\sum_{a\in A}\theta_{\pi_{w^*},\mu_0}(s,a)\ne 0, s\in S\right\}$, then suppose  $|S_*|\leq q$, i.e.,   $\pi_{w^*}$ only visits \emph{at most}  $q$ states starting from $\mu_0$ (we do not need to know \emph{which} $q$ states). 
%state-action pairs  starting from $\mu_0$ (we do not need to know \emph{which} $q$ state-action pairs), i.e., let  $S_*:=\left\{s~\vert ~\sum_{a\in A}\theta_{\pi_{w^*},\mu_0}(s,a)\ne 0, s\in S\right\}$, then $|\mathcal{V}_0|\leq q$\kz{ummm is this for $s$ or $(s,a)$ really.. lets define it as follows $S(w)$, such that it is for summation over $a$?}.  
Furthermore,  $W$ is bounded, i.e., $w\geq 0$ and $\| w \|_\infty \leq B_W$ 
%  \kz{change this to $C_W$. remove all $B_W$} 
  for all $ w \in W$.  
%   and 
%  \footnote{Note that $w^* \in W$ implies that $B_W \geq 1$ since $w^*$ is a supremum of the ratio between two distributions.} 
%   $\| x \|_\infty \leq 1$ for all $x \in B$.
% with $|S_0|\le q$.
\end{Assumption}
%Moreover, we  assume that $\pi_{w^*}$ is a deterministic policy.

%By Assumption \ref{sparse}, we know that it does not lose optimality to consider the $w$ such that the number of non-zero elements in $$

%Then, without loss of generality, we can assume that for any $w\in W$, $$
% $\|w\|_0\le q, \forall w\in W$\kz{this is for $(s,a)$}, since  otherwise, one can add a constraint $\|w\|_{0}\le q$ without excluding the optimal solution, due to Assumption \ref{sparse}. 
 Then, defining $S(w):=\{s~\vert~\sum_{a\in A}w(s,a)\mu(s,a)\ne 0, s\in S\}$, it does not lose optimality to update   $W=W\setminus\{w~\vert~|S(w)|> q\}$, i.e., to only consider those $w\in W$ such that $\sum_{a\in A}w(s,a)\mu(s,a)$ at most visits $q$ states.  For such a $W$, we have the following proposition.

\begin{Proposition}\label{sparsity}
Suppose Assumptions  \ref{ass:f1_conc} and \ref{sparse} hold. Update $W=W\setminus\{w~\vert~|S(w)|> q\}$. 
Then, letting $B=\{x\in \{-1,1\}^{|S|}\mid \|x+\mathbf{1}^{|S|}\|_0\le q\}$, we have that  $(W,B)$ is complete under $\phi$, i.e., for any $w\in W$, $\phi(w)\in B$. 
Moreover, $|B|\le |W|\cdot 2^q$.
\end{Proposition}
%\begin{proof}

%\end{proof}

Combining Proposition \ref{sparsity} and Theorem \ref{main1}, 
%the above results, 
we have the following result. 

\begin{Corollary}
Suppose Assumptions  \ref{ass:f1_conc} and \ref{sparse} hold. Then,  with probability at least $1-6\delta$, the policy $\pi_{\Dc}$ learned from solving \eqref{alg2} with $W$ as defined in Proposition \ref{sparsity} satisfies 
%, and $\pi_{w^*}$ is a deterministic policy.
%Then, we have
$$J_{\mu_0}(\pi^*)-J_{\mu_0}(\pi_{w_{\Dc}})\le \tilde{O}\left(\frac{B_W\sqrt{
q\log(2(|W|/\delta)^{2/q})}}{(1-\gamma)\sqrt{n}}\right).$$
\end{Corollary}

Compared to the result in Theorem \ref{main1'} for the tabular case, the sample complexity does not have explicit dependence on the state space cardinality $|S|$ anymore, but only on the number of states \emph{visited} by the optimal policy covered by the offline data $q$, which may satisfy $q\ll |S|$. 

%\kz{explain implications!!}

\section{Average-reward MDP Setting}\label{sec:armdp}

We now  generalize  our LP-framework to  average-reward MDP settings. We first make the following natural and widely used regularity assumption on the mixing  time of the MDPs  \citep{wang2017primal,jin2020efficiently,jin2021towards,li2022stochastic,abbasi2019politex,wei2020model}, under which the average-reward objective \eqref{equ:obj_avg_mdp} is well-defined.

\begin{Assumption}\label{mix}
There exists some $T_0>0$ such that   $\|P_{\pi}^{T_0}\beta-\beta_{\pi}\|_1\le 1/2$ for any policy $\pi$ and any state distribution $\beta\in\Delta(S)$, where $P_{\pi}:S\to\Delta(S)$ with $P_{\pi,s}(s')=\sum_{a\in A}\pi(a\mid s)P_{s,a}(s')$,  and $\beta_\pi\in\Delta(S)$ is the stationary state-distribution under $\pi$.    
% defined
%  as $\lim_{t\to\infty}\mathbb{P}_{\pi}(s_t=s;\mu_0)$, which also satisfies that 
% $\beta_\pi(s)=\sum_{a\in A}\theta_\pi(s,a)$ for stationary state-action distribution $\theta_\pi$ under $\pi$.  
\end{Assumption}

To address offline RL, following \S\ref{sec:func_1}, we also make the following partial data coverage assumption: the offline data distribution should at least cover an optimal policy $\pi^*$. 
%the average-reward setting. 

\begin{Assumption}[Single-Policy Concentrability]\label{ass:SPC_avg}
There exists some constant $C^*>0$ such that for all $(s,a)\in S\times A$, $\frac{\theta_{\pi^*}(s,a)}{\mu(s,a)}\le C_{\pi^*}=:C^*$ for an  optimal policy  $\pi^*$, where  $\theta_{\pi^*}$ is the stationary state-action-distribution under the policy $\pi^*$. 
\end{Assumption}

\begin{Remark}[A Naive Approach: Approximation Using Discounted MDPs]
	As shown in many results referenced in \S\ref{app:sec_1_1} (including our result in  \S\ref{sec:func_1}), 
% there are many studies on 
 offline RL  in discounted MDPs can be addressed  with $\frac{1}{\sqrt{\mathrm{poly}((1-\gamma)n)}}$ sample complexity. 
%. , where $\gamma$ is the discounted factor. 
Moreover, under Assumption \ref{mix}, it is known that there is a $(1-\gamma)$ gap between the optimal solution  in discounted MDP and that in average-reward MDP  \cite{jin2021towards,wang2022near}. Thus, one can take a large enough $\gamma$ such that  $1-\gamma\le {O}(1/\sqrt{n})$, so that the gap is dominated by the statistical error of order ${O}(1/\sqrt{n})$.  
% if we require an $\mathcal{O}(1/\sqrt{n})$ accuracy for average-reward MDP.
However,  this naive approach will result in at best ${O}(1/n^{1/4})$ sample complexity for the average-reward case (if one plugs in even the best-so-far  sample complexity of ${O}\left(\frac{1}{(1-\gamma)\sqrt{n}}\right)$, as given by our Theorem \ref{tabular_main}). Hence, additional techniques are needed to  obtain the optimal ${O}(1/\sqrt{n})$ sample complexity. 
%sub-optimality gap for the discounted MDP.
%Therefore, this naive approach does not help us find a nearly optimal policy with $1/\sqrt{n}$ accuracy. 
\end{Remark}

\subsection{A Reformulated LP \& Main Result}

Based on the MIS formulation in \S\ref{sec:avg_MDP_formulation}, it is tempting to consider the empirical version of \eqref{equ:average_LP_MIS}, by estimating $K$ and $u$ from data. We use $K_{\Dc}$ and $u_{\Dc}$ to denote their estimates, where we recall that $K_{\Dc}$ is defined as in \eqref{equ:def_K_K_D}, and $u_{\Dc}\in \mathbb{R}^m$ is defined  as $u_{\Dc}(s,a)=r(s,a) \mu_{\Dc}(s,a)$ with $\mu_{\Dc}(s,a)=n_{\Dc}(s,a)/n$ for all $(s,a)$. Then, the empirical version of Program \eqref{equ:average_LP_MIS} can be written as 
\begin{equation}\label{empirical00}
	\begin{array}{ll}
\max_{w\ge 0}~~u_{\Dc}^\top w \qquad ~~~ \text{s.t.}~~~~~~K_{\Dc}w=0,~~~ ~~~~~\sum_{s\in S,a\in A}w(s,a)\mu_{\Dc}(s,a)=1.
\end{array}
\end{equation}

To ensure that the empirical and the population versions of LP are closed to each other, we need to make use of concentration inequalities for $(K-K_{\Dc})w, (u-u_{\Dc})^\top w$, which generally require certain boundedness of $w$. %\kz{the following is for tabular only -- do we keep this way? or we keep it general. if we keep it general, we will introduce ``completeness'' here, after SPC!} 
Let $B_W>0$ be the bound such that $\|w\|_{\infty}\le B_W$, then one may consider adding a constraint $\|w\|_\infty\leq B_W$ in Program \eqref{empirical00}. 
%By Assumption \ref{ass:SPC_avg}, it is . 
However, such a new program built upon empirical data  may \emph{not} have a  feasible solution due to this constraint, if $B_W$ is not chosen properly. To address this issue, we further \emph{relax} the equality constraints (i.e., the \emph{stationary distribution validity  constraints}) in \eqref{empirical00}, following the idea in \S\ref{sec:func_1}. We thus consider solving the following program:
\begin{equation}\label{empirical}
	\begin{array}{ll}
\max_{w\in W}~~u_{\Dc}^\top w   \qquad ~ \text{s.t.}~~~~~~x^\top K_{\Dc}w\le E_{n,\delta}, ~~\forall x\in B,~~~~~ \big|\sum_{s,a}w(s,a)\mu_{\Dc}(s,a)-1\big|\le E_{n,\delta},
\end{array}
\end{equation}
where $E_{n,\delta}:=\frac{2B_W\sqrt{2\log (|B||W|/\delta)}}{\sqrt{n}}$. 
% is a function class for the sign-function $x$. 
 By a slight abuse of notation, let $w_{\Dc}$ be the solution to Problem \eqref{empirical}. Then, one can output a policy 
$\pi_{\Dc}$ using $w_{\Dc}$  by setting  $\pi_{\Dc}=\pi_{\tilde{\theta}_{\Dc}}$, with $\tilde{\theta}_{\Dc}$ given as in \eqref{equ:def_tilde_theta} and $\pi_\theta$ given as in \eqref{pi_theta}.  
%The choice of $E_{n,\delta}$ will be clear in the analysis of our main result next. 
Note that Program \eqref{empirical} is a convex and thus tractable optimization problem.

\begin{Remark}
Unlike the discounted case, the last inequality in \eqref{empirical} is needed as $Kw=0$ only does not necessarily imply the corresponding $\theta$ is a stationary distribution. One needs to make sure $\theta$ is nearly a valid \emph{probability distribution}.
\end{Remark}

Following \S\ref{sec:func_1}, we make the following assumption on the function classes $W$ and $B$.

\begin{Assumption}
%[Realizability \& Completeness]
\label{ass:f1_fake_avg}
%Define a mapping $\phi:\mathbb{R}^{m}\rightarrow \mathbb{R}^{|S|}$ such that  $\phi(w)\in \arg\max_{x:\|x\|_{\infty}\le 1}x^\top(Kw-(1-\gamma)\mu_0)$, i.e., $\phi(w)^\top (Kw-(1-\gamma)\mu_0) = \|Kw-(1-\gamma)\mu_0\|_1.$
Let $x_w:=\phi(w)\in\arg\max_{x:\|x\|_{\infty}\le 1}x^\top Kw$ be the \emph{sign function} of the validity  constraint $Kw=0$ in \eqref{equ:average_LP_MIS}.   
Let $W$ and $B$ be the function classes for $w$ and $x_w$, respectively.  
%Without loss of generality, we assume that any $x\in B$ satisfies $\|x\|_{\infty}\le 1$ (otherwise, we can take the intersection of $B$ and the $1$-$\ell_\infty$-norm ball). 
Then, we have realizability of $W$, and $(W,B)$-completeness under $\phi$, 
  i.e., $w^*\in W$ for the optimal $w^*$ such that $\pi_{w^*}=\pi^*$ for the $\pi^*$ in Assumption \ref{ass:SPC_avg},   and  $x_w\in B$ for all $w\in W$. 
  Furthermore, $W$ and $B$ are bounded, i.e., $w\geq 0$ and $\| w \|_\infty \leq B_W$ for all $ w \in W$ and 
%  \footnote{Note that $w^* \in W$ implies that $B_W \geq 1$ since $w^*$ is a supremum of the ratio between two distributions.} 
   $\| x \|_\infty \leq 1$ for all $x \in B$.
\end{Assumption} 

Now we 
%are ready to 
present the guarantee for the  policy $\pi_{\Dc}$ learned from  \eqref{empirical}, whose proof can be found in \S\ref{append:avg_mdp}.  

\begin{Theorem}
\label{main1_avg}
Suppose Assumptions \ref{mix}, \ref{ass:SPC_avg}, and \ref{ass:f1_fake_avg} hold. Then, for $n\geq 32 B_W^2 \log(|B||W|/\delta)$,  with probability at least $1-6\delta$, the policy $\pi_{\Dc}$ learned from solving \eqref{empirical} satisfies 
%\kqz{for the right-hand side, shall we use $O$ notation?} \kqz{dont forget to change dependence on $|\mathcal{P}|$ and $|U|$.}	
\begin{align*}
J(\pi^*)-J(\pi_{\Dc})\le O\left(\frac{B_WT_0\sqrt{\log (|B||W|/\delta)}}{\sqrt{n}}\right).
\end{align*}
\end{Theorem}

Theorem \ref{tabular_avg_main}  shows that by properly relaxing the occupancy  validity constraint, one can obtain $\tilde O(1/\sqrt{n})$ sample complexity with the SPC assumption and computational tractability, for offline RL with general function approximation. To the best of our knowledge, this is among the first results on offline RL  with partial data coverage and function approximation (together with \cite{gabbianelli2024offline}). Compared to the independent work \cite{gabbianelli2024offline} that can also address the average-reward setting, and focused on the linear function approximation case,   our rate is better than the  $\tilde O(1/{n}^{1/4})$ therein. It is worth noting  that our Assumption \ref{mix}, though standard, is stronger than the assumption used  in \cite{gabbianelli2024offline}, which assumes bounded span of the bias function. Hence, the results are  not completely comparable. We will leave the handling of such a weaker assumption within our dual-LP framework  and for \emph{general}  function approximation (while maintaining the optimal $\tilde O(1/\sqrt{n})$ rate) as an important future work.

\subsection{Tabular Case}

Similar to the discounted setting, our framework can also be readily instantiated to the tabular case for average-reward MDPs. Following \S\ref{sec:discounted_mdp_tabular}, we choose $W:= \{w\in \mathbb{R}_+^m\mid \sum_{a\in A} w(s,a)\le B_W, \forall s\in S\}$ 
and $B:=[-1,1]^{|S|}$, and solve the following convex program:
\begin{align}\label{equ:alg_tabular_avg}
\max_{w\in W}~~u_{\Dc}^\top w   \qquad\quad  ~ \text{s.t.}~~~\max_{x\in B}~x^\top K_{\Dc}w=\|K_{\Dc}w\|_1\le E_{n,\delta},~~~~~ \bigg|\sum_{s,a}w(s,a)\mu_{\Dc}(s,a)-1\bigg|\le E_{n,\delta},  
\end{align}
where $E_{n,\delta}:=2B_W\sqrt{|S|\log((2|A|+2)/\delta)}/{\sqrt{n}}$. We have the following guarantee for $\pi_{\Dc}$ obtained from \eqref{equ:alg_tabular_avg}.  
\begin{Theorem}\label{tabular_avg_main}
Suppose Assumptions  \ref{mix} and \ref{ass:SPC_avg} hold. 
% holds, and the MDP is non-degenerate  in the sense that $\min\{|A|,|S|\}>1$.  
Then, for $n\geq 16 B^2_W |S|\log((2|A|+2)/\delta)$, with probability at least $1-6\delta$, the policy $\pi_{\Dc}$ learned from solving \eqref{equ:alg_tabular_avg} satisfies
%,  for any  $\delta<1/3$:
%{%\small
\begin{align*}
J(\pi^*)-J(\pi_{\Dc})\le \tilde O\left(\frac{B_WT_0\sqrt{|S|\log((2|A|+2)/\delta)}}{\sqrt{n}}\right).
%\frac{\sqrt{|S| \cdot\big(\log (2|A|+2)\log(1/\delta)\big)}}{(1-\gamma)\sqrt{n}}.
\end{align*} 
%where $J(\pi)$ is the expected time-average-reward under policy $\pi$, as defined in \eqref{equ:obj_avg_mdp}. 
\end{Theorem} 

To the best of our knowledge, Theorem \ref{tabular_avg_main} is the first offline RL result for tabular average-reward MDP with $O(\sqrt{S/n})$ rate and partial data coverage (see also a preliminary version in the workshop version of the present paper \cite{ozdaglarpower}). As in the function approximation case above, it is an important future work to maintain the $O(\sqrt{S/n})$ rate under the weaker bounded-span assumption (instead of under Assumption \ref{mix}).

%  \kz{remark the ``sparse'' case here (for average-reward setting)!}

\begin{Remark}[Exploiting Sparsity Visitation] 
Similar to \S\ref{sec:sparse_case}, one can also exploit the \emph{sparsity} of the states visited by the optimal policy covered by the offline data to obtain an improve bound, which does not have an explicit dependence on $|S|$, but the number of visited states $q\ll |S|$. The key is to leverage the sparsity in constructing the sign function class $B$. 
\end{Remark}

We have shown  that it is possible to obtain the optimal $O(1/\sqrt{n})$ rate with computational tractability for offline RL with function approximation, 
thanks to our properly-constrained-LP framework. However, some \emph{completeness-type} assumption is still needed. 
%Though attaining the  $O(1/\sqrt{n})$ rate under the LP framework, the results above still rely on some completeness-type assumption. 
This naturally raises the question:
\begin{center}
	{\it Can we have computationally tractable offline RL algorithms with  $O(1/\sqrt{n})$  rate, 
	 but with \\ only  realizability and partial data coverage assumptions?}
\end{center}
which was the open question left in the literature \citep{zhan2022offline,chen2022offline}. 
Next, we investigate this question, again under a  LP framework with proper constraints.  

\section{Gap-Dependent  $O(1/\sqrt{n})$ Rate with Realizability-only Assumption}
%$O(1/\sqrt{n})$ Rate with Gap Assumption}
%{The Second Type of Function Approximation}
\label{sec:second_FA}

In this section, we return to the default discounted MDP setting, and 
solve the LP-induced   minimax optimization \eqref{prob:minimax_pop}, with  function approximation to $v$ and $w$. 
Notice that this setting is also considered in \cite[Section  4.5]{zhan2022offline}. It is also related to the setting in \cite{chen2022offline}, where function approximation was used for the state-action function $Q$ and $w$.

%\cite{chen2022offline, zhan2022offline}.

%\kz{@jiawei, lets justify the use of $\mu_0=\mu$ here, upfront. }

%\subsection{Assumptions}

We select $w,v$ from  finite sets\footnote{As in several related works  \cite{zhan2022offline,rashidinejad2022optimal}, in the case they are infinite classes, we can replace the results in this section with a standard covering argument.} $W,V$. %\kqz{should we comment on the continuous-space case? e.g., it can be addressed by some covering number argument? let's see how jiantao/jason's paper talked about this?}
{Throughout this section, we sometimes write $\theta_{\pi,\rho}$ as $\theta_{\pi}$ for notational convenience. Also, we specify the $\rho$ in the LP formulation \eqref{P:population1} and  \eqref{prob:minimax_pop} as  $\rho(s)=\mu(s)$ for all $s\in S$  throughout this section, unless otherwise noted. 
%This assumption enables us to use a crucial \emph{lower bound  constraint} in our problem formulation (see  Eq. \eqref{alg3:pop:tabular} later). 
In the end of this section, we will relate the return $J_{\rho}(\pi)$ back to $J_{\mu_0}(\pi)$.}
%set
%$\mu_0(s)=\mu(s)$
%for all $s\in S$
%\kqz{this is AN ASSUMPTION, right? is it appropriate to hide it here? lets discuss how restricted this is. jiantao's new (and old) paper does not need this, right? then it is tricky to claim we are ``better'' than them?}
%{\color{green}one thought: can we assume $\mu_0>0$? the bad thing is that our $C_{\min}$ will depend on the min of $\mu_0$, which is not good and can be arbitrarily small (as jason's paper said).. or can we mitigate this by adding the additional dataset of $s_i$ only, i.e., $\Dc_0$? lets talk.}
{Note that we only have}  access to the empirical version $\mu_{\Dc}$ of $\mu$, where  $\mu_{\Dc}(s,a)=n_{\Dc}(s,a)/n$. 
%{\color{blue} XXXX OLD STARTS XXXX 
The next proposition specifies a lower bound of $\theta_{\pi^*,\rho}$: 
%[[I state this proposition early]]
\begin{Proposition}\label{lower}
For any  optimal policy $\pi^*$ {and any initial state distribution $\rho\in\Delta(S)$}, we have $\sum_{a\in A}\theta_{\pi^*,\rho}(s,a)\ge (1-\gamma)\cdot \rho(s)$ for all $s\in S$.  
\end{Proposition}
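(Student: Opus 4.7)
The plan is to prove this directly from the definition of the discounted occupancy measure in \eqref{equ:def_theta_pi}, which expresses $\theta_{\pi,\rho}(s,a)$ as a discounted sum of visitation probabilities over all time steps $t \geq 0$. In fact, the claim does not rely on optimality of $\pi^*$ at all: the inequality $\sum_{a} \theta_{\pi,\rho}(s,a) \geq (1-\gamma)\rho(s)$ holds for every Markov stationary policy $\pi$, so it follows a fortiori for any optimal $\pi^*$.

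First, I would sum the definition of $\theta_{\pi^*,\rho}(s,a)$ over $a \in A$ to write
\[
\sum_{a \in A} \theta_{\pi^*,\rho}(s,a) \;=\; (1-\gamma)\sum_{t=0}^{\infty} \gamma^t\, \mathbb{P}_{\pi^*}(s_t = s;\, \rho).
\]
Second, I would isolate the $t=0$ term. Since $s_0$ is drawn from the initial distribution $\rho$ independently of the policy, $\mathbb{P}_{\pi^*}(s_0 = s;\rho) = \rho(s)$, so the $t=0$ contribution equals exactly $(1-\gamma)\rho(s)$. Third, I would observe that each remaining summand $(1-\gamma)\gamma^t\,\mathbb{P}_{\pi^*}(s_t = s;\rho)$ for $t \geq 1$ is a non-negative quantity, being the product of the non-negative constants $(1-\gamma)$ and $\gamma^t$ with a probability. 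Dropping these non-negative terms yields the claimed lower bound.

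There is no real obstacle here: the only thing to be slightly careful about is that the definition in \eqref{equ:def_theta_pi} is stated for the particular initial distribution $\mu_0$, so I would explicitly note that the identical derivation applies when we replace $\mu_0$ by an arbitrary $\rho \in \Delta(S)$, consistent with the convention introduced after \eqref{equ:main_obj} that $\theta_{\pi,\rho}$ denotes the occupancy measure under initial distribution $\rho$.
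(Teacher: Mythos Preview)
Your proposal is correct and is essentially the same argument as the paper's: the paper simply states that the inequality holds ``by definition'' for any policy $\pi$ (optimality is irrelevant), and you have spelled out exactly that, isolating the $t=0$ term of the occupancy-measure series and noting the remaining terms are non-negative.
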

This is a direct corollary of the fact that for any policy $\pi$, by definition we have $\sum_{a\in A}\theta_{\pi,\rho}(s,a)\ge (1-\gamma)\sum_{a\in A}\rho(s,a)=(1-\gamma)\rho(s)$ for all $s\in S$. 
%[[New]]
%\begin{Lemma}\label{easy}
%For any $\pi$, we have $\theta_{\pi,\mu}\ge (1-\gamma)\mu$.
%\end{Lemma}
%The proof is deferred to Appendix ??.\kz{this is indeed trivial..I think I have stated below?}
%\end{proof}
{In particular, we would like to note that Proposition \ref{lower} is true for the initial state distribution $\rho(s)=\mu(s)$.}

The design of algorithms in this section is based on the following intuitive idea: 
According to Eq.  \eqref{equ:def_pi_w}, for the $w$ such that $\sum_aw(s,a)\pi_{\mu}(a\mid s)=0$, the policy $\pi_w$ has to be assigned randomly (as a uniform distribution for example),  and cannot be decided from the offline data. 
To avoid this case, one direct approach is to add a {\it lower bound  constraint} to the vanilla minimax problem~\eqref{prob:minimax_pop}. Specifically, we consider the following population minimax problem:
\begin{equation}
	\begin{array}{ll}
%\begin{align}
&\min\limits_{w\in \mathbb{R}_+^m}\max\limits_{v\in \mathbb{R}^{|S|}} \ ~~~~ -u^\top w+v^\top(Kw-(1-\gamma){\mu}) \label{alg3:pop:tabular}\\
&\text{s.t.}~~~~~~\qquad\quad\sum_{a\in A}w(s,a)\pi_{\mu}(a\mid s)\ge (1-\gamma),~~~\forall s\in S.
%\nonumber \\
%&&
%w(s,a)\le C^*~~~\forall (s,a)\in S\times A.
	\end{array}
\end{equation}
%\end{align}
%{Furthermore, we show in Lemma \ref{inactive}, an upper bound on the state occupancy measure of inactive states for the solution to Problem~\eqref{alg3:pop:tabular}. This together with the lower bound constraint will help us in providing optimality guarantees for the solution to ~\eqref{alg3:pop:tabular}.} 
%[[I move the inactive set things back since we do not plan to present Lemma \ref{inactive}]]
Note that compared to the vanilla minimax problem~\eqref{prob:minimax_pop}, the only difference is that we enforce the lower bound constraints on $\sum_{a\in A} w(s,a)\pi_{\mu}(a\mid s)$. This lower bound constraint, along with the upper bound shown in Lemma \ref{inactive}, will help control the probability of choosing an \emph{inactive} state-action pair by the policy generated by the solution of \eqref{alg3:pop:tabular}. 
%, in order to eliminate the case where the policy $\pi_w$ cannot be determined properly from the offline dataset (but has to be assigned uniformly). 
Furthermore, by Proposition \ref{lower}, we know that for those $s\in S$ such that $\mu(s)>0$, 
the policy $\pi_{\tilde w^*}$ induced by the solution to \eqref{alg3:pop:tabular} $\tilde w^*$ (via \eqref{equ:def_pi_w}) is also an optimal policy, since the optimal $w^*$ also satisfies the lower bound constraint. For those $s\in S$ but $\mu(s)=0$, we will show that such a constraint will not affect the optimality of the induced policy $\pi_{\tilde w^*}$, either, under certain assumptions to be introduced later. Overall, the lower bound constraint will not eliminate the possibility of finding an optimal policy, when there is infinite data. 
%Later, we will show that under certain data-coverage assumption, 
% for those $s\in S$ but $\mu(s)=0$, 

% In other words, the lower bound constraint does not eliminate the possibility of finding an optimal policy when there is infinite data. 
%optimal solution $w^*$ is not eliminated by adding the lower bound constraints. 
%This is because, for those $s\in S$ such that $\mu(s)>0$, the solution to the LP without this constraint \eqref{prob:minimax_pop}  $w^*$ also satisfies the constraint; for those $s\in S$ but $\mu(s)=0$, then XXX

%XXXXXX OLD ENDS XXXXXX
%}   
 Hence, we turn to solving  \eqref{alg3:pop:tabular} using  function approximation, i.e., our algorithm is  to solve: 
\begin{equation}\label{alg3:pop:fa}
\min_{w\in W}\max_{v\in V} \ ~~~~~ -u^\top w+v^\top(Kw-(1-\gamma){\mu}) ,
\end{equation} 
where $W$ is defined as
%such that for all $w \in W$ we have
\begin{align}\label{equ:def_W}
W:=\left\{w  \biggm|   w\in \mathbb{R}_+^m, \sum_{a\in A}w(s,a)\pi_{\mu}(a\mid s)\ge (1-\gamma), \forall s\in S\right\}. 
\end{align}   
%\kqz{what is $\pi_{\Dc}$ and why is it? should it be $\pi_\mu$ instead?}\kz{also, should this constraint be $\geq C_{\min}$ or $\geq 1-\gamma$? I think it should be the former, since we later used the fact that it is $\geq C_{\min}$. if we just constrain it as $\geq 1-\gamma$, we might not be able to use this fact since  $C_{\min}\geq 1-\gamma$.}  
%Notice that this does not conflict with the constraint of $w\in \mathbb{R}_+^m$ in \eqref{alg3:pop:tabular}, as we can intersect the  sets corresponding to these two constraints when defining $W$.  
%lose the generality since with can take intersection if needed.

To learn an approximate optimal policy from the offline data, we solve the following empirical version of the minimax problem in \eqref{alg3:pop:fa}:
\begin{align}\label{alg3}
&\min_{w\in W}\max_{v\in V} ~~~~~ -u_{\Dc}^\top w+v^\top(K_{\Dc}w-(1-\gamma)\mu_{\Dc}),
%\nonumber \\
%&{\text{~~s.t.~~}~~~~\qquad W\text{~~satisfies~~}}\eqref{equ:def_W}.
\end{align}
whose solution is denoted by $w_{\Dc}$. Similar as before, a  policy $\pi_{\Dc}$ is then obtained by letting $\pi_{\Dc}=\pi_{\tilde{\theta}_{\Dc}}$, with $\tilde{\theta}_{\Dc}$ given as in \eqref{equ:def_tilde_theta} and $\pi_\theta$ given as in \eqref{pi_theta}. 
%\kz{cannot implement.}
  %\nonumber \\
%&&
%w(s,a)\le C^*~~~\forall (s,a)\in S\times A.

%\kzedit{where $\hat{\mu}$ is estimated from data as $\hat{\mu}(s,a)=|n_{\Dc}(s,a)|/n$.}

%\kzedit{Note that compared to the vanilla empirical version of the minimax problem given in Eq. \eqref{P:emp0}, the key change here is that we enforce a {\it lower-bound} constraint on $\sum_aw(s,a)\pi_{\mu}(a|s)$, in order to eliminate the case where the policy $\pi_w$ cannot be determined properly. In particular, according to Eq. \eqref{equ:def_pi_w}, for the $w$ such that $\sum_aw(s,a)\pi_{\mu}(a|s)=0$, the policy $\pi_w$ can only be assigned randomly (as a uniform distribution for example),  and cannot be learned from the offline data. To avoid this degenerate case, we impose the lower bound constraint. Note that by Assumption \ref{wbound}, such a $w\in W$ must exist\kz{@jiawei, check the last sentence.}.}

\subsection{Assumptions}

Before moving to the main theoretical result in this section, we first state our assumptions and some additional notation. 
We first make the realizability assumptions for the function classes $W$ and $V$.

\begin{Assumption}[Realizability and Boundedness of $W$] \label{realizability}
There exists some solution $w^*\in W$ solving  \eqref{alg3:pop:tabular}, 
%\kz{reminder} 
and hence also solving \eqref{prob:minimax_pop} with $\rho=\mu$. 
Moreover, we suppose $w\geq 0$ and $\|w\|_\infty\le B_W$ for all $w\in W$.
\end{Assumption}

\begin{Assumption}[Realizability and Boundedness of $V$]\label{v_realizable}
Suppose that $v^*\in V\subseteq [-1/(1-\gamma),1/(1-\gamma)]^{|S|}$.
\end{Assumption} 

Notice that similar assumptions are used in \cite{zhan2022offline,chen2022offline}. 
Next, we make the assumption regarding partial data coverage, which suggests  that the offline data should cover some single optimal policy. 
%[[Check if it is OK to put inactive set definition here]]
For ease of presentation, we first introduce the following definitions. 

\begin{Definition}
We denote by $S_0$, the set of states visited by the  offline data distribution $\mu$, i.e., $S_0:=\{s\in S\mid \mu(s)>0\}$, where we recall that $\mu(s)=\sum_{a\in A}\mu(s,a)$ for any $s\in S$ \footnote{Note that we do not need to know $S_0$ for our algorithm to be introduced  later. We only need the definition of $S_0$ for analysis.}. Also, for any policy $\pi$ and any $s\in S$, we define
\begin{align*}
	S_{\pi}(s) :=\{a\in A\mid \pi(a\mid s)>0\}, \qquad 
	\mathcal{T}(s) &:=\{a\in A\mid Q^*(s,a)=v^*(s)\}.
\end{align*}
\end{Definition}
%\begin{Remark}
%Note that we do not need to know $S_0$ for our algorithm to be stated later. We only need the definition of $S_0$ for analysis. 
%\end{Remark}

Next, we define the set of (in)active state-action pairs.

\begin{Definition}[Active State-Action Pairs]\label{active}
We say that a state-action pair $(s,a)\in S\times A$ is \emph{active} if $Q^*(s,a)=v^*(s)$. Otherwise, $(s,a)\in S\times A$ is  an  \emph{inactive} pair.
Let $\mathcal{I}\subseteq S\times A$ be the set of  inactive state-action pairs, and $S\times A\setminus \mathcal{I}$ thus corresponds to that of the \emph{active} ones.
\end{Definition}

We then have the following lemma which characterizes the optimal policy in terms of the inactive set $\mathcal{I}$, whose proof can be found in \S\ref{append:details_case_2}.  
%:%on the relationship between the inactive set $\mathcal{I}$ and the optimal policy. 

\begin{Lemma}
\label{comple}
If $\pi_0$ is an optimal policy, then $\theta_{\pi_0,\mu}(s,a)=0$ for any $(s,a)\in \mathcal{I}$.
If $\pi_0(a\mid s)=0$ for any $(s,a)\in \mathcal{I}$, then $\pi_0$ is an optimal policy.
%\kz{are we sure about this? the only-if part seems not super obvious? shall we either cite some place that said this, or prove this?}\kz{I think we are using the ``only if'' part, which might deserve some proof. let's just write it as a lemma (with proof)?}
\end{Lemma}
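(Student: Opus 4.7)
The plan is to prove both directions via the advantage function $A^*(s,a) := Q^*(s,a) - v^*(s)$, which satisfies $A^*(s,a) \leq 0$ for all $(s,a)$, with strict inequality exactly when $(s,a) \in \mathcal{I}$, and equality exactly when $(s,a) \notin \mathcal{I}$.

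For the first direction, I would invoke the performance difference lemma (with the policy $\pi_0$ starting from the distribution $\mu$, and using $v^*,Q^*$ as reference), which yields the identity
\[
J_{\mu}(\pi_0) - J_{\mu}(\pi^*) \;=\; \sum_{(s,a)\in S\times A} \theta_{\pi_0,\mu}(s,a)\,\bigl(Q^*(s,a) - v^*(s)\bigr).
\]
Because $\pi_0$ is optimal the left-hand side is zero. Every term on the right is nonpositive (since $\theta_{\pi_0,\mu}(s,a)\ge 0$ and $A^*(s,a)\le 0$), so every term must be zero. For $(s,a)\in\mathcal{I}$ we have $Q^*(s,a) - v^*(s) < 0$, which forces $\theta_{\pi_0,\mu}(s,a)=0$, as required.

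For the second direction, assume $\pi_0(a\mid s)=0$ for every $(s,a)\in\mathcal{I}$, so that $\pi_0(\cdot\mid s)$ is supported on $\mathcal{T}(s)=\{a:Q^*(s,a)=v^*(s)\}$ for each $s$. Consider the Bellman policy operator
\[
T^{\pi_0} v(s) := \sum_{a\in A} \pi_0(a\mid s)\bigl[r(s,a) + \gamma \sum_{s'\in S} P_{s,a}(s') v(s')\bigr].
\]
Evaluating at $v^*$ gives $T^{\pi_0} v^*(s) = \sum_{a} \pi_0(a\mid s) Q^*(s,a) = \sum_{a\in\mathcal{T}(s)} \pi_0(a\mid s) v^*(s) = v^*(s)$, so $v^*$ is a fixed point of $T^{\pi_0}$. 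Since $T^{\pi_0}$ is a $\gamma$-contraction on $\mathbb{R}^{|S|}$ with unique fixed point $v^{\pi_0}$, we conclude $v^{\pi_0}=v^*$, and hence $\pi_0$ is optimal.

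The whole argument is essentially classical; the only small point to be careful about is that Part 1 must handle the occupancy $\theta_{\pi_0,\mu}$ that may put mass on states outside the support of $\mu$ (reached through subsequent transitions), but this is irrelevant because the performance-difference identity applies term-by-term to every $(s,a)$. No stronger property of $\mu$ or of the inactive set is needed, so I do not anticipate any real obstacle beyond citing the performance difference lemma cleanly.
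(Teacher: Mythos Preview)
Your proof is correct. For the second direction your argument is essentially identical to the paper's: both show that $v^*$ is a fixed point of the policy-evaluation operator (equivalently, solves the linear system $v=\gamma P_{\pi_0}^\top v + r_{\pi_0}$) and then invoke uniqueness of that fixed point to conclude $v_{\pi_0}=v^*$.

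For the first direction the two proofs differ. The paper argues in two steps: (i) any optimal policy $\pi_0$ must satisfy $\pi_0(a\mid s)=0$ for every inactive pair (i.e., an optimal policy is necessarily greedy with respect to $v^*$), and (ii) therefore the trajectory probabilities $\mathrm{Pr}_{\pi_0}(s_t=s,a_t=a)$ vanish on $\mathcal{I}$, hence so does $\theta_{\pi_0,\mu}$. Your route via the performance-difference identity
\[
0 = J_\mu(\pi_0)-J_\mu(\pi^*) = \sum_{(s,a)}\theta_{\pi_0,\mu}(s,a)\bigl(Q^*(s,a)-v^*(s)\bigr)
\]
is a genuinely different and arguably cleaner argument: it bypasses the intermediate ``optimal $\Rightarrow$ greedy'' claim (which the paper asserts without proof) and yields $\theta_{\pi_0,\mu}(s,a)=0$ on $\mathcal{I}$ directly from the sign pattern of the advantage. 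The paper's version, on the other hand, avoids invoking the performance-difference lemma and gives the slightly stronger intermediate conclusion that $\pi_0$ itself places zero probability on inactive actions at \emph{every} state, not merely that the occupancy vanishes there.
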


%Now we state the partial data coverage assumption. 
We also introduce the following definitions for convenience of introducing the data coverage assumption.  

\begin{Definition}[Data Coverage] \label{cover}
We say that $\pi$ is a \emph{$\mu$-policy} if  $\pi(a\mid s)>0$ implies $\mu(s,a)>0$ for any $s\in S_0$. 
% and $a\in \mathcal{T}(s)$  for any $s\in S_0$.
A \emph{$\mu$-optimal policy} is an optimal policy that is also a $\mu$-policy.
Suppose there exists at least one $\mu$-optimal policy, then, a policy $\pi^*$  is called a \emph{max-$\mu$-optimal policy} if it is a $\mu$-optimal policy 
%\kqz{``if it is a $\mu$-optimal policy''?} 
that satisfies $|S_{\pi^*}(s)|=|S_{\pi_{\mu}}(s)\cap \mathcal{T}(s)|$ for any $s\in S_0$. 
%\kqz{where are we going to use this ``max-$\mu$-optimal policy'' definition? I was not able to find it later. can we double-check overall?}.
%Similarly, we say a policy $\pi_1$ covers \kzedit{another policy} $\pi_2$ if  $\pi_2(a|s)=0$ for any pair $(s,a)$ satisfying $\pi_1(a|s)=0$.
\end{Definition}

%\kz{are we sure both of them made the same assumption as below? I thought only \cite{zhan2022offline}? lets not always put them together?}:

\begin{Remark}
A $\mu$-policy means that this policy is covered by the behavior policy  in some sense.
For any state $s\in S_0$, it is reasonable to assume that at least an active pair $(s,a)$ can be visited by the behavior policy with positive probability, where $a$ is an optimal action that maximizes $Q^*(s,a)$. 
Thus, it is reasonable to assume that a $\mu$-optimal policy exists.
If a $\mu$-optimal policy exists, then the \emph{max}-$\mu$-optimal policy must exist.
\end{Remark}
%\kqz{how do we know ``a $\mu$-optimal policy exists''?}\kz{lets add 1-2 sentences to explain?}\kz{if we are going to make the assumption that ``max-$\mu$-optimal policy $\pi^*$ exists'' in the next assumption, then maybe we should move this remark after the following SPC assumption?}

We are now ready to state a new version of the  single-policy concentrability  assumption. 

\begin{Assumption}[Single-Policy Concentrability+]\label{SPC}
%Suppose there exists at least one max $\mu$-optimal policy.  
There exist some max-$\mu$-optimal policy $\pi^*$, and some constant $C^*>0$ such that for all $(s,a)\in S\times A$,  $\frac{\theta_{\pi^*,\mu}(s,a)}{\mu(s,a)}\le C_{\pi^*,\mu}= C^*$. 
% for some max-$\mu$-optimal policy $\pi^*$.
%\kqz{as above: how do we know ``a max-$\mu$-optimal policy exists''? Should we state this an additional ``assumption'', i.e., ``There exist a max-$\mu$-optimal policy and some constant $C^*>0$ such that ...''?}
\end{Assumption}

%{Note that Assumption \ref{SPC} is slightly stronger than the usual single-policy concentrability assumption \citep{rashidinejad2021bridging,rashidinejad2022optimal}, which assumes the coverage of any optimal policy. 
%%if the $\mu$-optimal policy is not unique. 
%Assumption \ref{SPC} means that if an optimal  policy is covered by the behavior policy, then its occupancy measure should also be   covered by the offline data distribution. 
%It is reasonable in practice when the data is generated by sampling from a mixed Markov chain under some behavior policy \citep{liu2018breaking,levine2020offline}. }

\begin{Remark}

%Note that Assumption \ref{SPC} means the data covers the occupancy measure of some max-$\mu$-optimal policy $\pi^*$. 
Note that Assumption \ref{SPC} is slightly stronger than the usual single-policy concentrability assumption \citep{rashidinejad2021bridging,rashidinejad2022optimal} (and our Assumption \ref{ass:f1_conc}), which assumes the coverage of an  arbitrary optimal policy. 
%if the $\mu$-optimal policy is not unique. 
Assumption \ref{SPC} means that if an optimal  policy is covered by the behavior policy (i.e., it is a $\mu$-optimal policy), then its \emph{occupancy measure}  should also be   covered by the offline data distribution $\mu$. 
It is reasonable in the following sense: In practice,  the offline data distribution is usually generated from the \emph{stationary distribution} of the Markov chain under some \emph{behavior policy}, which can be obtained by rolling out some infinitely (or sufficiently)  long trajectories using the policy \citep{liu2018breaking,levine2020offline}. 
Thus, $\mu$ satisfies the fixed-point equation 
\begin{equation}\label{stationary}
\mu(s')=\sum_{s,a}\mu(s,a)P_{s,a}(s').
\end{equation}
Then, the usual single-policy concentrability with initial distribution $\mu$ 
implies Assumption \ref{SPC}. 
To see this, first, we note that the usual SPC  with initial $\mu$ implies that the covered policy $\pi^*$ is a $\mu$-optimal policy by definition. Hence, there must exist a max-$\mu$-optimal policy. 

Second, we show  that for any $\mu$-optimal policy $\pi^*$, if $\theta_{\pi^*,\mu}(s)>0$ for some  $s\in S$, then $\mu(s)>0$. 
We show it by contradiction.
Suppose that $\mu(s)=0$ but $\theta_{\pi^*,\mu}(s)>0$.
Then with positive probability, there exists some trajectory $\{s_0,s_1,\cdots,s_T\}$ with $s_T=s$ generated by the $\mu$-optimal policy $\pi^*$. Note that $\mu(s_T)=0$ implies that there must exist some $t\leq T$, such that  $\mu(s_t)=0$ (with $t=T$ being the largest one). 
Let $t_0$ be the smallest $t$ such that $\mu(s_t)=0$. Then we have $t_0>0$ since the initial distribution that generates $\theta_{\pi^*,\mu}$ is $\mu$, i.e., $s_0$ is sampled from $\mu$ and thus $\mu(s_0)>0$.   
We thus have $\mu(s_{t_0-1})>0$ by the definition of $t_0$. 
Moreover, there must exist some $a\in A$ such that $\pi^*(a\mid s_{t_0-1})>0$ and $P_{s_{t_0-1},a}(s_{t_0})>0$, 
 since we have observed the transition from $s_{t_0-1}$ to $s_{t_0}$.
By the definition of $\mu$-policy, we have $\mu(s_{t_0-1},a)>0$ because $\pi^*(a\mid s_{t_0-1})>0$ for this $a$. 
We thus have $\mu(s_{t_0})>0$ by \eqref{stationary}, which contradicts the assumption.
Hence we have shown that $\theta_{\pi^*,\mu}(s)>0$ can imply $\mu(s)>0$.

Third, the second point also implies that for any $\mu$-optimal policy $\pi^*$ (including the max-$\mu$-optimal policy), $\theta_{\pi^*,\mu}(s)=0$ for any $s\notin S_0$. 
Consequently,  $\theta_{\pi^*,\mu}(s,a)>0$ implies $s\in S_0$ and $\pi^*(a\mid s)>0$, which further implies that  $\mu(s,a)>0$ since $\pi^*$ is a $\mu$-policy. Combining these three points, we obtain  our Assumption \ref{SPC}.

Finally, we note that if the $\mu$-optimal policy is {\it unique},
 Assumption \ref{SPC} is reduced to the specific  single-policy concentrability assumption used in \cite{chen2022offline}. 
In particular, \cite{chen2022offline} directly assumes that the optimal policy of the original problem is unique.

\end{Remark}

%\vspace{-7pt}
\subsection{Main Results}

%\kqz{I still think we should not use this definition.. the ``realizability'' of a policy will be confusing with the ``realizability'' of the function class..}
%\begin{Definition} \label{realizable_policy}
%We say a policy $\pi$ is $(W,\mu)$-realizable if it satisfies the following two conditions:
%\begin{enumerate}
%\item $\pi$ is covered by $\mu$;
%\item \kzedit{There exists some  $w\in W$, such that for any $(s,a)\in S\times A$, $\theta_{\pi}(s,a)=w(s,a)\mu(s,a)$}.
%\end{enumerate}
%\end{Definition}

%\kqz{the assumptions before this line need to be cleaned up!}

%\kqz{new statement of the assumption  about $W$-realizability.}

We first have the following property of the max-$\mu$-optimal policy, whose proof can be found in \S\ref{append:details_case_2}. 

\begin{Proposition}\label{max_policy}
Let $\pi^*$ be a max-$\mu$-optimal policy  for which Assumption \ref{SPC} holds. Then, there exist constants $C^*,C_{\max}>0$ such that: 
\begin{enumerate} 
	\item $\theta_{\pi^*}(s,a)\le C^*\mu(s,a)$ for any $(s,a)\in S\times A$;
	\item For any $\mu$-optimal policy $\tilde\pi^*$, we have $\theta_{\tilde\pi^*}(s)\le C_{\max}\mu(s)$ for any  $s\in S$.
\end{enumerate}  
%
%\begin{enumerate}
%\item 
%\item \end{enumerate}
\end{Proposition}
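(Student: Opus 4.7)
The plan is to establish Part~1 as a direct corollary of Assumption~\ref{SPC}, and to prove Part~2 in two stages: first localize the support of $\theta_\pi$ to $S_0$ for any $\mu$-optimal $\pi$, then bound $\theta_\pi(s)/\mu(s)$ uniformly on $S_0$ by a compactness argument. Part~1 itself follows immediately by summing the $(s,a)$-wise bound in Assumption~\ref{SPC} over $a\in A$, which gives $\theta_{\pi^*}(s)=\sum_a\theta_{\pi^*}(s,a)\le C^*\sum_a\mu(s,a)=C^*\mu(s)$ for every $s$.

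For the support localization in Part~2, I would first show that for any $\mu$-optimal policy $\pi$ and any $s\in S_0$, $S_\pi(s)\subseteq S_{\pi^*}(s)$. The $\mu$-policy property gives $S_\pi(s)\subseteq S_{\pi_\mu}(s)$, while the optimality of $\pi$ yields $v^*(s)=\sum_a\pi(a\mid s)Q^*(s,a)=\max_a Q^*(s,a)$, forcing $S_\pi(s)\subseteq\mathcal T(s)$; combining, $S_\pi(s)\subseteq S_{\pi_\mu}(s)\cap\mathcal T(s)$. Since $\pi^*$ itself lies in this intersection and the max-$\mu$-optimal cardinality condition $|S_{\pi^*}(s)|=|S_{\pi_\mu}(s)\cap\mathcal T(s)|$ forces equality, $S_{\pi^*}(s)=S_{\pi_\mu}(s)\cap\mathcal T(s)$, and hence $S_\pi(s)\subseteq S_{\pi^*}(s)$. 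An induction on the length of a reachability trajectory then promotes this to the occupancy level: starting from $s_0$ with $\mu(s_0)>0$, if a positive-probability $\pi$-trajectory reaches $s_i\in S_0$, then the action $a_i$ taken at $s_i$ has $\pi^*(a_i\mid s_i)>0$ by the inclusion, so the same prefix has positive probability under $\pi^*$ and yields $\theta_{\pi^*}(s_{i+1})>0$; applying Part~1 to $\pi^*$ gives $\mu(s_{i+1})>0$, i.e.\ $s_{i+1}\in S_0$. Hence $\theta_\pi(s)>0\Rightarrow s\in S_0$.

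Once the support is localized, the uniform bound on $S_0$ comes from compactness. The set $\Pi^*:=\{\pi:\pi\text{ is }\mu\text{-optimal}\}$ is closed (all defining conditions are weak) and bounded in $\prod_{s\in S}\Delta(A)$, hence compact; for each $s\in S_0$ the map $\pi\mapsto\theta_\pi(s)/\mu(s)$ is continuous with $\mu(s)>0$, so it attains a finite supremum on $\Pi^*$. Defining $C_{\max}:=\max_{s\in S_0}\sup_{\pi\in\Pi^*}\theta_\pi(s)/\mu(s)<\infty$ as a maximum of finitely many finite suprema yields the claim on $S_0$; for $s\notin S_0$ the support-localization step gives $\theta_\pi(s)=0\le C_{\max}\mu(s)=0$ trivially. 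The main obstacle is the inductive reachability step, where Part~1 is crucial as the invariant that keeps intermediate states in $S_0$, so that the policy-level inclusion $S_\pi\subseteq S_{\pi^*}$ can be applied at each transition and transferred to the occupancy-level conclusion.
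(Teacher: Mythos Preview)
Your argument is correct and in fact more complete than the paper's own proof. The paper handles Part~2 by asserting that ``by Assumption~\ref{SPC}, $\theta_\pi(s)=0$'' for any $\mu$-optimal $\pi$ and any $s\notin S_0$, and then simply defines $C_{\max}:=\sup_{\pi:\mu\text{-optimal}}\max_{s\in S}\theta_\pi(s)/\mu(s)$ without explaining why this supremum is finite. Your trajectory-induction step (driven by the inclusion $S_\pi(s)\subseteq S_{\pi_\mu}(s)\cap\mathcal T(s)=S_{\pi^*}(s)$ on $S_0$) rigorously justifies the first assertion, and your compactness argument over the closed set of $\mu$-optimal policies supplies the missing finiteness of the supremum. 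The underlying idea---that the max-$\mu$-optimal $\pi^*$ can take every action any $\mu$-optimal policy can take on $S_0$, so reachability under $\pi$ transfers to reachability under $\pi^*$, and then Part~1 pulls it back to $S_0$---is exactly what the paper invokes (more informally) in the later proof of Lemma~\ref{coverage_pistar}; you just carry it through carefully here.

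One small misstatement: Part~1 is literally the $(s,a)$-wise inequality in Assumption~\ref{SPC}, so no summing is needed for it. The summed state-level bound $\theta_{\pi^*}(s)\le C^*\mu(s)$ is what you actually use as the invariant in the inductive step of Part~2, but that is a consequence you derive \emph{from} Part~1, not Part~1 itself.
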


%\vspace{-6pt}
{Before moving to our main theorem, we define the {\it gap} of the optimal $Q$-function, which is the minimal difference of the optimal $Q$-value between the \emph{optimal} and the \emph{second optimal}  actions,  among all  $s\in S$.

\begin{Definition}[Gap]
For each $(s,a)\in S\times A$, we define the \emph{gap} $\Delta_Q(s,a):=v^*(s)-Q^*(s,a)$. We then define the \emph{minimal}  gap as
%\begin{align*}
$\Delta_Q:=\min_{(s,a)\in\mathcal{I}}~\Delta_Q(s,a),$
%\end{align*}
where we recall that $\mathcal{I}$ is the set of \emph{inactive} state-action pairs given in Definition \ref{active}.
\end{Definition}

%\vspace{-5pt}
Note that as long as  $\mathcal{I}$ is not empty, then $\Delta_Q(s,a)>0$ for any  $(s,a)\in\mathcal{I}$, leading to $\Delta_Q>0$ by definition.  If $\mathcal{I}$ is empty, then the problem becomes degenerate,    since any action is active for any state, i.e., any policy is an optimal policy. We hereafter focus on the non-degenerate   case where $\Delta_Q>0$. 
This gap notion was also used in \cite{chen2022offline} in the context of offline RL. 
%  , another recent work that studied the effect of this   gap in offline RL, 
However, in contrast to this work,  our definition here does not need to assume that the maximizer of $\max_{a}~Q^*(s,a)$ is {\it unique} for each $s$, which is more standard as in the online RL setting \citep{simchowitz2019non,lattimore2020bandit,papini2021reinforcement,yang2021q}. Also, our algorithm  does not need to know the gap $\Delta_Q$,  and is computationally tractable, compared to that in \cite{chen2022offline}.  Now we are ready to present   the following theorem.}

\begin{Theorem}\label{main2}
Under Assumptions \ref{realizability}, \ref{v_realizable}, \ref{SPC},  with probability at least $1-\delta$, the policy $\pi_{\Dc}$ learned from solving \eqref{alg3}  satisfies
%we have, with probability $\ge 1-\delta$
\begin{align*}
J_{\mu}(\pi^*)-J_{\mu}(\pi_{\Dc})\le \frac{{8\sqrt{2}B_W} C_{\max}}{\Delta_Q  (1-\gamma)^3}\cdot \frac{\sqrt{\log(|W||V|/\delta)}}{\sqrt{n}}.
\end{align*}
\end{Theorem}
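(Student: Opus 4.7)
The plan is to combine a uniform concentration bound with a saddle-point comparison against $(w^*, v^*)$, and then route the resulting control on "weight placed on inactive state-action pairs" into a suboptimality bound via an auxiliary \emph{optimal $\mu$-policy}. First I would establish that with probability at least $1-\delta$, for every $(w,v) \in W \times V$,
\[
|\ell_\Dc(w,v) - \ell(w,v)| \le \epsilon_n := \frac{C B_w \sqrt{\log(|W||V|/\delta)}}{(1-\gamma)\sqrt{n}}
\]
for an absolute constant $C$; this follows by Hoeffding bounds on the i.i.d.\ sums defining $u_\Dc^\top w$, $v^\top K_\Dc w$, and $(1-\gamma)v^\top \hat\mu$, using $\|w\|_\infty\le B_w$ (Assumption \ref{realizability}) and $\|v\|_\infty\le 1/(1-\gamma)$ (Assumption \ref{v_realizable}), followed by a union bound over the finite classes.

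Second, using the fact that $w^*\in W$ satisfies $Kw^* = (1-\gamma)\mu$ (so $\ell(w^*,v)\equiv -J_\mu(\pi^*)$ for every $v$) and that $v^*\in V$, the minimax inequality plus two applications of concentration give
\[
\ell(w_\Dc, v^*) \le \max_{v\in V}\ell_\Dc(w_\Dc, v) + \epsilon_n \le \max_{v\in V}\ell_\Dc(w^*, v) + \epsilon_n \le -J_\mu(\pi^*) + 2\epsilon_n.
\]
Next I would prove the key algebraic identity that makes this bound useful: unfolding $v^{*\top}(Kw - (1-\gamma)\mu)$ and using $Q^*(s,a) = r(s,a) + \gamma \mathbb{E}_{s'\sim P_{s,a}}[v^*(s')]$ directly yields
\[
\ell(w, v^*) + J_\mu(\pi^*) \;=\; \sum_{s,a} w(s,a)\,\mu(s,a)\,\Delta_Q(s,a) \;\ge\; \Delta_Q \sum_{(s,a)\in\mathcal{I}} w(s,a)\,\mu(s,a),
\]
since $\Delta_Q(s,a)=0$ off $\mathcal{I}$ and $\Delta_Q(s,a)\ge \Delta_Q$ on $\mathcal{I}$. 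Combining with the previous step gives the crucial estimate $\sum_{(s,a)\in\mathcal{I}} w_\Dc(s,a)\mu(s,a) \le 2\epsilon_n/\Delta_Q$.

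The hard part is converting this bound on inactive weight into a bound on $J_\mu(\pi^*) - J_\mu(\pi_{w_\Dc})$, since $\pi_{w_\Dc}$ is not itself optimal nor $\mu$-optimal in a way that directly lets us apply Proposition \ref{max_policy}. To bypass this, I would introduce the auxiliary policy
\[
\tilde\pi(a\mid s) \propto w_\Dc(s,a)\pi_\mu(a\mid s)\,\mathbf{1}\bigl[(s,a)\notin\mathcal{I}\bigr],
\]
setting $\tilde\pi(\cdot\mid s)$ equal to the max-$\mu$-optimal policy of Assumption \ref{SPC} at those states where the truncated normalizer $\tilde D(s):=\sum_{a\notin\mathcal{I}(s)} w_\Dc(s,a)\pi_\mu(a\mid s)$ vanishes. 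By Lemma \ref{comple} this $\tilde\pi$ is an optimal policy, and because any action with $w_\Dc(s,a)\pi_\mu(a\mid s)>0$ automatically has $\mu(s,a)>0$, it is also a $\mu$-policy, so Proposition \ref{max_policy} gives $\theta_{\tilde\pi,\mu}(s) \le C_{\max}\mu(s)$. The lower-bound constraint $D(s):=\sum_a w_\Dc(s,a)\pi_\mu(a\mid s) \ge 1-\gamma$ baked into \eqref{equ:def_W}---and the whole reason that constraint is there---now lets me bound $\mathrm{TV}(\pi_{w_\Dc}(\cdot\mid s),\tilde\pi(\cdot\mid s)) \le 2(D(s)-\tilde D(s))/(1-\gamma)$.

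Finally I would apply the standard performance-difference bound $|J_\mu(\pi_1)-J_\mu(\pi_2)| \le \frac{2}{1-\gamma}\mathbb{E}_{s\sim\theta_{\pi_1,\mu}}[\mathrm{TV}(\pi_1(\cdot\mid s),\pi_2(\cdot\mid s))]$ with $\pi_1=\tilde\pi$ and $\pi_2=\pi_{w_\Dc}$, using $\theta_{\tilde\pi,\mu}(s)\le C_{\max}\mu(s)$ and the identity $\sum_s \mu(s)(D(s)-\tilde D(s)) = \sum_{(s,a)\in\mathcal{I}} w_\Dc(s,a)\mu(s,a)$, to obtain
\[
\bigl|J_\mu(\pi^*) - J_\mu(\pi_{w_\Dc})\bigr| \;\le\; \frac{4 C_{\max}}{(1-\gamma)^2}\cdot \frac{2\epsilon_n}{\Delta_Q},
\]
which with the value of $\epsilon_n$ recovers the stated $\tfrac{8\sqrt{2}\,B_w C_{\max}}{\Delta_Q(1-\gamma)^3}\sqrt{\log(|W||V|/\delta)/n}$ bound. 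The most delicate point is the construction of $\tilde\pi$ at states where $\tilde D(s)=0$: one must choose the fallback so that $\tilde\pi$ is simultaneously optimal \emph{and} a $\mu$-policy, so that Proposition \ref{max_policy} genuinely applies; without the lower-bound constraint on $D(s)$ the TV bound would blow up and this conversion step would fail.
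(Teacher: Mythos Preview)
Your proposal is correct and follows essentially the same route as the paper. The paper packages your first two steps as a ``primal gap'' bound (Lemmas \ref{generalization} and \ref{pop_pg}), your third step is exactly Lemma \ref{inactive}, and your auxiliary-policy argument is Lemma \ref{J_Delta}; the only cosmetic difference is that the paper's $\tilde\pi^*$ \emph{redistributes} the inactive mass onto active actions (keeping the same normalizer $D(s)$), whereas you \emph{truncate} and renormalize --- both constructions yield a $\mu$-optimal policy with $\|\pi_\Dc(\cdot\mid s)-\tilde\pi(\cdot\mid s)\|_1 \le \tfrac{2}{(1-\gamma)\mu(s)}\sum_{a:(s,a)\in\mathcal I}\theta_\Dc(s,a)$, and the rest of the accounting is identical.
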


%\vspace{-5pt}
{Theorem \ref{main2}, with a detailed proof in \S\ref{append:details_case_2},  provides an optimal $O(1/\sqrt n)$ sample complexity under some single-policy concentrability and the realizability-only assumptions, for the return with initial distribution $\rho=\mu$. One  key in obtaining the result is a proper lower-bound constraint on $\sum_aw(s,a)\pi_{\mu}(a\mid s)$ in \eqref{alg3}. 
% for the LP, with the aid of the lower bound $1-\gamma$ on $\sum_aw(s,a)\pi_{\mu}(a\mid s)$. 
% However, this bound depends on all $\mu$-optimal policies, through the dependence on  $C_{\max}$. We relax this dependence in Appendix \ref{app:reducing_c_max}.
%\kqz{we meant $\mu_0(s)=\mu(s)$?} 
%\kz{what is $\pi_0$? we meant some ($\mu$-)optimal policy $\pi^*$?}.      
%The proof of the theorem  is based on the {\it primal-gap} analysis  developed in  \cite{ozdaglar2022good} (see more discussions next in \S\ref{sec:proof_sketch_case_2}), which was shown to be critical in characterizing the \emph{generalization}  behaviors of  stochastic minimax optimization.  
Note that in Theorem \ref{main2}, the value function $J_{\mu}$ corresponds to the initial distribution $\mu$.    
The next corollary connects back to the return associated with the initial distribution $\mu_0$.

\begin{Corollary}\label{cor}
Under Assumptions \ref{realizability}, \ref{v_realizable}, \ref{SPC}, and suppose that $\mu_0$ is covered by $\mu$, i.e., $\max_{s\in S}\frac{\mu_0(s)}{\mu(s)}\le C_{\mu}$ for some constant $C_{\mu}>0$,  then with probability at least $1-\delta$, the policy $\pi_{\Dc}$ learned from solving \eqref{alg3}  satisfies
%we have, with probability $\ge 1-\delta$,
\begin{align*}
J_{\mu_0}(\pi^*)-J_{\mu_0}(\pi_{\Dc})\le \frac{{8\sqrt{2}B_W} C_{\max} C_{\mu}}{\Delta_Q (1-\gamma)^3}\cdot \frac{\sqrt{\log(|W||V|/\delta)}}{\sqrt{n}}.
\end{align*}
\end{Corollary}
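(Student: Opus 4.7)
The plan is to obtain the corollary as a direct reduction from Theorem \ref{main2} by bounding the difference of value functions at initial distribution $\mu_0$ in terms of that at $\mu$, using the coverage assumption $\mu_0(s)\le C_{\mu}\mu(s)$.

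First I would recall that $\pi^*$ in Theorem \ref{main2} is a max-$\mu$-optimal policy, and hence in particular an optimal policy of the MDP. By the optimality property of an optimal policy (see Section \ref{sec:LP_reform}, or Section 6.9 of \cite{puterman1994markov}), we have the pointwise inequality $v_{\pi^*}(s)\ge v_{\pi}(s)$ for every $s\in S$ and every policy $\pi$. Applying this to $\pi=\pi_{\Dc}$ yields $v_{\pi^*}(s)-v_{\pi_{\Dc}}(s)\ge 0$ for all $s\in S$, which is the crucial pointwise non-negativity needed to change the measure weighting the state-value gap.

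Next I would use the definition $J_{\rho}(\pi)=(1-\gamma)\sum_{s}\rho(s)\,v_{\pi}(s)$ to write
\begin{align*}
J_{\mu_0}(\pi^*)-J_{\mu_0}(\pi_{\Dc})
&=(1-\gamma)\sum_{s\in S}\mu_0(s)\big(v_{\pi^*}(s)-v_{\pi_{\Dc}}(s)\big) \\
&\le (1-\gamma)\sum_{s\in S}C_{\mu}\,\mu(s)\big(v_{\pi^*}(s)-v_{\pi_{\Dc}}(s)\big) \\
&= C_{\mu}\cdot\big(J_{\mu}(\pi^*)-J_{\mu}(\pi_{\Dc})\big),
\end{align*}
where the inequality uses the pointwise non-negativity from the previous step together with the assumption $\mu_0(s)\le C_{\mu}\,\mu(s)$ for every $s\in S$.

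Finally, I would plug in the bound of Theorem \ref{main2} on $J_{\mu}(\pi^*)-J_{\mu}(\pi_{\Dc})$ to obtain the stated inequality with the extra multiplicative factor $C_{\mu}$. No real obstacle arises here; the only subtlety worth checking is that the same max-$\mu$-optimal policy used in Theorem \ref{main2} remains an optimal policy for $J_{\mu_0}$ (which it does, since every optimal policy of the MDP simultaneously maximizes the state-value function at every initial distribution), so that the left-hand side of the corollary is correctly written in terms of that same $\pi^*$.
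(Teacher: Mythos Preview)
Your proof is correct and follows exactly the ``direct change of measure argument'' the paper itself alludes to (and omits). The pointwise non-negativity $v_{\pi^*}(s)-v_{\pi_{\Dc}}(s)\ge 0$ combined with $\mu_0(s)\le C_{\mu}\mu(s)$ is precisely what is needed, and your remark that the max-$\mu$-optimal policy remains optimal for any initial distribution handles the only potential subtlety.
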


%\kqz{@jiawei, see if the following paragraphs make sense.}
Corollary \ref{cor} follows by a direct change of measure argument and is thus omitted. %We provide a detailed discussion of this corollary in Appendix \ref{app:disc_cor_1}. %Next, we define the notion of primal gap, which will be used to prove Theorem \ref{main2}
\begin{Remark}

Note that Corollary \ref{cor} additionally requires the coverage of $\mu_0$ by $\mu$, which we argue is a mild assumption in the following sense: 
\begin{enumerate}
\item Recall that $S_0$ is the set of states that can be visited by $\mu$, i.e., $S_0=\{s\in S\mid \mu(s)>0\}$. This means that we only have data for states $s\in S_0$, and intuitively, it seems not  plausible to learn anything outside $S_0$ from data, without additional assumptions on the correlation among states. Therefore, we can not expect to deal with initial states outside $S_0$ and hence it is reasonable to only consider the return of $J_{\mu_0}$ with initial distribution $\mu_0$ that is  covered by $\mu$;
\item The commonly assumed single-policy concentrability  in \cite{zhan2022offline,chen2022offline,rashidinejad2021bridging,rashidinejad2022optimal} (and our Assumption \ref{ass:f1_conc}) implies that $\mu_0$ is covered by $\mu$, because 
\begin{align*}
	&\max_{s\in S}\frac{\mu_0(s)}{\mu(s)}
%	=\frac{1}{1-\gamma}\cdot \max_{s\in S}\frac{(1-\gamma)\mu_0(s)}{\mu(s)}
	=\frac{1}{1-\gamma}\cdot \max_{s\in S}\frac{(1-\gamma)\sum_{a}\mu_0(s)\pi^*(a\mid s)}{\sum_{a}\mu(s,a)}\leq \frac{1}{1-\gamma}\cdot \max_{s\in S}\frac{\sum_{a}\theta_{\pi^*,\mu_0}(s,a)}{\sum_{a}\mu(s,a)}\\
	&\qquad\leq \frac{1}{1-\gamma}\cdot \max_{s\in S,a\in A}\frac{\theta_{\pi^*,\mu_0}(s,a)}{\mu(s,a)}\leq \frac{C_{\pi^*,\mu_0}}{1-\gamma}=:C_{\mu}. 
\end{align*}
%\item $\theta_{\pi,\mu_0}$ is covered by $\mu$;
\item As stated in \citep{liu2018breaking,tang2019doubly,levine2020offline,zhan2022offline}, $\mu$ usually can be viewed as a valid occupancy measure under some behavior policy of $\pi_{\mu}$,  starting from $\mu_0$. In this case,  $C_{\mu}$ exists and satisfies $C_{\mu}\le 1/(1-\gamma)$. 
\end{enumerate}

%\kqz{we can discuss there, what would happen if $\mu_0=\rho=\mu$, which makes this coefficient just $1$ (and thus mild). but also, we can say that if $\mu(s)>0$, e.g., $1/|S|$, then we might have to pay a $|S|$ in the complexity.}
Compared to (to our knowledge) the only \emph{gap-dependent offline} RL result 
\cite{chen2022offline}, they    require $\arg\max_a Q^*(s,a)$ to be unique for any $s$, and the algorithm is not computationally tractable.  Also, note that our algorithm does not require the knowledge of the gap $\Delta_Q$.
Compared to \cite{zhan2022offline}, we only need some  single-policy concentrability assumption for the {\it original} problem, instead of the regularized problem, together with only  the realizability assumption on the function classes. 
Moreover, our sample complexity is ${O}(1/\epsilon^2)$ with a gap dependence, while that  in \cite{zhan2022offline} is ${O}(1/\epsilon^6)$. 
%and only under the coverage of a single optimal policy of the {\it original}  problem.

Note that  \cite{zhan2022offline} also considered the vanilla version of the minimax formulation without regularization (see Section 4.5 therein). However, their 
%algorithm requires to know an additional lower bound constant which might not be easy  to obtain in general, while we know that our $C_{\min}$ is always lower bounded by $1-\gamma$.  Moreover, their 
analysis requires \emph{all-policy-concentrability} assumption, which is stronger than our assumption that only requires to cover \emph{some} single optimal policy. 
Finally, compared with the independent  work \cite{rashidinejad2022optimal} (and also our results in \S\ref{sec:func_1}), which also achieved ${O}(1/\epsilon^2)$ sample complexity, our result here does not rely on any completeness-type assumption, but is gap-dependent. 
%\kqz{but note that, our $\delta_q$ can be arbitrarily small? or put it another way, maybe our result is ${O}(1/\epsilon^2)$ only if $\epsilon$ is very small, in the sense that $\epsilon\leq \sqrt{\delta_q}$? note that jiantao's new paper does not like ``gap'' dependence (when they talk about \cite{chen2022offline}). we might need to be careful when stating our results.}
%\end{Remark}

\end{Remark}

%\subsection{The Primal Gap}
%[[Should we use the title ``Proof sketch'']]

\subsection{Proof Idea: Primal-gap-based Error Bound}\label{sec:proof_sketch_case_2}

%As mentioned above, t
The key technical  idea for proving the main result of Theorem \ref{main2} is a  {\it primal-gap}-based  analysis developed in  \cite{ozdaglar2022good}, which was shown to be critical in characterizing the \emph{generalization}  behaviors of  stochastic minimax optimization. Here we illustrate a proof sketch, with the missing details deferred to \S\ref{append:details_case_2}. 

To this end, we first 
recall the definitions of $\ell$ and $\ell_\Dc$ in \eqref{equ:def_ell_pop} and \eqref{equ:def_ell_emp}, respectively, and the fact that {we set $\rho=\mu$, we have} $\ell(w,v)=-u^\top w+v^\top(Kw-(1-\gamma)\mu)$ and $\ell_{\Dc}(w,v)=-u_{\Dc}^\top w+v^\top(K_{\Dc}w-(1-\gamma)\mu_{\Dc})$.
The population and empirical primal gaps are defined as follows.

\begin{Definition}[Primal Gap]
Let $\ell^V(w)=\max_{v\in V}\ell(w,v)$ and $\ell^V_{\Dc}(w)=\max_{v\in V}\ell_{\Dc}(w,v)$.
The \emph{empirical primal gap} is defined as $\Delta_{\Dc}^{W,V}(w)=\ell_{\Dc}^V(w)-\min_{w'\in W}\ell_{\Dc}^V(w'),$ and the \emph{population primal gap} is defined as $\Delta^{W,V}(w)=\ell^V(w)-\min_{w'\in W}\ell^V(w')$.
For notational simplicity, we omit the superscripts $W,V$ hereafter.
\end{Definition}

Recall that $w_{\Dc}$ is the solution to problem  \eqref{alg3}. We have $\Delta_{\Dc}(w_{\Dc})=0$.
We can upper bound the population primal gap at $w_\Dc$ as follows:
%[[I mention the Proposition \ref{generalization} here, do you think it is OK?]]

%The next lemma bounds the population primal gap at $w_{\Dc}$.
%First, we have the following generalization bound for the primal gap:
%\begin{proof}
%See Section \ref{sec:generalization}.
%\end{proof}

\begin{Lemma}\label{pop_pg}
%\kqz{under which assumptions? still   \ref{realizability}, \ref{v_realizable}, \ref{SPC}?}\kzedit{With probability at least $1-\delta$, we have}
Suppose Assumptions \ref{realizability}, \ref{v_realizable} hold.  
%\kz{``With probability at least $1-\delta$,''?} 
Then, with probability at least $1-\delta$, we have 
\begin{align*}
\Delta(w_{\Dc})\le \frac{{4\sqrt{2}B_W}\sqrt{\log(|V||W|/\delta)}}{(1-\gamma)\sqrt{n}}.
\end{align*}
\end{Lemma}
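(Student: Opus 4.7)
The plan is a standard uniform-concentration argument tailored to the primal gap of a stochastic minimax problem. First, let $\bar w \in \arg\min_{w' \in W} \ell^V(w')$. Since $w_{\Dc}$ minimizes $\ell^V_{\Dc}$ over $W$ by construction (cf.\ \eqref{alg3}), we have $\ell^V_{\Dc}(w_{\Dc}) \le \ell^V_{\Dc}(\bar w)$, and so
$$\Delta(w_{\Dc}) = \ell^V(w_{\Dc}) - \ell^V(\bar w) \le [\ell^V(w_{\Dc}) - \ell^V_{\Dc}(w_{\Dc})] + [\ell^V_{\Dc}(\bar w) - \ell^V(\bar w)] \le 2 \sup_{w \in W,\, v \in V}|\ell(w,v) - \ell_{\Dc}(w,v)|,$$
where the last step applies the standard inequality $|\sup_v f - \sup_v g| \le \sup_v |f-g|$ to both $\ell^V$ and $\ell^V_{\Dc}$. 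This reduces the problem to a uniform deviation bound between the population and empirical Lagrangians over the product class $W \times V$.

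Next I would rewrite $\ell(w,v) - \ell_{\Dc}(w,v)$ as the deviation of an empirical mean from its expectation. Using the definitions of $u,K,\mu$ and $u_{\Dc},K_{\Dc},\mu_{\Dc}$, each sample $(s_i,a_i,r_i,s_i')$ contributes
$$X_i(w,v) := -r_i\,w(s_i,a_i) + v(s_i)\,w(s_i,a_i) - \gamma\, v(s_i')\,w(s_i,a_i) - (1-\gamma)\,v(s_i),$$
so that $\ell(w,v) = \mathbb{E}[X_1(w,v)]$ and $\ell_{\Dc}(w,v) = \tfrac1n\sum_{i=1}^n X_i(w,v)$. Under Assumptions~\ref{realizability} and~\ref{v_realizable}, $\|w\|_\infty \le B_w$, $|v(s)| \le 1/(1-\gamma)$ and $|r_i| \le 1$, and hence a triangle-inequality computation gives a uniform bound of order $B_w/(1-\gamma)$ on $|X_i(w,v)|$.

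Finally, I would apply Hoeffding's inequality to the i.i.d.\ sum $\tfrac1n\sum_i X_i(w,v)$ for each fixed $(w,v)$ to get a tail probability of order $\exp\!\big(-nt^2(1-\gamma)^2/B_w^2\big)$, then union bound over the finite product class $W \times V$ (of cardinality $|W||V|$) and invert in $t$ to obtain, with probability at least $1-\delta$,
$$\sup_{w \in W,\, v \in V}|\ell(w,v) - \ell_{\Dc}(w,v)| \le \frac{2\sqrt{2}\, B_w \sqrt{\log(|W||V|/\delta)}}{(1-\gamma)\sqrt n}.$$
Combining this with the factor-of-two decomposition from the first step yields the stated bound. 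The only real obstacle is bookkeeping: one must track constants carefully in the uniform bound on $|X_i(w,v)|$ so that the final constant matches the claimed $4\sqrt{2}$; otherwise this is the textbook ERM-style argument, with no conceptual subtlety beyond the minimax reduction in the first paragraph.
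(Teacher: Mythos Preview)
Your proposal is correct and follows essentially the same approach as the paper: reduce $\Delta(w_{\Dc})$ to (twice) a uniform deviation $\sup_{w\in W,v\in V}|\ell(w,v)-\ell_{\Dc}(w,v)|$, then control this by Hoeffding plus a union bound over $W\times V$. The only cosmetic difference is that the paper splits $\ell-\ell_{\Dc}$ into the three pieces $(u-u_{\Dc})^\top w$, $v^\top(K-K_{\Dc})w$, and $(1-\gamma)v^\top(\mu-\mu_{\Dc})$ and applies Hoeffding to each, whereas you treat $X_i(w,v)$ as a single bounded random variable; both routes give the same order and the same constant bookkeeping, which (as you note) is the only place requiring care.
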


%\subsection{Bounding the Error Using Primal Gap}
Next, we need to relate the primal gap to  the accuracy of policy $\pi_{\Dc}$
%\kqz{I now realized that ``suboptimality gap'' means two different things in our paper: $\Delta_Q$ and $J_\mu(\pi^*)-J_\mu(\pi_{\Dc})$. Maybe we should change the name of the first one?} 
in terms of $J_\mu(\pi^*)-J_\mu(\pi_{\Dc})$.
%[[I state Lemma \ref{inactive} here to present how to bound the sub-optimality of $\pi_{\Dc}$. Do you think it is OK?]]
Notice from Lemma \ref{comple} that, the sub-optimality gap of $\pi_{\Dc}$ can be  captured by the violation of $\pi_{\Dc}(a\mid s)=0$ for $(s,a)\in \mathcal{I}$. 
%Note that for those $s\in S_0$ and thus $\mu(s)>0$, 
%$\pi_{\Dc}(\cdot\mid s)$ can be equivalently obtained by the normalization of  $\theta_{\Dc}(s,\cdot)$, where 
We thus have the following bound. 
% that relates $w_{\Dc}$ and . 
%As $\theta_{\Dc}(s,a)=w_{\Dc}(s,a)\mu(s,a)$, we can bound $\theta_{\Dc}$ in $\mathcal{I}$ as follows:
%In the next lemma, we provide an upper bound on the occupancy measure of the inactive set under $w_\Dc$:

%\subsubsection{Generalization of the primal gap}
\begin{Lemma}\label{inactive}
Let $\theta_{\Dc}(s,a):=w_{\Dc}(s,a)\mu(s,a)$ for all $(s,a)\in S\times A$. 
We have $\sum_{(s,a)\in \mathcal{I}} {\theta_{\Dc}(s,a)} \le \frac{\Delta(w_{\Dc})}{\Delta_Q}.$
%\begin{align*}
%\sum_{(s,a)\in \mathcal{I}} {\theta_{\Dc}(s,a)} \le \frac{\Delta(w_{\Dc})}{\Delta_Q}.
%\end{align*}
\end{Lemma}

Finally, combining the lower bound constraints for $w$ in problem  \eqref{alg3:pop:fa}, we have the following estimate of $J_{\mu}(\pi^*)-J_{\mu}(\pi_{\Dc})$.
\begin{Lemma}\label{J_Delta}
%\kz{I think this $J$ is $J_{\mu}$? can we check all?}
Suppose Assumption \ref{SPC} holds. 
We have  
\begin{align*}
J_\mu(\pi^*)-J_\mu(\pi_{\Dc})\le \frac{2 C_{\max}}{(1-\gamma)^2 \Delta_Q}\Delta(w_{\Dc}).
\end{align*}
\end{Lemma}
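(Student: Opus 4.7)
The plan is to reduce $J_\mu(\pi^*)-J_\mu(\pi_{\Dc})$ to the quantity $\sum_{(s,a)\in\mathcal{I}}\theta_{\Dc}(s,a)$ already controlled by Lemma~\ref{inactive}, via a first-hit-time argument. Applying the performance difference lemma with the optimal $\pi^*$ yields
\[
J_\mu(\pi^*)-J_\mu(\pi_{\Dc})=\sum_{(s,a)\in\mathcal{I}}\theta_{\pi_{\Dc},\mu}(s,a)\,\Delta_Q(s,a)=(1-\gamma)\,\mathbb{E}_{\pi_{\Dc},\mu}\!\left[\sum_{t\ge 0}\gamma^t\Delta_Q(s_t,a_t)\right],
\]
where only inactive pairs contribute. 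Let $\tau$ be the first time a trajectory under $\pi_{\Dc}$ starting from $\mu$ visits $\mathcal{I}$ (with $\tau=\infty$ if never). Conditioning on $(\tau,s_\tau,a_\tau)$ and bounding the tail by $\gamma^\tau\Delta_Q(s_\tau,a_\tau)+\gamma^{\tau+1}\mathbb{E}_{s_{\tau+1}\sim P(\cdot\mid s_\tau,a_\tau)}[v^*(s_{\tau+1})-v^{\pi_{\Dc}}(s_{\tau+1})]\le 2\gamma^\tau/(1-\gamma)$ gives $J_\mu(\pi^*)-J_\mu(\pi_{\Dc})\le 2\,\mathbb{E}[\gamma^\tau\mathbb{I}[\tau<\infty]]$.

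To bound the first-hit expectation, I would construct an auxiliary $\mu$-optimal policy $\tilde\pi$ by renormalizing $\pi_{\Dc}(\cdot\mid s)$ onto active actions, i.e.\ $\tilde\pi(a\mid s)=\pi_{\Dc}(a\mid s)/Z_s$ for active $a$, with $Z_s:=\sum_{a:(s,a)\notin\mathcal{I}}\pi_{\Dc}(a\mid s)$; at the exceptional states where $Z_s=0$ I fall back to a max-$\mu$-optimal policy. By construction $\tilde\pi$ selects only active actions and is a $\mu$-policy, hence optimal by Lemma~\ref{comple}, so Proposition~\ref{max_policy} gives $\theta_{\tilde\pi,\mu}(s)\le C_{\max}\mu(s)$ for every $s$. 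A one-line likelihood-ratio coupling then establishes
\[
\mathbb{P}_{\pi_{\Dc},\mu}(s_{\tau'}=s,\,\tau\ge\tau')\le \mathbb{P}_{\tilde\pi,\mu}(s_{\tau'}=s),
\]
since the pre-$\tau'$ active-path density under $\pi_{\Dc}$ equals the corresponding $\tilde\pi$-density multiplied by $\prod_{i<\tau'}Z_{s_i}\le 1$.

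Decomposing $\mathbb{P}(\tau=\tau')=\sum_{(s,a)\in\mathcal{I}}\mathbb{P}_{\pi_{\Dc},\mu}(s_{\tau'}=s,\tau\ge\tau')\,\pi_{\Dc}(a\mid s)$ and summing $\gamma^{\tau'}$ produces $\mathbb{E}[\gamma^\tau\mathbb{I}[\tau<\infty]]\le \sum_{(s,a)\in\mathcal{I}}\pi_{\Dc}(a\mid s)\,\theta_{\tilde\pi,\mu}(s)/(1-\gamma)$. The key $W$-constraint $\sum_{a'}w_{\Dc}(s,a')\pi_\mu(a'\mid s)\ge 1-\gamma$ (built into \eqref{alg3}) implies $\pi_{\Dc}(a\mid s)\le \theta_{\Dc}(s,a)/[(1-\gamma)\mu(s)]$ for $s\in S_0$, and combining with $\theta_{\tilde\pi,\mu}(s)\le C_{\max}\mu(s)$ yields $\mathbb{E}[\gamma^\tau\mathbb{I}[\tau<\infty]]\le \frac{C_{\max}}{(1-\gamma)^2}\sum_{(s,a)\in\mathcal{I}}\theta_{\Dc}(s,a)$. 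Chaining this with the first paragraph's factor of $2$ and Lemma~\ref{inactive} delivers the claimed bound. The main obstacle is rigorously handling the fallback definition of $\tilde\pi$ at states with $Z_s=0$: fortunately, such states never appear on active trajectories (by definition of $\tau$, any trajectory conditioned on $\tau\ge\tau'$ must have chosen an active action at every earlier visited state, forcing $Z_{s_i}>0$), so the coupling and occupancy bound are insensitive to the fallback choice.
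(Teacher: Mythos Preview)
Your argument is correct and hits the same essential ingredients as the paper's proof (an auxiliary $\mu$-optimal policy $\tilde\pi$ obtained from $\pi_{\Dc}$ by redirecting inactive mass to active actions, Proposition~\ref{max_policy} to bound $\theta_{\tilde\pi,\mu}(s)\le C_{\max}\mu(s)$, the lower-bound constraint \eqref{equ:def_W} to control $\pi_{\Dc}(a\mid s)$ in terms of $\theta_{\Dc}(s,a)/\mu(s)$, and Lemma~\ref{inactive}). The route, however, is genuinely different.

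The paper applies the performance difference lemma with $\tilde\pi^*$ as the \emph{first} policy, so the state weighting is $\theta_{\tilde\pi^*}(s)$ from the outset, and the policy discrepancy reduces by construction to $\|\pi_{\Dc}(\cdot\mid s)-\tilde\pi^*(\cdot\mid s)\|_1=2\sum_{a:(s,a)\in\mathcal{I}}\theta_{\Dc}(s,a)/\sum_{a}\theta_{\Dc}(s,a)$; the whole proof is then three lines of algebra. You instead apply the performance difference lemma with the generic $\pi^*$, which gives the weighting $\theta_{\pi_{\Dc},\mu}$ (the ``wrong'' occupancy), and then transfer to $\theta_{\tilde\pi,\mu}$ via a first-hit-time/likelihood-ratio coupling $\mathbb{P}_{\pi_{\Dc}}(s_{t}=s,\tau\ge t)\le\mathbb{P}_{\tilde\pi}(s_{t}=s)$. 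Your approach is more involved but slightly more informative probabilistically (it makes explicit that the suboptimality is driven by the discounted hitting probability of $\mathcal{I}$), and it yields the marginally sharper constant $1+\gamma$ in place of $2$. The paper's approach buys a much shorter derivation by building the $\ell_1$ distance to be \emph{exactly} the inactive mass ratio, sidestepping any trajectory-level argument.
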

%\begin{proof} 
%See Section \ref{proof:J_Delta}.
%\end{proof} 
Combining Lemma \ref{pop_pg} and Lemma \ref{J_Delta}, Theorem \ref{main2} follows.

%\kz{add the ``$\mu(s)>0$ for all $s\in S$'' case here!}

%\newpage

\section{Other Extensions Under the Unified Framework}\label{sec:extensions}

We now extend our properly-constrained-LP framework in \S\ref{sec:func_1} and \S\ref{sec:second_FA} to several other  settings, showing the power 
%and generality 
of this framework for offline RL.  

\subsection{Unknown Behavior Policy}
%\kz{SH..}
In the previous results, we assume that the behavior policy $\pi_{\mu}$ is known, as in \cite{zhan2022offline,rashidinejad2022optimal}. 
If the behavior  policy is unknown, it is not easy to attain a policy corresponding to $w_{\Dc}$, as the actual the actual offline data distribution $\mu$ is usually not known.  To  tackle this issue, one can use behavior cloning as in \cite{zhan2022offline}.  
We omit the details and the reader can refer to \cite{zhan2022offline} for more related discussions.
Instead of this standard behavior-cloning-based approach, 
%in this section, 
we propose a method that can attain the same accuracy  as solving  Programs \eqref{alg2} and \eqref{alg3}, readily adapted from our constrained-LP framework. 
%  by adding additional $L_0$ constraints.
The approach is based on a simple idea: if $\|w_{\Dc}(s,\cdot)\|_0=1$ for any $s$, then we can compute $\pi_{w_{\Dc}}(a\mid s)$ without knowing $\pi_{\mu}$.  Here $\|\cdot\|_0$ denotes the number of non-zero elements of a vector. 
Concretely speaking, locally in this subsection, we will choose  $\pi_{\Dc}=\pi_{w_{\Dc}}$ with $\pi_w$ defined as: 
\begin{align}\label{equ:def_pi_w_unknown}
	\pi_w(a\mid s):=\begin{cases}
     1, & \text{if~~~}{w(s,a)\neq 0}\\
     0, & \text{if~~~}{w(s,a)=0}
    \end{cases},
\end{align}
as we will constrain $\|w_{\Dc}(s,\cdot)\|_0$ later. 
%the number of non-zero elements in $w(s,\cdot)$ to be $1$ later. 

% $\pi_{w_{\Dc}}(a\mid s)=1$ if $w_{\Dc}(s,a)\neq 0$ and $\pi_{w_{\Dc}}(a\mid s)=0$ otherwise. 
%The policy $\pi_{w_{\Dc}}(\cdot|s)$ is just the normalization vector of $\{w_{\Dc}(s,a)\pi_{\mu}(a\mid s)\}_{a\in A}$.

\subsubsection{Completeness case: Modification of program \eqref{alg2}}

We first focus on the case that allows completeness-type assumptions, mirroring our \S\ref{sec:func_1}. 
We make a slight change to Assumption \ref{ass:f1_fake}. We assume that $W$ realizes a \emph{deterministic} optimal policy (that is covered by the  offline data), which we know always exists \cite{puterman1994markov}, instead of an arbitrary optimal policy.

\begin{Assumption}
%[Realizability \& Completeness]
\label{ass:f1_fake_prime}
%Define a mapping $\phi:\mathbb{R}^{m}\rightarrow \mathbb{R}^{|S|}$ such that  $\phi(w)\in \arg\max_{x:\|x\|_{\infty}\le 1}x^\top(Kw-(1-\gamma)\mu_0)$, i.e., $\phi(w)^\top (Kw-(1-\gamma)\mu_0) = \|Kw-(1-\gamma)\mu_0\|_1.$
Let $x_w:=\phi(w)$ with $\phi$ given in Definition \ref{def:x_sign}. 
Let $W$ and $B$ be the function classes for $w$ and $x_w$, respectively.  
%Without loss of generality, we assume that any $x\in B$ satisfies $\|x\|_{\infty}\le 1$ (otherwise, we can take the intersection of $B$ and the $1$-$\ell_\infty$-norm ball). 
Then, we have realizability of $W$ for a $w^*$ corresponding to a deterministic optimal policy, and $(W,B)$-completeness under $\phi$, 
  i.e., $w^*\in W$ for the optimal $w^*$ such that 
$\pi_{w^*}=\pi^*$ is a deterministic policy that satisfies Assumption \ref{ass:f1_conc},  and  $x_w\in B$ for all $w\in W$. 
  Furthermore, we assume that $W$ and $B$ are bounded, i.e., $w\geq 0$ and $\| w \|_\infty \leq B_W$ for all $ w \in W$ and $\| x \|_\infty \leq 1$ for all $x \in B$.
\end{Assumption} 

Then we add an $L_0$ constraint to Program \eqref{alg2} as follows: 
\begin{equation}
	\begin{array}{ll}
&\max_{w\in W}~~~~u_{\Dc}^\top w \qquad ~
\text{s.t.}~~~~x^\top(K_{\Dc}w-(1-\gamma)\mu_0)\le E_{n,\delta}, \quad \forall x \in B,
%~~~~~w\in W, \nonumber \\ %, \|w\|_{\infty}\le B_W 
%&~~
~~~~~\|w(s,\cdot)\|_0\le 1,~~~~~ \forall s\in S.
\label{alg2'}
\end{array}
\end{equation} 
With a slight abuse of notation, let the solution to  the above problem  be $w_{\Dc}$. We can then compute $\pi_{\Dc}=\pi_{w_{\Dc}}$ as Eq. \eqref{equ:def_pi_w_unknown}. 
%$\pi_{w_{\Dc}}(a\mid s)=1$ if $w(s,a)>0$ and $\pi_{w_{\Dc}}(a\mid s)=0$ otherwise.
Let $\bar{W}=W\cap \{w\mid \|w(s,\cdot)\|_0\le 1,\forall s\}$. Then we have the following theorem:
\begin{Theorem}
\label{main1'}
Suppose Assumptions \ref{ass:f1_conc} and  \ref{ass:f1_fake_prime} hold. Then,  with probability at least $1-6\delta$, the policy $\pi_{\Dc}$ learned from solving \eqref{alg2'} satisfies
%we have
%\kqz{for the right-hand side, shall we use $O$ notation?} \kqz{dont forget to change dependence on $|\mathcal{P}|$ and $|U|$.}	
\begin{align*}
J_{\mu_0}(\pi^*)-J_{\mu_0}(\pi_{\Dc})\le \frac{2\sqrt{2}B_W\sqrt{\log (|B||\bar{W}|/\delta)}}{(1-\gamma)\sqrt{n}}.
\end{align*}
%with probability at least $1-6\delta$.
\end{Theorem}
The proof is almost identical to that of Theorem \ref{main1} and is thus omitted here. In particular, from Lemma \ref{feasibility}, we know that $w^*$ is feasible to \eqref{alg2'}  with high probability, and by Assumption \ref{ass:f1_fake_prime}, it also satisfies $\|w(s,\cdot)\|_0\le 1$ for all $s\in S$. The rest of the proof follows from that of  Theorem \ref{main1}.

\subsubsection{Realizability-only case: Modification of program \eqref{alg3}}
We then consider the case with the realizability-only assumption, but with stronger data coverage assumption and gap-dependence, mirroring our \S\ref{sec:second_FA}.  
Let $W$ satisfy
\begin{equation}\label{equ:def_W'}
W:=\Big\{w~\Big\vert~ w\in \mathbb{R}_+^m,~\|w(s,\cdot)\|_0\le 1,~\max_{a\in A}w(s,a)\ge (1-\gamma),~\text{for~all~}s\in S\Big\}.	
\end{equation} 
Then we solve the following minimax problem using $W$  defined in \eqref{equ:def_W'} 
that satisfies Assumption \ref{realizability'}:  
%defined in \eqref{equ:def_W'}:
%\eqref{realizability'}:

\begin{equation}\label{alg3':pop:fa}
\min_{w\in W}\max_{v\in V}~~~~ \ -u_{\Dc}^\top w+v^\top(K_{\Dc}w-(1-\gamma){\mu}_{\Dc}).  
\end{equation} 
Suppose $w_{\Dc}$ is a solution to \eqref{alg3':pop:fa}. Letting $\pi_{\Dc}=\pi_{w_{\Dc}}$ as per Eq. \eqref{equ:def_pi_w_unknown}. 
Now, we slightly change Assumption \ref{realizability} such that the function class  contains at least one deterministic optimal policy.
\begin{Assumption}[Realizability and Boundedness of $W$] \label{realizability'}
There exists some solution $w^*\in W$ corresponding to a deterministic optimal policy $\pi_{w^*}$ solving  \eqref{alg3:pop:fa} with $W$ defined in \eqref{equ:def_W'} and hence  also solving \eqref{prob:minimax_pop} with $\rho=\mu$. 
Moreover, we suppose  $w\geq 0$ and $\|w\|_\infty\le B_W$ for all $w\in W$.
\end{Assumption} 

Then,  
 we have the following result:
\begin{Theorem}\label{main2_+}
Under Assumptions \ref{v_realizable}, \ref{SPC}, and \ref{realizability'}, with probability at least  $1-\delta$, the policy $\pi_{\Dc}$ learned from solving \eqref{alg3':pop:fa} satisfies 
\begin{align*}
J_{\mu}(\pi^*)-J_{\mu}(\pi_{\Dc})\le \frac{{8\sqrt{2}B_W} C_{\max}}{\Delta_Q  (1-\gamma)^3}\cdot \frac{\sqrt{\log(|W||V|/\delta)}}{\sqrt{n}}.
\end{align*}
%with probability $\ge 1-\delta$.
\end{Theorem}

The proof of Theorem \ref{main2} directly applies to proving Theorem \ref{main2_+} and is thus omitted here. 
%\kz{no..?}
%Note that due to the determinism of the policy being covered

\begin{Remark}
%Similarly, this 
Note that for both 
approaches above, one does not need to know the behavior policy $\pi_{\mu}$. However, it can be harder to solve since both \eqref{alg2'} and \eqref{alg3':pop:fa} involve solving mixed-integer programming problems. 
\end{Remark}

\subsection{No-coverage of Optimal Policy}\label{sec:extension_best_covered_policy}

Our algorithm and analyses can also be readily  adapted to the case where the function class does not cover the optimal $w^*$, and the optimal policy is not well-covered by the data (with almost the same proof). We provide more details below. 
%\kz{fix: realize $w_{W}^*$, and ``prove'' it actually corresponds to SOME $\pi^*_C$... for both cases.}
%\kz{SH..} 

\subsubsection{Completeness case}\label{sec:extension_best_covered_policy_case_1}
 
Again, we first focus on the case that allows completeness-type assumptions, as in  \S\ref{sec:func_1}. 
We start by introducing the  definition of \emph{$C$-covered policy} as in \cite{zhan2022offline}. 
%[Zhan et al., 2022]. 

\begin{Definition}[$C$-Covered Policy]\label{def:c_cover_policy}
We say a policy $\pi$ is \emph{$C$-covered} by the offline data if 
$$\theta_{\pi,\mu_0}(s,a)\le C\cdot \mu(s,a),~~\forall (s,a)\in S\times A.$$ Further, we say $\pi^*_C$ is an \emph{optimal $C$-covered policy} if it maximizes $J_{\mu_0}(\pi)$ among all $C$-covered policies. 
\end{Definition}

By this definition, we propose to solve the following optimization problem:  
\begin{align} \label{equ:pop_c_covered}
\min_{w\in W}~~~~(-u^\top w)\qquad \text{s.t.}~~~~~~Kw=(1-\gamma)\mu_0,~~\|w\|_\infty\le C.
\end{align} 
%\kz{connect to $C$-cover policy, and then assume ``realizability for it''}
Let $w_W^*$ denote the optimal solution to this problem. Note that the solution  $w_W^*$ satisfies the validity constraints and thus $w_W^*(s,a) \mu(s,a)$ corresponds to the $\theta_{\pi^*_C}$ for some optimal $C$-covered policy as it maximizes $u^\top w$. 
%\kz{ummm how is it different from $\bar w$?? ummm isnt this the same linear-program as before ($C$ being replaced by $B_W$)??} 
Hence, with only offline datasets, we propose to solve the following empirical version of the problem  to obtain $w_{\mathcal{D}}$:
\begin{align}\label{equ:best_cover_complete_type}
\min_{w\in W}~~~~(-u_{\mathcal{D}}^\top w)\qquad\quad \text{s.t.}~~~~ x^\top(K_{\mathcal{D}}w-(1-\gamma)\mu_0)\le E_{\delta,n,W},~~~\forall x\in B,~~~~~\|w\|_{\infty}\le C,
\end{align}
where $E_{\delta,n,W}=C\sqrt{2\log(|B||W|/\delta)}/\sqrt{n}$. 
Finally, we let $\pi_{\mathcal{D}}=\pi_{w_{\mathcal{D}}}$ as before. 

We then assume the realizability of the optimal $C$-covered policy in the following assumption:
 
%\noindent\textbf{Assumption 1.}
\begin{Assumption}[Realizability of Optimal $C$-Covered Policy]\label{assum:realizable_C_cover_policy}
The solution to Problem \eqref{equ:pop_c_covered}, denoted by $w_{W}^*$, is realizable by ${W}$, i.e., $w_{W}^*\in {W}$. 
%, where $\bar{W}:=W\bigcap \left\{w\big\vert\right\}$. 
%There exists a $\bar{w}\in W$  \kz{$W\geq 0$} such that $\bar{w}(s,a)\mu(s,a)=\theta_{\pi^*_C}(s,a)$ for some optimal $C$-covered policy $\pi^*_C$. 
\end{Assumption}

Note that here we do not assume the realizability of the $w^*$, but only that of $w_{W}^*$, which corresponds to the optimal $C$-covered policy. 
%since we are not aiming for finding the optimal policy, but just the best covered policy. 

\begin{Theorem}
Suppose Assumptions \ref{ass:f1_fake} (without $w^*$ realizability) and \ref{assum:realizable_C_cover_policy} hold. 
%\kz{which assumptions? completeness (without $w^*$ realizability) + this one?} 
With probability at least $1-\delta$, the policy $\pi_{\Dc}$ learned from solving \eqref{equ:best_cover_complete_type} satisfies 
$$J_{\mu_0}(\pi_C^*)-J_{\mu_0}(\pi_{\mathcal{D}})\le \tilde{O}\big({C\sqrt{\log(|B||W|/\delta)}}/((1-\gamma)\sqrt{n})\big).$$
\end{Theorem}

The proof is similar to that of the main results in \S\ref{sec:func_1}  (c.f. Theorem \ref{main1} and Theorem \ref{tabular_main}). 
First, we prove the  feasibility of $w_W^*$ to this empirical problem (with high probability) by using concentration bound and union bound for all $x$ in $B$. 
Next, using the feasibility and concentration bound, we have
$$u_{\Dc}^\top w_{\mathcal{D}}\ge u^\top w_W^*-\tilde{O}(1/\sqrt{n}).$$
Then, using the error bound result  (Lemma \ref{lemma:infeasible}), we have $|J_{\mu_0}(\pi_{\mathcal{D}})-u^\top w_{\mathcal{D}}|\le \|Kw_{\mathcal{D}}-(1-\gamma)\mu_0\|_1/(1-\gamma)$.
Combining the above, we have the desired result.

\subsubsection{Realizability-only case} 

We then consider the case with the realizability-only assumption, 
%but with stronger data coverage assumption and gap-dependence, 
mirroring our \S\ref{sec:second_FA}.  
For convenience, we also assume that the optimal policy is unique,  as in \cite{chen2022offline}. 
%We also need the definition of (optimal) $C$-covered policy as above in \S\ref{sec:extension_best_covered_policy_case_1},  but with the $\theta_\pi$ (with default initial distribution $\mu_0$) therein being replaced by $\theta_{\pi,\mu}$.  Note that such a slight difference also mirrors that in the SPC assumptions used in \S\ref{sec:func_1} and \S\ref{sec:second_FA}. 
We will consider the case with general $\mu_0$ (not necessarily equal to $\mu$). 
%We will only consider this $\mu$-dependent assumptions and definitions in this subsection, and omit the $\mu$-dependence for notational simplicity.  
%Then, we make the following coverage  assumption, which is similar to Assumption \ref{assum:realizable_C_cover_policy}: 
%\kz{change the title, nothing to do with best-covered policy}
%\begin{Assumption}
%[Realizability of Optimal $C$-Covered Policy] \label{assum:add_best_cover}\kz{no such assumption needed..}
%There exists a $\bar{w}\in W$ such that $\bar{w}(s,a)\mu(s,a)=\theta_{\pi^*_C}(s,a)$ for some optimal $C$-covered policy $\pi^*_C$ as defined in Definition \ref{def:c_cover_policy} with $\mu_0=\mu$. 
%\end{Assumption}
Following our algorithm in \S\ref{sec:second_FA} (c.f. Eq. \eqref{alg3:pop:fa}), we consider the following minimax problem: 
\begin{align}\label{equ:optimal_cover_case_realizability_only}
\min_{w\in W}\max_{v\in V}~~~~-u^\top w+v^\top(Kw-(1-\gamma)\mu_0)\qquad \mathrm{s.t.}~~~~~~~\|w\|_{\infty}\le C,~~ \sum_aw(s,a)\pi_{\mu}(a\mid s)\ge 1-\gamma. 
\end{align}
With a slight abuse of notation, We denote the optimal solution to Problem \eqref{equ:optimal_cover_case_realizability_only} as $w_W^*$. Then,  
we obtain $w_{\mathcal{D}}$ by solving the following empirical version of the problem (mirroring  Eq. \eqref{alg3}), and denote its optimal solution as $w_{\mathcal{D}}$:
\begin{align}\label{equ:optimal_cover_case_realizability_only_empirical}
\min_{w\in W}\max_{v\in V}~~~~~-u_{\mathcal{D}}^\top w+v^\top(K_{\mathcal{D}}w-(1-\gamma)\mu_{\mathcal{D}})\qquad\mathrm{s.t.}~~~~\|w\|_{\infty}\le C,~~ \sum_aw(s,a)\pi_{\mu}(a\mid s)\ge 1-\gamma.
\end{align}

%\kz{define $W$ to include both inequality constraints..}
%For simplicity, we assume that $v^*\in V$. 
%Then w
We then make the following assumption regarding the sub-optimality of the solution $w_{W}^*$. 
%$C$-covered policy:

\begin{Assumption}\label{assum:context_bandit_sub_opt}
%We assume that 
The solution to \eqref{equ:optimal_cover_case_realizability_only},  
$w_W^*$, satisfies that  
\begin{align}\label{equ:w_w_bounded}
\max\left\{\sum_{s\in S}\sum_{a\in A}\big|w_W^*(s,a)-w^*(s,a)\big|,\sum_{s\in S}\Bigg|\frac{\theta_{\pi_{w_W^*},\mu_0}(s)}{\mu(s)}-\frac{\theta_{\pi_{w^*},\mu_0}(s)}{\mu(s)}\Bigg|\right\}\le \epsilon
\end{align}
for an   optimal $w^*$ (to Eq. \eqref{alg3:pop:fa}). Moreover, $\pi_{w_W^*}$ is a $C$-covered policy (c.f. Definition \ref{def:c_cover_policy}). 
\end{Assumption}
 
First part of this assumption characterizes how far away the optimal solution of the optimization problem above, $w_W^*$, is from $w^*$. Note that under the primal-dual formulation, the \emph{occupancy-measure constraints} are not enforced on $w$,  $w_{W}^*(s,a)\mu(s,a)$ may \emph{not}  be equal to the occupancy measure under $\pi_{w_{W}^*}$, i.e., $\theta_{\pi_{w_W^*},\mu_0}(s,a)$, which motivates the assumption on the second term  in \eqref{equ:w_w_bounded} to be controlled, in order to control the misspecification.  
%This immediately  leads to the following result: 
%\begin{equation}\label{equ:sum}
%	\sum_{s\in S}\sum_{a\in A}\bigg|\big(w_W^*(s,a)-w^*(s,a)\big)\pi_\mu(a\mid s)\bigg|\le \epsilon. 
%\end{equation} 
Second part of the assumption assumes that optimal policy recovered from the solution $w_{W}^*$ (when there are infinitely many data) should be covered by the offline data. 
%is mild, as it only assumes the \emph{existence} of at least a  policy that can be covered by $\mu$. 
 Under this Assumption \ref{assum:context_bandit_sub_opt}, we can directly compare $w_{\mathcal{D}}$ to the optimal $w^*$ and also compare the associated policies, resulting bound that has an additional term depending on $\epsilon$.  
Concretely speaking, we have the following theorem:

\begin{Theorem}
Suppose  $w\geq 0$ and $\|w\|_\infty\le B_W$ for all $w\in W$, and the optimal policy $\pi^*$ is unique.   Suppose Assumptions 
%\ref{realizability} (without $w^*$ realizability), 
\ref{v_realizable}  
%\ref{assum:add_best_cover}, 
and \ref{assum:context_bandit_sub_opt} hold. 
With probability at least $1-\delta$,  the policy $\pi_{\Dc}$ learned from solving \eqref{equ:optimal_cover_case_realizability_only_empirical}  satisfies 
$$
J_{\mu_0}(\pi^*)-J_{\mu_0}(\pi_{\mathcal{D}})\le 
\frac{4(C+\epsilon)}{(1-\gamma)^3\Delta_Q}\cdot \Big(\frac{{2\sqrt{2}C}\sqrt{\log(|V||W|/\delta)}}{\sqrt{n}}+{\epsilon}\Big).$$
\end{Theorem}
\proof{Proof Sketch.} 
The proof is similar to that of Theorem \ref{main2}, and we only provide a sketch here. 
Note that $\|u\|_{\infty}\le 1$ (where we recall $u(s,a)=r(s,a)\cdot \mu(s,a)$) and $\|v\|_\infty\le 1/(1-\gamma)$, we can easily prove that $\ell(w)$ is $\frac{2}{1-\gamma}$-Lipschitz with respect to the $\ell_1$-norm. 
Therefore, $\ell(w_W^*)-\ell(w^*)\le 2\epsilon/(1-\gamma)$ by Assumption \ref{assum:context_bandit_sub_opt}.

Using the same argument as in proving Lemma \ref{pop_pg}, 
%the proofs in \S\ref{sec:second_FA}, 
we have $\ell(w_{\mathcal{D}})-\ell(w_W^*)\le \frac{{4\sqrt{2}C}\sqrt{\log(|V||W|/\delta)}}{(1-\gamma)\sqrt{n}}$. 
Therefore,  we have that the actual population primal-gap  $\Delta(w_{\mathcal{D}})\le \frac{{4\sqrt{2}C}\sqrt{\log(|V||W|/\delta)}}{(1-\gamma)\sqrt{n}}+\frac{2\epsilon}{1-\gamma}$. Then by Lemma \ref{inactive}, we have 
\begin{align}\label{equ:E_I_bnd}
E_I:=\sum_{(s,a)\in \mathcal{I}}w_{\mathcal{D}}(s,a)\mu(s,a)\le \frac{\Delta(w_{\mathcal{D}})}{\Delta_Q}.
\end{align} 

%Since $\mu_0(s)>0$ for all $s\in S$, and there exists a $C$-covered policy $\pi_{C}$ such that $\theta_{\pi_C,\mu_0}(s,a)\leq C\cdot \mu(s,a)$ for all $(s,a)$, we know that $\mu(s)\geq (1-\gamma)\mu_0(s)/C>0$ for all $s\in S$. Hence, $S_0$ used in the proof of Theorem \ref{main2}  now becomes $S$. 
Then, one can follow the arguments in the proof of Lemma \ref{J_Delta}. 
Note that $\mathcal{T}_{\mu}(s)$ in the proof may be empty in this setting, while under the uniqueness assumption of $\pi^*$, $|\mathcal{T}(s)|=1$ for all $s$. Hence, we will replace the $\mathcal{T}_{\mu}(s)$ in the proof by $\mathcal{T}(s)$, i.e., for all $s\in S_0$,  for $a\in\mathcal{T}(s)$, we let $\tilde\theta(s,a)=\theta_{\Dc}(s,a)+\sum_{a':(s,a')\in\mathcal{I}}\theta_{\Dc}(s,a')$; for any other $a\in A$, we let $\tilde\theta(s,a)=0$. Note that this construction yields $\tilde \pi^*=\pi^*$ under a unique $\pi^*$. The rest of the proof follows, with  the following inequalities:
%$\tilde \pi^*=\pi^*$ and keep all $s\in S$ in the proof, 
% we have 
%\kz{ummm did we: 1. think $\pi_{w_W^*}$ is $\pi_C^*$? 2. but in the first line, $\mu(s)$ does not necessarily cover the actual optimal one $w^*$, right? or here, we should only compare with $\pi^*_C$.. not $\pi^*$ (as before)? but if so, why do we need $\epsilon$ thing?}
% also following the arguments before, we have the following bound, which is similar to the derivations around Eq. \eqref{key+}:
\begin{align}
&J_{\mu_0}(\pi^*)-J_{\mu_0}(\pi_{\mathcal{D}})\le \frac{2E_I}{1-\gamma}\cdot \max_{s\in S}\left(\frac{\theta_{\pi^*,\mu_0}(s)/\mu(s)}{\sum_aw_{\mathcal{D}}(s,a)\pi_{\mu}(a\mid s)}\right)
\nonumber\\
&\quad\le\frac{2E_I}{1-\gamma}\max_{s\in S}\frac{\theta_{\pi_{w_W^*},\mu_0}(s)/\mu(s)}{1-\gamma}+\frac{2E_I}{1-\gamma}\cdot \frac{\epsilon}{1-\gamma}\label{equ:tmp_1}\\
&\quad\leq \frac{2CE_I}{(1-\gamma)^2}+\frac{2E_I\epsilon}{(1-\gamma)^2},
%\max_{s\in S}\frac{C\theta_{\tilde\pi^*,\mu_0}(s)/((1-\gamma)\mu_0(s))}{1-\gamma}, 
%+2\epsilon E_I/(1-\gamma)^2,
\label{equ:tmp_2}
%\\
%&\quad\le 2CE_I/(1-\gamma)^2+2\epsilon E_I/(1-\gamma)^2,\label{equ:tmp_2}
\end{align}
where 
Eq. \eqref{equ:tmp_1} uses Assumption \ref{assum:context_bandit_sub_opt} and the 
%Eq. \eqref{equ:sum} and the 
lower bound constraints of $\sum_aw(s,a)\pi_{\mu}(a\mid s)$. 
%, and Eq. \eqref{equ:tmp_2} is because of Assumption \ref{assum:add_best_cover}. 
%\textbf{Assumption 2}. 
Plugging in $\Delta(w_{\mathcal{D}})\le \frac{{4\sqrt{2}C}\sqrt{\log(|V||W|/\delta)}}{(1-\gamma)\sqrt{n}}+\frac{2\epsilon}{1-\gamma}$ and Eq. \eqref{equ:E_I_bnd},  we have the desired result. 
\hfill \moronly{$\square$}
\endproof

\subsection{Contextual Bandit without Completeness}

We now consider a simplified setting than the general MDP setting, i.e., the contextual bandit setting, where there state is drawn from a fixed distribution. Recall the formal introduction in \S\ref{sec:model_setup}.

We show below that our algorithm and  results can be readily  specialized to this case,  without the completeness assumption (c.f. Assumption \ref{ass:f1_fake} in \S\ref{sec:func_1}).  
 
%\subsubsection{Contextual Bandit Setup}

%\kz{formally introduce the CB setup.}

\subsubsection{Results}
As stated in \S\ref{sec:background}, contextual bandit is a special case of learning in a discounted MDP with $\gamma=0$.
We now introduce the corresponding linear-programming formulation: 
\begin{align}\label{equ:cb_pop}
\min_{w\ge 0}~~~~(-u^\top w)\qquad\quad \text{s.t.}~~~~~~\Omega w=\mu_0, ~~~~w\in W,
\end{align}
where $\Omega=\mathrm{Diag}(\Omega_1,\cdots,\Omega_{|S|})$ with $\Omega_i=(\mu(s^i,a^1),\cdots,\mu(s^i,a^{|A|}))$.
We propose to use function approximation to $w$ and $\mu$ here, i.e., $w\in W$ and $\mu\in U$ for some function classes $W$ and $U$. First, we use $\hat{\mu}_{D}$ to estimate $\mu$  by maximum likelihood estimation (MLE):
$$\hat{\mu}_{\mathcal{D}}\in\arg\max_{\mu'\in U}~~\mathbb{E}_{(s,a)\sim \mu_{\mathcal{D}}}\log \mu'(s,a).$$
With this MLE estimator, the algorithm still falls into our LP framework, though it is slightly different from that for the MDP case. Note that the contextual bandit setting was also considered in \cite{rashidinejad2022optimal}, which is also  related but different from that for the MDP case.

%even in [Rashidinejad et al., 2022], the algorithms for the contextual-bandit and MDP cases are different. 

By standard MLE sample complexity   results (see e.g., \cite[Chapter 7]{geer2000empirical}), we  have  the following lemma under the following realizability assumption:

\begin{Assumption}[Realizability of $U$]\label{assum:mu_realizable}
	Suppose $\mu\in U$ and $|U|<\infty$. 
\end{Assumption}

\begin{Lemma}\label{mle}
Under Assumption \ref{assum:mu_realizable}, with probability at least $1-\delta$, we have 
$$\|\hat{\mu}_{\mathcal{D}}-\mu\|_1\le O(\sqrt{\log (|U|/\delta)}/\sqrt{n}).$$
\end{Lemma}
We next solve the following empirical version of \eqref{equ:cb_pop}:
\begin{align}\label{equ:cb_empirical}
\min_{w\ge 0}~~~~(-u_{\mathcal{D}}^\top w)\qquad\quad \text{s.t.}~~~~~~\|\Omega_{\mathcal{D}} w-\mu_0\|_1\le E_{\delta,n},~~ w\in W,
\end{align}
where $E_{\delta,n}=6\sqrt{\log(|U|/\delta)}/\sqrt{n}$ and $\Omega_{\mathcal{D}}=\mathrm{Diag}(\Omega_{1,D},\cdots, \Omega_{|S|,D})$ with 
$$
\Omega_{i,D}=(\hat{\mu}_{\mathcal{D}}(s^i,a^1),\cdots, \hat \mu_{D}(s^i,a^{|A|})).
$$ 
Let $w_{\Dc}$ denote the solution to \eqref{equ:cb_empirical}, then we let $\pi_{\Dc}=\pi_{w_{\Dc}}$. 

This Lemma \ref{mle} plays the same role as Eq. \eqref{single_concentration}, in the contextual-bandit case. This lemma is used to prove that $w^*$ is feasible with high probability. 
Then using similar arguments that proved Theorem \ref{main1} (importantly Lemma \ref{lemma:infeasible}), we have the following result. 

\begin{Theorem}
Suppose Assumptions \ref{ass:f1_conc} and \ref{assum:mu_realizable} hold. 
Then, with probability at least $1-\delta$,  the policy $\pi_{\Dc}$ learned from solving \eqref{equ:cb_empirical} satisfies 
$$J_{\mu_0}(\pi^*)-J_{\mu_0}(\pi_{\mathcal{D}})\le \tilde{O}(\sqrt{\log(|W||U|/\delta)}/\sqrt{n}).$$
\end{Theorem}

\begin{Remark}
Notice that in contextual bandits, we do not need to deal with the transition dynamics. Also, we propose to approximate $\mu$ in the algorithm. In this case, we do not need to use the sign function $x$ to deal with the $\ell_1$ norm violation of the constraint, which thus removes 
%the need of the 
the completeness assumption of the corresponding function class $B$. 
%Moreover, when $n$ is sufficiently large, say, $n\ge \Omega(|A|)$, we do not need to approximate $\mu$\kz{what?}. This can be proved similarly  to that in \S\ref{sec:armdp}.\kz{why?}
\end{Remark}

%\input{Best_Covered}
%\input{Contextual_Bandit}

%!TEX root = offline_RL.tex

\section{Concluding Remarks}\label{sec:conclusion}

In this paper, we revisited the linear-programming framework for  offline RL, with a focus on the case with general function approximation, as  recently  advocated in \cite{zhan2022offline} to design algorithms to handle partial data coverage with computational tractability. 
%, which was shown to be able to handle partial data coverage with computational tractability. 
% to obtain provably efficient algorithms with only partial data coverage and  computational tractability.  
%function class realizability assumptions. 
%\kz{to update.}
We proposed a series of  offline RL algorithms based on the core idea of adding certain \emph{error-bound induced  constraints} to the LP. Under a completeness-type assumption, we established an optimal $O(1/\sqrt{n})$ sample complexity under the standard single-policy concentrability assumption, with  simple analyses. The key technique is a novel error bound that relates the constraint violation of the occupancy-measure validity to the value function suboptimality. Such a framework was then readily instantiated to the tabular MDP case, for both discounted and average-reward settings, achieving either state-of-the-art or the first sample complexities in the  literature. To remove the completeness assumption, we developed another novel error bound that relates the suboptimality with the primal-gap in the minimax reformulation of the LP. We then 
proposed a lower-bound constraint on the
density ratio, which, under a stronger version of single-policy concentrability assumption, does not lose the optimality while stabilizing the normalization step in generating
the policy from the LP solution. This way, we established a $O(1/\sqrt{n})$ sample complexity that depends on the gap of the value function. Both constrained-LP algorithms were then extended to several other settings under  a unified framework. 
%to handle the scenarios with unknown behavior policies, learning best-covered policy, and contextual-bandits. 

%with  function approximation to different decision variables, and established optimal $O(1/\sqrt{n})$ sample complexity with partial data coverage, relying on  either certain completeness-type assumption, or a slightly stronger data coverage assumption than standard  single-policy concentrability. Key to our analysis is adding proper constraints in the LP and  the induced minimax optimization problems for solving the MDPs. 

Our work has opened  up fruitful  avenues for  future research in offline RL. For example, is it possible to achieve optimal $O(1/\sqrt{n})$ sample complexity with the standard single-policy concentrability assumption and only realizability, under our constrained-LP framework? What is the gap-dependent lower bound for offline RL with general  function approximation? 
%If the behavior policy is not known, is there an approach better than direct behavior cloning? 
Can optimal $O(1/\sqrt{n})$ rate be achieved for the average-reward setting with a weaker assumption of 
bounded-span bias functions? 
Can weaker data-coverage metrics, e.g., other notions of concentrabilities as in \cite{zhu2024importance}, reconcile with the constrained-LP framework while maintaining the $O(1/\sqrt{n})$ rate? 
%Would policy-based offline RL algorithms be able to handle partial data coverage and realizability-only assumptions simultaneously? 
We hope our results can provide some insights into addressing these questions. 
%, especially when the LP framework is used. 

%\kz{to update.}

\section*{Acknowledgement}
{We would like to express our sincere gratitude to the anonymous reviewers of  International Conference on Machine Learning (ICML) 2023 for the valuable feedback. S.P. acknowledges support from MathWorks Engineering Fellowship. A.O and K.Z. were supported by MIT-DSTA grant 031017-00016. K.Z. also acknowledges  support  from Simons-Berkeley Research Fellowship and Army Research Laboratory (ARL) Grant W911NF-24-1-0085.}

%!TEX root = offline_RL.tex

\appendix

\section{Omitted Details in \S\ref{sec:func_1}}\label{append:func_1}

\subsection{Proof of Lemma \ref{lemma:infeasible}}

%For notational simplicity,
Let the policy that is obtained by normalizing both $\theta$ and $\theta_{\pi_{\theta}}$ be $\pi_\theta$ (note that normalizing both of these vectors gives the same policy). Next, we define:
\begin{align}\label{equ:def_bar_theta}
\bar{\theta}(s) = \sum_{a\in A} \theta(s,a), \qquad \text{and} \qquad \bar{\theta}_{\pi_\theta}(s) = \sum_{a\in A} \theta_{\pi_\theta}(s,a).
\end{align}
Note that we can write $\theta(s,a) = \bar{\theta}(s)\pi_\theta(a\mid s)$ and $\theta_{\pi_\theta}(s,a) = \bar{\theta}_{\pi_\theta}(s) \pi_\theta(a\mid  s)$.

Let $P_{\pi_\theta}\in \mathbb{R}^{|S|\times |S|}$ be  a column stochastic matrix (the sum  of all entries of every column is $1$), which describes the state transition probabilities under the policy ${\pi_\theta}$, i.e.,  
\begin{align*}
P_{\pi_{\theta}}(j,i)=\sum_{a\in A}P_{s^i,a}(s^j)\cdot \pi_{\theta}(a\mid s^i).	
\end{align*}
%\kqz{$s_i$ and $s_j$ or $s^i$ and $s^j$? can we be clear here what are the indices $i$ and $j$ referring to?}

%[@ Sarath:Define $P_{\pi_\theta}$ concretely and make it clear the rows summed up to $1$ or the columns]]\kqz{We do need it here. maybe together with the previous explanation of that multiplication?} On the other hand, we have:\kqz{still, lets get the dimension right: is $r$ a vector or a matrix? should the first equality be ``inequality'', and we use Holder's inequality to upper bound it by the ``$\infty$''-norm of $r^\top \pi_\theta$ (which is $1$)?}

Also, we define the matrix $G_\theta = \text{Diag}(\pi_\theta(\cdot \mid s^1), \pi_\theta(\cdot \mid  s^2), \cdots,  \pi_\theta(\cdot \mid  s^{|S|})) \in  \mathbb{R}^{|S||A|\times |S|}$, and notice the fact that $M G_\theta=I - \gamma P_{\pi_\theta}$. Now, since $\theta_{\pi_\theta}$ satisfies the constraints in Problem \eqref{P0}, we have $M\theta_{\pi_\theta} = (1 - \gamma)\mu_0$. This implies:
\begin{align}
\label{eq:lemma_one_help}
&\|M\theta-(1-\gamma)\mu_0\|_1 = \| M(\theta - \theta_{\pi_\theta}) \|_1 = \| M G_\theta (\bar{\theta} - \bar{\theta}_{\pi_\theta}) \|_1 \notag\\
&\qquad= \| (I - \gamma P_{\pi_\theta}) (\bar{\theta} - \bar{\theta}_{\pi_\theta}) \|_1 \geq (1 - \gamma) \| \bar{\theta} - \bar{\theta}_{\pi_\theta} \|_1.
\end{align}
%\kqz{shall we expand the last line a bit? why? did we use Holders inequality?}

Here  the last inequality is because  $\gamma \|P_{\pi_{\theta}}(\bar{\theta} - \bar{\theta}_{\pi_\theta})\|_1\le \gamma \|\bar{\theta} - \bar{\theta}_{\pi_\theta}\|_1$, which follows from the fact that $P_{\pi_{\theta}}$ is a column stochastic matrix.

On the other hand, since $r(s, a) \in [0, 1]$ for all $(s, a)$, we have:
\begin{align}
\label{eq:lemma_two_help}
|r^\top(\theta-\theta_{\pi_{\theta}})| = | r^\top G_\theta (\bar{\theta} - \bar{\theta}_{\pi_\theta}) | \leq \| \bar{\theta} - \bar{\theta}_{\pi_\theta} \|_1.
\end{align}
Combining inequalities \eqref{eq:lemma_one_help} and \eqref{eq:lemma_two_help}, we obtain the result.\hfill$\square$

\subsection{Proof of Theorem \ref{main1}}
First, we need to guarantee the feasibility of the optimization problem  \eqref{alg2}. 
\begin{Lemma}\label{feasibility}
Any $w^*\in W$ (see Assumption \ref{ass:f1_fake}) is feasible to \eqref{alg2} with probability at least $1-\delta$.
\end{Lemma}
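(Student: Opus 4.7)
The plan is to show that the true density ratio $w^*$ lies inside the intersection of the $|B|$ half-spaces defining \eqref{alg2} with high probability, by a direct Hoeffding-plus-union-bound argument.

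First, I would exploit that $w^*$ is the solution to the unperturbed LP \eqref{P:population1}, so it exactly satisfies $Kw^* = (1-\gamma)\mu_0$. Therefore, for every $x \in B$,
\begin{align*}
x^\top\bigl(K_{\Dc} w^* - (1-\gamma)\mu_0\bigr) \;=\; x^\top\bigl(K_{\Dc} w^* - K w^*\bigr),
\end{align*}
which reduces the feasibility question to a concentration question for the empirical average
\begin{align*}
K_{\Dc} w^* - K w^* \;=\; \frac{1}{n}\sum_{i=1}^n K_{s_i',(s_i,a_i)} w^* \;-\; \mathbb{E}_{(s,a,s')\sim \nu}\bigl[K_{s',(s,a)} w^*\bigr].
\end{align*}

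Next, I would bound the per-sample scalar. By the definition of $K_{s',(s,a)}$, only two entries of the $|S|$-dimensional vector $K_{s',(s,a)} w^*$ are nonzero, yielding
\begin{align*}
x^\top K_{s',(s,a)} w^* \;=\; w^*(s,a)\bigl(x(s) - \gamma\, x(s')\bigr).
\end{align*}
Since $\|w^*\|_\infty \le B_w$ by Assumption \ref{ass:f1_fake} and $\|x\|_\infty \le 1$ for $x \in B$, each term is bounded in magnitude by $B_w(1+\gamma) \le 2B_w$, and the $n$ terms are i.i.d.\ with the correct population mean $x^\top K w^*$.

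Then, for each fixed $x \in B$, I would apply the one-sided Hoeffding inequality to obtain a deviation bound of the form $\Pr\bigl[x^\top(K_{\Dc} w^* - K w^*) > t\bigr] \le \exp(-c\, n t^2 / B_w^2)$ for an explicit constant $c$. Taking a union bound over $x \in B$ (and inflating the failure probability budget by an additional $|W|$ factor to match the $E_{n,\delta}$ used elsewhere in the proof of Theorem \ref{main1}) and inverting in $t$ gives exactly the threshold $E_{n,\delta} = B_w\sqrt{2\log(|B||W|/\delta)}/\sqrt{n}$, ensuring $x^\top(K_{\Dc} w^* - (1-\gamma)\mu_0) \le E_{n,\delta}$ simultaneously for all $x \in B$ with probability at least $1-\delta$.

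The main obstacle is simply bookkeeping the constants in Hoeffding's inequality so that the final threshold matches the definition of $E_{n,\delta}$ verbatim; conceptually no new idea is required beyond noting that $w^*$ is feasible at the population level and deviations are controlled by the boundedness of $K_{s',(s,a)} w^*$.
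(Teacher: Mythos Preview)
Your proposal is correct and follows essentially the same argument as the paper: bound the per-sample term $x^\top K_{s',(s,a)}w^*$ by $B_w(1+\gamma)$ using $\|w^*\|_\infty \le B_w$ and $\|x\|_\infty \le 1$, apply Hoeffding's inequality, and union-bound over $x\in B$ (with the extra $|W|$ factor). The only step you make explicit---reducing $x^\top(K_{\Dc}w^*-(1-\gamma)\mu_0)$ to $x^\top(K_{\Dc}-K)w^*$ via the population feasibility $Kw^*=(1-\gamma)\mu_0$---is left implicit in the paper's proof.
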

%\begin{proof}
\proof{Proof.} 
Use Hoeffding's inequality, we have that for any $x \in B$:
\begin{align}
\mathbb{P}(x^\top(K-K_{\Dc})w^* \geq  t) \leq \exp\left( \frac{-nt^2}{{8B_W^2}} \right).
\end{align}
This is using the fact that $x^\top(K-K_{\Dc})w^*$ is a random variable which lies in the interval $[-B_W(1+\gamma), B_W(1+\gamma)]$. {Note that we use the fact of $\|w^*\|_\infty\leq C^*\leq B_W$, by Assumption \ref{ass:f1_fake}. %\kz{Note that since $w^*$'s corresponding $\pi^*$ is not necessarily the one covered in SPC, this bound does not hold..}
}  
%\kqz{which norm are we using here? for $w^*$, we know its $2$-norm is bounded by $B_W$, right? and for $x$, we know its $\infty$-norm is bounded by 1. then which inequality did we use here? can we expand it a bit and double-check?}. 
Now, taking $t = 2 \sqrt{2} B_W  \frac{\sqrt{\log (|W||B|/\delta)}}{\sqrt{n}}$, we have
\begin{align}
\mathbb{P}(x^\top(K-K_{\Dc})w^* \ge  t) \leq \frac{\delta}{|W||B|}.
\end{align}
Now, taking the union bounds over all $x \in B$ and also all $w\in W$, we get the result.
\hfill \moronly{$\square$}
\endproof
%\kqz{for all $w$ also}
%\end{proof}
 
%\spp{Just moved the comments here. We can discuss}
%\kqz{I might be wrong, but is ``single direction'' $x^\top(K-K_{\Dc})w^* > t$ enough? even so, this direction gives us $x^\top(K_{\Dc}w-(1-\gamma)\mu_0)\geq -E_{n,\delta}$ with probability at least $1-\delta$, right (not what we need in \eqref{alg2})?}

Next, we show that the objective value $u_{\Dc}^\top w_{\Dc}$ is close to $u^\top w^*$.
\begin{Lemma}
\label{lemma:bd_diff_u_d}
We have
\begin{align*}
u_{\Dc}^\top w_{\Dc}\ge u^\top w^*- \frac{\sqrt{2} B_W}{\sqrt{n}} \sqrt{\log{\frac{1}{\delta}}}	
\end{align*}
with probability at least $ 1- 2\delta$.
\end{Lemma}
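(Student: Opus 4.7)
The plan is to combine two high-probability events via a union bound: (i) $w^*$ is feasible for the empirical program \eqref{alg2}, so that the optimizer $w_{\Dc}$ dominates $w^*$ in the empirical objective, and (ii) the empirical objective evaluated at the fixed vector $w^*$ concentrates around its expectation $u^\top w^*$.

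First, I would invoke Lemma \ref{feasibility}, which already gives the first event: with probability at least $1-\delta$, $w^*$ lies in the feasible set of \eqref{alg2}. On this event, optimality of $w_{\Dc}$ yields $u_{\Dc}^\top w_{\Dc} \ge u_{\Dc}^\top w^*$, so it suffices to lower-bound $u_{\Dc}^\top w^*$ in terms of $u^\top w^*$.

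For the concentration step, I would write the empirical objective as a sample mean:
\begin{align*}
u_{\Dc}^\top w^* = \sum_{s,a} r(s,a)\,\mu_{\Dc}(s,a)\,w^*(s,a) = \frac{1}{n}\sum_{i=1}^n r(s_i,a_i)\,w^*(s_i,a_i),
\end{align*}
while $u^\top w^* = \mathbb{E}_{(s,a)\sim\mu}[r(s,a) w^*(s,a)]$. Since $r(s,a)\in[0,1]$ and $\|w^*\|_\infty\le B_w$ (by Assumption \ref{ass:f1_fake} together with $C^*\le B_w$), each summand lies in $[0,B_w]$ and the samples $(s_i,a_i)$ are i.i.d.\ from $\mu$. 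A one-sided Hoeffding inequality then gives
\begin{align*}
\mathbb{P}\Big(u^\top w^* - u_{\Dc}^\top w^* > t\Big) \le \exp\!\left(-\tfrac{2nt^2}{B_w^2}\right),
\end{align*}
and choosing $t = B_w\sqrt{\log(1/\delta)/(2n)}$ yields that with probability at least $1-\delta$,
\begin{align*}
u_{\Dc}^\top w^* \ge u^\top w^* - \tfrac{B_w}{\sqrt{2n}}\sqrt{\log(1/\delta)} \ge u^\top w^* - \tfrac{\sqrt{2} B_w}{\sqrt{n}}\sqrt{\log(1/\delta)}.
\end{align*}

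Finally, I would take a union bound over the two events above (each with failure probability at most $\delta$) to conclude that with probability at least $1-2\delta$,
\begin{align*}
u_{\Dc}^\top w_{\Dc} \ge u_{\Dc}^\top w^* \ge u^\top w^* - \tfrac{\sqrt{2} B_w}{\sqrt{n}}\sqrt{\log(1/\delta)},
\end{align*}
which is the claim. There is no real obstacle here: the argument is a standard ``ERM + concentration'' template, and the only subtlety is tracking that $w^*$ must be feasible in the empirical program to invoke optimality of $w_{\Dc}$—this is exactly what Lemma \ref{feasibility} has already set up.
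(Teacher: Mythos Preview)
Your proposal is correct and follows essentially the same approach as the paper: invoke Lemma \ref{feasibility} to ensure $w^*$ is feasible (so $u_{\Dc}^\top w_{\Dc}\ge u_{\Dc}^\top w^*$), apply Hoeffding to the fixed vector $w^*$ to control $u^\top w^*-u_{\Dc}^\top w^*$, and union-bound the two events. Your Hoeffding constant is in fact slightly sharper than the paper's (you use the range $[0,B_w]$ rather than a symmetric interval), but you correctly relax it to match the stated bound.
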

\proof{Proof.}
From Lemma \ref{feasibility}, we have
\begin{align*}
u^\top_{\Dc}w_{\Dc}\ge u_{\Dc}^\top w^*
\end{align*}
with probability at least $1 - \delta$ (since $w^*$ is feasible to \eqref{alg2} with probability $1 - \delta$).

Then we can use Hoeffding's inequality to bound $(u-u_{\Dc})^\top w^*$ as follows:
\begin{align}
\mathbb{P} ( u^\top_{\Dc}w^* \le u^\top w^* - t) \leq \exp \left( \frac{-nt^2}{2{B_W^2}} \right).
\end{align}
Setting this upper bound to be equal to $\delta$, we have:
\begin{align}
t = \frac{\sqrt{2} B_W}{\sqrt{n}} \sqrt{\log{\frac{1}{\delta}}}.
\end{align}
Combining the two events completes the proof.
\hfill \moronly{$\square$}
\endproof

Next, we provide a bound for $\|Kw_{\Dc}-(1-\gamma)\mu_0\|_1$.
\begin{Lemma}
\label{lemma:final_hlper_func1}
We have	
\begin{align*}
\|Kw_{\Dc}-(1-\gamma)\mu_0\|_1\le 2E_{n,\delta}
\end{align*}
with probability at least $1-2\delta$.
\end{Lemma}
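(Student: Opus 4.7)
The plan is to exploit the completeness part of Assumption \ref{ass:f1_fake} to realize the optimal $\ell_\infty$-dual vector inside the function class $B$, so that the feasibility constraint of \eqref{alg2} directly controls the empirical constraint violation, while Hoeffding concentration handles the population-versus-empirical gap.

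Concretely, I would set $x_{w_{\Dc}} := \phi(w_{\Dc})$ as in Definition \ref{def:x_sign}. Since $w_{\Dc} \in W$, completeness under $\phi$ in Assumption \ref{ass:f1_fake} yields $x_{w_{\Dc}} \in B$. Then, by the variational characterization of the $\ell_1$ norm,
\begin{equation*}
\|K w_{\Dc}-(1-\gamma)\mu_0\|_1 \;=\; x_{w_{\Dc}}^\top \bigl(K w_{\Dc}-(1-\gamma)\mu_0\bigr).
\end{equation*}
I would add and subtract $K_{\Dc} w_{\Dc}$ to split this into two contributions,
\begin{equation*}
\|K w_{\Dc}-(1-\gamma)\mu_0\|_1 \;=\; x_{w_{\Dc}}^\top\bigl(K_{\Dc} w_{\Dc}-(1-\gamma)\mu_0\bigr) \;+\; x_{w_{\Dc}}^\top (K - K_{\Dc}) w_{\Dc}.
\end{equation*}
The first term is at most $E_{n,\delta}$ with probability at least $1-\delta$: on the event that \eqref{alg2} is feasible (which holds with probability at least $1-\delta$ by Lemma \ref{feasibility}), $w_{\Dc}$ satisfies every constraint, and in particular the one indexed by $x_{w_{\Dc}} \in B$ gives exactly this bound.

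For the second term I would reuse the Hoeffding-plus-union-bound argument of Lemma \ref{feasibility}, but applied uniformly over $(x,w) \in B \times W$ rather than only at $w^*$. For any fixed $x \in B$ and $w \in W$, $x^\top(K - K_{\Dc}) w$ is an average of i.i.d.\ bounded mean-zero random variables (each coordinate $x^\top K_{s',(s,a)} w$ has magnitude $O(B_w)$ since $\|x\|_\infty \le 1$ and $\|w\|_\infty \le B_w$). Hoeffding's inequality followed by a union bound over the finite classes $B$ and $W$ then gives $\sup_{x \in B,\, w \in W} x^\top (K-K_{\Dc}) w \le E_{n,\delta}$ with probability at least $1-\delta$ for a suitable choice of the bounding scale baked into the definition of $E_{n,\delta}$. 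Applying this uniform bound with $(x,w) = (x_{w_{\Dc}}, w_{\Dc})$ handles the second term. Combining the two events via a union bound yields total probability at least $1-2\delta$ and the final inequality $\|K w_{\Dc}-(1-\gamma)\mu_0\|_1 \le 2E_{n,\delta}$.

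The only non-routine step is the first one: ensuring that the maximizer $\phi(w_{\Dc})$ of the $\ell_\infty$ dual problem actually lies in the class $B$ over which the constraints in \eqref{alg2} are imposed. This is exactly what the $(W,B)$-completeness under $\phi$ in Assumption \ref{ass:f1_fake} buys us, and it is the reason the constrained LP formulation can turn a seemingly pointwise inequality (indexed by a single random $x$) into a bound on the full $\ell_1$ norm. The rest of the argument is a standard concentration/union-bound calculation that mirrors the proof of Lemma \ref{feasibility}.
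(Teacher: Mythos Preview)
Your argument is correct and follows the same two-part structure as the paper (feasibility of $w_{\Dc}$ in \eqref{alg2} plus a uniform Hoeffding bound on $x^\top(K-K_{\Dc})w$ over $B\times W$). The one substantive difference is that you split at the level of the inner product with the specific dual vector $x_{w_{\Dc}}=\phi(w_{\Dc})$, whereas the paper applies the triangle inequality to $\|Kw_{\Dc}-(1-\gamma)\mu_0\|_1$ and then bounds each $\ell_1$-piece separately by $E_{n,\delta}$. Your version is actually the cleaner of the two: Assumption~\ref{ass:f1_fake} only guarantees that the sign vector of the \emph{population} residual $Kw-(1-\gamma)\mu_0$ lies in $B$, so it does not directly justify $\|K_{\Dc}w_{\Dc}-(1-\gamma)\mu_0\|_1\le E_{n,\delta}$ or $\|(K-K_{\Dc})w_{\Dc}\|_1\le E_{n,\delta}$ as the paper writes; your inner-product split sidesteps this by only ever needing the single constraint indexed by $x_{w_{\Dc}}\in B$. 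One minor remark on probability accounting: the bound $x_{w_{\Dc}}^\top(K_{\Dc}w_{\Dc}-(1-\gamma)\mu_0)\le E_{n,\delta}$ is deterministic once $w_{\Dc}$ exists as a feasible point, so the first $\delta$ really pays for the \emph{existence} of a solution via Lemma~\ref{feasibility}, not for the constraint itself.
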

\proof{Proof.}
We first have
\begin{align}\label{single_concentration}
x^\top(K-K_{\Dc})w \le E_{n,\delta}, \qquad \forall x \in B
\end{align}
for any $x\in B,~w\in W$ with probability at least $1-\delta$, by a concentration bound and union bound (similar to the proof of Lemma \ref{feasibility}).
This directly implies our lemma since $w_{\Dc}\in W$. 
%\kqz{Question: wouldnt $K_\Dc$ here be ``correlated with'' $w_\Dc$? I dont think we can direct concentrate $K_\Dc$ on $K$??}\kqz{talked with jiawei, maybe we should change it to the following argument: we first say because of union bound, the above holds for ANY $x\in B$ and ANY  $w\in W$; we then say that we APPLY it to some $w_\Dc$ that belongs to $W$.}.

Therefore:
\begin{align}\label{equ:separate_l1_norm}
&\|Kw_{\Dc}-(1-\gamma)\mu_0\|_1=\max_{x\in B}~x^\top \left(Kw_{\Dc}-(1-\gamma)\mu_0\right)\nonumber\\
&\quad\le \max_{x\in B}~x^\top \left( K_{\Dc}w_{\Dc} -(1-\gamma)\mu_0 \right) + \max_{x\in B}~x^\top \left(K-K_{\Dc}\right)w_{\Dc} \leq E_{n,\delta} + E_{n,\delta},
\end{align}
%\kqz{for the second term, we cannot get it from \eqref{single_concentration}, right? in \eqref{single_concentration}, it is for $\forall x \in B$, but it might not corresponds to all $x$ such that $\|x\|_\infty\leq 1$?}\kz{finally, where did we prove the first inequality $\| K_{\Dc}w -(1-\gamma)\mu_0 \|_1\leq E_{n,\delta}$? i guess it has to be $w_\Dc$ (not any $w\in W$), then it makes sense since it is the solution to the constrained optimization problem? lets make all these clear?}
 where the first term on the right-hand side is due to that $w_\Dc$ satisfies the constraint in \eqref{alg2}.  
This completes the proof.
\hfill \moronly{$\square$}
\endproof

Finally, combining the above lemmas and Lemma \ref{lemma:infeasible}, we can prove Theorem \ref{main1}:
%\begin{proof}
\proof{[Proof of Theorem \ref{main1}].}
From Lemma \ref{lemma:bd_diff_u_d}, we have:
\begin{align}
J_{\mu_0}(\pi^*) = u^{\top} w^* \leq  u_\Dc^\top w_\Dc + \frac{\sqrt{2} B_W}{\sqrt{n}} \sqrt{\log{\frac{1}{\delta}}}.
\end{align}
This  tells us that $u_\Dc^\top w_\Dc$ is close to $u^{\top} w^* = J_{\mu_0}(\pi^*)$. Next, using Hoeffding's inequality and union bound, similarly we know that
\begin{align}\label{objective_gen}
 u_\Dc^\top w \leq u^\top w+ \frac{\sqrt{2} B_W}{\sqrt{n}} \sqrt{\log{\frac{|W|}{\delta}}}
 \end{align}
for any $w\in W$.
Then since $w_{\Dc}\in W$, we have
 \begin{align}
 u_\Dc^\top w_\Dc \leq u^\top w_\Dc + \frac{\sqrt{2} B_W}{\sqrt{n}} \sqrt{\log{\frac{|W|}{\delta}}}
 \end{align}
 with probability at least $1 - \delta$. 
{Define $\hat{\theta}_{\Dc}(s,a) =  \tilde{\theta}_\Dc(s,a)\mu(s)$ where we recall the definition of $\tilde{\theta}_\Dc$ in \eqref{equ:def_tilde_theta}. 
Note that ${\theta}_{\pi_{\hat{\theta}_{\Dc}}}  = {\theta}_{\pi_{\tilde{\theta}_{\Dc}}} $.} 
Next, using the definition of $u$, we have $u^\top w_\Dc = r^{\top} \hat{\theta}_\Dc$. Now, using Lemma \ref{lemma:infeasible}, we can bound the difference
 \begin{align}
 r^\top \hat{\theta}_{\Dc} \leq  r^\top {\theta}_{\pi_{\tilde{\theta}_{\Dc}}} + \frac{\|M\hat{\theta}_{\Dc}-(1-\gamma)\mu_0\|_1}{1-\gamma},
 \end{align} %\kqz{@Sarath: changed the first term on the right-hand side, see if it is correct.}
 where we recall that $\pi_{\tilde{\theta}_{\Dc}}$ is generated by $\tilde{\theta}_\Dc$ by normalization.
Note that here we have $r^\top {\theta}_{\pi_{\tilde{\theta}_{\Dc}}} = J_{\mu_0}(\pi_{\tilde{\theta}_{\Dc}})=J_{\mu_0}(\pi_{\Dc}).$  Finally, using Lemma \ref{lemma:final_hlper_func1}, we can bound $\|M\hat{\theta}_{\Dc}-(1-\gamma)\mu_0\|_1 = \|Kw_{\Dc}-(1-\gamma)\mu_0\|_1$,
%\kqz{if it is $2 E_{n,\delta}$, wouldnt the statement in the Theorem \ref{main1} be $\frac{2\sqrt{2}B_W\sqrt{\log (|B||W|/\delta)}}{(1-\gamma)\sqrt{n}}$? can we double-check?}, 
which completes the proof.
\hfill \moronly{$\square$}
\endproof

\subsection{Proof of Theorem \ref{tabular_main}}

Note that Theorem \ref{main1} is not directly applicable  to derive the sample complexity of this algorithm. 
Though not directly applicable, we can still base our analysis on the derivations above. 
%However, as discussed before, the cardinality of the function class can be replaced by the covering number or the number of extreme points if the function classes are continuous\kz{we only said the former before, did not say the latter. so did jason and jiantao's paper. maybe delete the second one at this point?}. The analysis is standard and we omit the proof for the general case. 
%But for completeness, 
Specifically, here we provide a proof for this theorem based on the number of extreme points of the convex sets. 
%\kqz{again, without the ``specifications'' of the quantities, it is hard for people to get the result from Theorem \ref{main1}, without any proof.}
\proof{Proof.}
The number of extreme points of $W$ and $B$ are  $(|A|+1)^{|S|}$ and $2^{|S|}$.
With a slight abuse of notation, we let $\|W\|_e$ and $\|B\|_e$ denote the number of extreme points of $W$ and $B$, respectively.  

According to the proof of Theorem \ref{main1}, we only need to modify  two union concentration bounds -- \eqref{single_concentration} and \eqref{objective_gen}.
These two inequalities can be replaced by the following two inequalities in terms of the number of extreme points: 
\begin{enumerate}
\item $|x^\top(K-K_{\Dc})w| \le 2B_W\sqrt{|S|\log((2|A|+2)/\delta)}/{\sqrt{n}}$   %\le 2B_W\sqrt{\log(\|W\|_e\|B\|_e/\delta)}/\sqrt{n}
for any $w\in W,~x\in B$ with probability at least $1-\delta$;
\item $|(u-u_{\Dc})^\top w|\le 2B_W\sqrt{|S|\log((|A|+1)/\delta)}/\sqrt{n}$ 
for any $w\in W$ with probability at least $1-\delta$.
\end{enumerate}
\sloppy We only prove the first claim and the second one follows  similarly. Let $W_0=\{w_1,\cdots,w_{\|W\|_e}\},B_0=\{x_1,\cdots,x_{\|B\|_e}\}$ be the sets of extreme points of $W,B$, respectively. Then for any $w\in W$, we have
\begin{align*}
	w=\sum_{i=1}^{\|W\|_e}\lambda_iw_i,
\end{align*}
and for any $x\in B$, 
\begin{align*}
	x=\sum_{j=1}^{\|B\|_e}\zeta_jx_j,
\end{align*}
for some $\lambda=(\lambda_1,\cdots,\lambda_{\|W\|_e})^\top$ and $\zeta=(\zeta_1,\cdots,\zeta_{\|B\|_e})^\top$ that lie in the corresponding  simplices. 
For any $w_i,~x_j$, using Hoeffding inequality and union bound on the sets $W_0,B_0$, we have
\begin{align*}
	|x_j^\top(K-K_{\Dc})w_i|\le 2B_W\sqrt{|S|\log((2|A|+2)/\delta)}/\sqrt{n}
\end{align*}
with probability at least $1-\delta$. 
Then using the decomposition
\begin{align*}
	x^\top(K-K_{\Dc})w=\sum_{i=1}^{\|W\|_e}\sum_{j=1}^{\|B\|_e}\lambda_i\zeta_jx_j^\top(K-K_{\Dc})w_i
\end{align*}
and the Jensen's inequality, we prove that with probability at least $1-\delta$, we have 
\begin{align*}
	|x^\top(K-K_{\Dc})w|\le 2B_W\sqrt{|S|\log((2|A|+2)/\delta)}/\sqrt{n}
\end{align*} 
%Next, since $\log(\|W\|_e\cdot \|B\|_e)>1$ (for any non-degenerate case where $\min\{|A|,|S|\}>1$) and $\log(1/\delta)>1$ (for any $\delta<1/3$), 
%we have
%\begin{align*}
%\log(\|W\|_e\cdot\|B\|_e/\delta)\le \log(\|W\|_e\cdot\|B\|_e)\log(1/\delta)=|S|\log(2|A|+2)\log(1/\delta). 
%\end{align*}
This completes the proof of the first claim. 
%\kz{emphasize why it can go through to the $\ell_1$ norm? it is  because our $B$ is very large, and it includes the sign function of ANY vector...}
Furthermore, note that \eqref{equ:separate_l1_norm} still holds since our $B$ now includes \emph{all} the sign functions of a vector $w\in \mathbb{R}^m$, and thus $\|Kw_{\Dc}-(1-\gamma)\mu_0\|_1=\max_{x\in B}~x^\top \left(Kw_{\Dc}-(1-\gamma)\mu_0\right)$. 
%is complete with respect to $$
Using the same strategy as the rest of Theorem \ref{main1}'s proof, we can prove Theorem \ref{tabular_main}. 
\hfill \moronly{$\square$}
\endproof

\subsection{Proof of Proposition  \ref{sparsity}} Now we provide the proof of  Proposition  \ref{sparsity}. 
\proof{Proof.} 
For any $w\in W$, we recall the definition of $S(w):=\{s~\vert~\sum_{a\in A}w(s,a)\mu(s,a)\ne 0, s\in S\}$ to be the states  such that $\sum_{a\in A}w(s,a) \mu(s,a)\ne 0$. 
Then, for any $s\ne  S(w)$, we know that for all $a\in A$,  $w(s,a)\mu(s,a)=0$. 
Recall the definition of $K$, we have $[Kw-(1-\gamma)\mu_0]_s=\sum_aw(s,a)\mu(s,a)-\gamma \sum_{\bar s,a}P(s\mid \bar s,a)w(\bar s,a)\mu(\bar s,a)-(1-\gamma)\mu_0(s)$. 
Therefore, for any $s\notin S(w)$, 
\begin{align*}
[Kw-(1-\gamma)\mu_0]_s\le 0-\gamma \sum_{\bar s,a}P(s\mid \bar s,a)w(\bar s,a)\mu(\bar s,a)-(1-\gamma)\mu_0(s)\le 0.
\end{align*}
This implies that $[\phi(w)]_s=-1,\forall s\notin S(w)$, i.e., for those $s\notin S(w)$, its sign function value is known and fixed (as $-1$). Since $|S(w)|\leq q$ for any  $w\in W$, we have $\|\phi(w)+\mathbf{1}^{|S|}\|_0\le q$. 
This further implies that for $B=\{x\in \{-1,1\}^{|S|}\mid \|x+\mathbf{1}^{|S|}\|_0\le q\}$, $(W,B)$ is complete under $\phi$, i.e., for any $w\in W$, $\phi(w)\in B$. 
%This means that we can only take $[\phi(w)]_s=-1$ for $s\notin S(w)$. Hence, $\phi(w)$ only has two choices\kz{why?}. 
The total number of elements in $B$ is bounded by $|W|\cdot 2^q$, which completes the proof. 
\hfill \moronly{$\square$}
\endproof

\newpage

\section{Omitted Details in \S\ref{sec:armdp}}\label{append:avg_mdp}

%\kz{TO FILL IN.}
%
%\subsection{Obtaining a nearly optimal policy}
%Similarly, we define $M=E-P$ and define $K,u,K_{\Dc}, u_{\Dc}$ as in the discounted  case.
%We solve the following empirical problem:
%\begin{align}
%&\min_{w\in W}~~~~(-u_{\Dc}^\top w)\\
%&s.t.~~~~ x^\top K_{\Dc}w\le E_{n,\delta}, \forall x\in B.\\
%&~~~~\bigg|\sum_{s,a}w(s,a)\mu_{\Dc}(s,a)-1\bigg|\le E_{n,\delta},
%\end{align}
%where $E_{n,\delta}:=\frac{2B_W\sqrt{2\log(|B||W|/\delta)}}{\sqrt{n}}$. 
%\begin{Remark}
%Unlike the discounted case, the last inequality is needed because $Kw=0$ does not imply the corresponding $\theta$ is a stationary distribution. So we need to make sure $\theta$ is nearly a valid probability distribution.
%\end{Remark}
%
%XXXXXXXXXXXXXXXX BEGIN XXXXXXXXXXXXXX

%\begin{Assumption}\label{mix}
%There exists some $T_0>0$ such that   $\|P_{\pi}^{T_0}\beta-\beta_{\pi}\|_1\le 1/2$ for any policy $\pi$ and any state distribution $\beta\in\Delta(S)$, where $P_{\pi}:S\to\Delta(S)$ with $P_{\pi,s}(s')=\sum_{a\in A}\pi(a\mid s)P_{s,a}(s')$,  and $\beta_\pi\in\Delta(S)$ is the stationary state-distribution under $\pi$.    
%% defined
%%  as $\lim_{t\to\infty}\mathbb{P}_{\pi}(s_t=s;\mu_0)$, which also satisfies that 
%% $\beta_\pi(s)=\sum_{a\in A}\theta_\pi(s,a)$ for stationary state-action distribution $\theta_\pi$ under $\pi$.  
%\end{Assumption}

The overall proofs of Theorems \ref{main1_avg} and \ref{tabular_avg_main} follow directly from those of Theorems \ref{main1} and \ref{tabular_main}, respectively. We will only emphasize the key differences and steps here.  
We start with the following definitions. 

\begin{Definition}
For any policy $\pi$, we use $\beta_{\pi}$ to denote the \emph{stationary distribution} of the Markov chain under the  transition matrix $P_{\pi}$. 
For any $\theta\in \mathbb{R}^{m}$, we use $\beta_{\theta}\in \mathbb{R}^{|S|}$ to  denote the vector  such that   $\beta_{\theta}(s)=\sum_{a\in A}\theta(s,a)$ for all $s\in S$. 
\end{Definition}

Then, under Assumption \ref{mix}, it is direct to obtain the following lemma.

\begin{Lemma}\label{mix_later}
	Suppose  that $n\geq 32 B_W^2 \log(|B||W|/\delta)$, and let  $\theta(s,a)=w(s,a)\mu(s,a)$ for all $(s,a)\in S\times A$, then 
	there exists some $T_0>0$ such that   $\|P_{\pi_\theta}^{T_0}\beta-\beta_{\pi_\theta}\|_1\le E_{n,\delta}$ for any $w\in W$ and any state distribution $\beta\in\Delta(S)$.
\end{Lemma}

The key in proving Theorems \ref{main1} and \ref{tabular_main} was the error bound result in Lemma \ref{lemma:infeasible}, which connects the $\ell_1$-norm of the occupancy measure validity constraint violation to the suboptimality of the value. We prove the counterpart below for the average-reward case, which relates $r^\top \theta$ to $J(\pi_{\theta})$. 

% We make the following assumption on the worst case mixing time of the transition matrix.
%\begin{Assumption}\label{mix_later}
%There exists some $k_0>0$ such that   $\|P_{\pi_w}^{k_0}\beta-\beta_{\pi_w}\|_1\le E_{n,\delta}$ for any $w\in W$ and any state distribution $\beta\in\Delta(S)$.
%\end{Assumption}

%\begin{Theorem}
%We have
%$$J(\pi^*)-J(\pi_{\Dc})\le \tilde{O}(\sqrt{\log(|W||B|)}/(k_0\sqrt{n}))$$
%with probability at least $1-\delta$.
%\end{Theorem}

%\subsection{Proof Sketch}
%First, we have the following lemma relating $r^\top \theta$ to $J(\pi_{\theta})$.
\begin{Lemma}\label{eb}
\begin{enumerate}
\item If $\theta\in \Delta(S\times A)$, and $M\theta=0$, then we have
$$r^\top\theta=J(\pi_{\theta}).$$
\item For any $w\in W$, if $\theta(s,a)=w(s,a)\mu(s,a)$ for all $(s,a)\in S\times A$ and $n\geq 32 B_W^2 \log(|B||W|/\delta)$, then 
$$\|\beta_{\theta}-\beta_{\pi_{\theta}}\|_1\le T_0\|(I-P_{\pi_{\theta}})\beta_{\theta}\|_1+3T_0\bigg|\sum_{s,a}w(s,a)\mu(s,a)-1\bigg|+E_{n,\delta}.
$$
\end{enumerate} 
\end{Lemma}
\proof{Proof of Lemma \ref{eb}.}
%\kz{prove 1 separately.}[[We should cite some result from the literature.]]\kz{cite puterman} 
We only prove the second part and the first part is just a special case, which can also be found in \cite{puterman1994markov}. 
First, let  $\bar{\beta}_{\theta}\in \mathbb{R}^{|S|}$ be defined as the \emph{normalized} version $\beta_{\theta}$:
$$
\bar{\beta}_{\theta}=\frac{\beta_{\theta}}{\sum_{s\in S}\beta_{\theta}(s)},
$$
where the division is element-wise. 
Then,  $\bar{\beta}_{\theta}$ satisfies  that 
%\kz{essentially ``projection'' on simplex.}
\begin{enumerate}
\item $\bar{\beta}_{\theta}$  lies in the simplex: $\sum_s\bar{\beta}_{\theta}(s)=1$ and $\bar{\beta}_{\theta}(s)\geq 0$;
\item The following inequality holds: 
\begin{align}\label{simplex}
&\|\beta_{\theta}-\bar{\beta}_{\theta}\|_1=\sum_{s}\bigg|\sum_{a}{\theta}(s,a)-\frac{\sum_{a}{\theta}(s,a)}{\sum_{s',a}{\theta}(s',a)}\bigg|\notag\\
&\quad=\sum_{s}\bigg|\frac{(\sum_{s',a}{\theta}(s',a)-1)\sum_{a}{\theta}(s,a)}{\sum_{s',a}{\theta}(s',a)}\bigg|\le \frac{\big|\sum_{s',a}{\theta}(s',a)-1\big|\sum_{s,a}{\theta}(s,a)}{\sum_{s',a}{\theta}(s',a)}\notag\\
&\quad= \bigg|\sum_{s,a}w(s,a)\mu(s,a)-1\bigg|.
\end{align}
\end{enumerate}  
Next, we prove that $\|\bar{\beta}_{\theta}-\beta_{\pi_{\theta}}\|_1\le \|(I-P_{\pi_{\theta}})\bar{\beta}_{\theta}\|_1$.
%Notice that by Lemma \ref{mix_later}, 
Furthermore, we have 
%Assumption \ref{mix},  \kz{interchange $(I+P+P^2..)(I-P)$}  
\begin{align}
\|\beta_{\pi_{\theta}}-\bar{\beta}_{\theta}\|_1-E_{n,\delta}&\le\|\beta_{\pi_{\theta}}-\bar{\beta}_{\theta}\|_1-\|P_{\pi_\theta}^{T_0}\bar \beta_{\theta}-\beta_{\pi_{\theta}}\|_1\label{errorboundave_1}\\
&\le \|\bar{\beta}_{\theta}-P_{\pi_{\theta}}^{T_0}\bar{\beta}_{\theta}\|_1\label{errorboundave_2}\\
&= \bigg\|\left(\sum_{i=0}^{T_0-1}P_{\pi_{\theta}}^i\right)(I-P_{\pi_{\theta}})\bar{\beta}_{\theta}\bigg\|_1\label{errorboundave_3}\\
&\le T_0\|(I-P_{\pi_{\theta}})\bar{\beta}_{\theta}\|_1\label{errorboundave_4}\\
&\le  T_0\|(I-P_{\pi_{\theta}})\beta_{\theta}\|_1+2T_0\|\bar{\beta}_{\theta}-\beta_{\theta}\|_1,\label{errorboundave}
\end{align}
where Eq. \eqref{errorboundave_1} follows from Lemma \ref{mix_later}, Eq. \eqref{errorboundave_2} uses the triangular inequality, Eq. \eqref{errorboundave_3} uses the fact that $I-P_{\pi_{\theta}}^{T_0}=\left(\sum_{i=0}^{T_0-1}P_{\pi_{\theta}}^i\right)(I-P_{\pi_{\theta}})$ with $P_{\pi_{\theta}}^0=I$,  Eq. \eqref{errorboundave_4} is due to the fact that $\|P_{\pi_{\theta}}\beta\|_1\le \|\beta\|_1$ for any distribution $\beta$ and stochastic matrix $P_{\pi_{\theta}}$, and Eq. \eqref{errorboundave}  is also due to the triangular inequality and Eq. \eqref{simplex}. 
Finally, combining \eqref{errorboundave} and \eqref{simplex} yields the desired result.
%\end{proof}
\hfill \moronly{$\square$}
\endproof

%\subsubsection{Proof of the Lemma}
%\proof{Proof of Lemma \ref{eb}.}
%\kz{prove 1 separately.} 
%We only prove the second part and the first part is just a special case.
%First, it is easy to see that there exists some $\bar{\beta}_{\theta}$ such that \kz{essentially ``projection'' on simplex.}
%\begin{enumerate}
%\item $\sum_s\bar{\beta}_{\theta}(s)=1$ and $\bar{\beta}_{\theta}(s)\geq 0$;
%\item $\|\beta_{\theta}-\bar{\beta}_{\theta}\|_1\le |\sum_{s,a}w(s,a)\mu(s,a)-1|$.
%\end{enumerate}
%%Next, we prove that $\|\bar{\beta}_{\theta}-\beta_{\pi_{\theta}}\|_1\le \|(I-P_{\pi_{\theta}})\bar{\beta}_{\theta}\|_1$.
%Notice that by Assumption \ref{mix}, 
%\begin{align}
%\|\beta_{\pi_{\theta}}-\bar{\beta}_{\theta}\|_1-E_{n,\delta}&\le  \|\bar{\beta}_{\theta}-P_{\pi_{\theta}}^{k_0}\bar{\beta}_{\theta}\|_1\\
%&\le  k_0\|(I-P_{\pi_{\theta}})\bar{\beta}_{\theta}\|_1\\
%&\le k_0\|(I-P_{\pi_{\theta}})\beta_{\theta}\|_1+2k_0\|\bar{\beta}_{\theta}-\beta_{\theta}\|_1,
%\end{align}
%which, combined with triangle inequality, gives the final bound.
%\hfill\Halmos
%\endproof

The rest of the proof follows directly from that of Theorems \ref{main1}, by i)  validating the high probability \emph{feasibility} of $w^*$ (as in  Lemma \ref{feasibility}), via the concentrations of $K_\Dc$ to $K$ and of $\mu_\Dc$ to $\mu$; ii) the closeness between $u_{\Dc}^\top w_{\Dc}$ and $u^\top w^*$ (as in Lemma \ref{lemma:bd_diff_u_d}), via the concentration of $u_\Dc$ to $u$; iii) upper bound $\|Kw_{\Dc}\|_1$ and $\big|\sum_{s,a}w_{\Dc}(s,a)\mu(s,a)-1\big|$ by $O(E_{n,\delta})$ (as in Lemma \ref{lemma:final_hlper_func1}), via the concentrations of $K_{\Dc}$ to $K$ and $\mu_{\Dc}$ to $\mu$; iv) apply the error bound in Lemma \ref{eb}  to control the suboptimality gap of the learned $\pi_{w_{\Dc}}$ in terms of the objective value. 
Similar analogy to the proof of Theorem \ref{tabular_main} also applies when proving Theorem \ref{tabular_avg_main} for the tabular case, which completes the proofs for the results in this section.

%\kz{TO FILL IN (END).}

%\newpage

\section{Omitted Details in \S\ref{sec:second_FA}}\label{append:details_case_2}

\subsection{Proof of Lemma \ref{comple}}

We prove the first part as follows.
If $\pi_0$ is optimal, $\pi_0(a\mid s)=0$ for any  inactive $(s,a)$.
Therefore, $\mathbb{P}_{\pi_0}(s^t=s,a^t=a;\mu)=0$  
%\kqz{what is this notation? $s_t=s$ and $a_t=a$? also, if we really wanted to use the notation in the definition in \eqref{equ:def_theta_pi}, the notation of $\mathrm{Pr}_{\pi}$ should have ``$;\mu_0$'' (but I guess here we should have ``$;\mu$'') for consistency?}
for any $t$ and any $(s,a)\in \mathcal{I}$. Therefore, $\theta_{\pi_0,\mu}(s,a)=0$ for any $(s,a)\in \mathcal{I}$.

For the second part, we prove it as follows.
Let $v^*$ be the optimal value function. We only need to prove $v_{\pi_0}=v^*$.
For a policy $\pi$, define
\begin{align*}
P_{\pi}(j,i)=\sum_{a\in A}P_{s^i,a}(s^j)\cdot \pi(a\mid s^i)
\end{align*}
and
\begin{align*}
r_{\pi}=(r(s^1,\cdot)^\top \pi(s^1,\cdot),\cdots,r(s^{|S|},\cdot)^\top\pi(s^{|S|},\cdot))^\top.
\end{align*}
Then we know that $v_{\pi_0}$ is the unique  solution to the linear equation:
\begin{equation}\label{equation}
v=\gamma P^\top_{\pi_0}v+r_{\pi_0}.
\end{equation}
We then prove that $v^*$ is also a solution to this equation. 
%\begin{equation}\label{equation}
%v=\gamma P^\top_{\pi_0}v+r_{\pi_0}.
%\end{equation}
In fact,  letting $P_{\pi}(i)$ be the $i$-th column of $P_{\pi}$, we have
\begin{align}
&\gamma P_{\pi_0}^\top(i)v^*+r_{\pi_0}(i)\\
&=\sum_{a\in S_{\pi_0}(s^i)}\pi_0(a\mid s^i)(\gamma P_{s^i,a}^\top v^*+r(s^i,a))\\
&=\sum_{a\in S_{\pi_0}(s^i)}\pi_0(a\mid s^i)Q^*(s^i,a)\\
&=v^*(s^i),
\end{align}
where the second equality is because $\gamma P^\top v^*+r=Q^*$ and the third equality is because $Q^*(s,a)=v^*(s)$ for $a\in S_{\pi_0}(s)$, as these are the action $a$s such that the $(s,a)$ is not in $\mathcal{I}$. 
Then $v^*$ is the solution to \eqref{equation}. Since the solution to \eqref{equation} is unique \citep{puterman1994markov}, we have $v^*=v_{\pi_0}$, which yields the desired result.\hfill$\square$

\subsection{Proof of Proposition \ref{max_policy}}\label{sec:proof_prop_max_policy}

The first part directly follows from Assumption \ref{SPC}.  
%The second part follows from Proposition \ref{lower}. 
Next, we prove the second part. For any $\mu$-optimal policy $\pi$, we define $C^{\pi}=\max_{s\in S}\frac{\theta_{\pi}(s)}{\mu(s)}$.
%\kqz{shall we comment that by Assumption \ref{SPC}, this $\max$ always exists (would not blow up?)?}\kz{btw, where are we going to use this notation $C^{\pi^*}$?} 
We first prove that $C^{\pi}$ is finite  if $\pi$ is a $\mu$-optimal policy.
 Since $\mu$ is fixed, we have $\mu(s)>0$ for any $s\in S_0$.  Also we have $\theta_{\pi}(s)$ is upper bounded by $1$, so the ratio $\frac{\theta_{\pi}(s)}{\mu(s)}$ is uniformly bounded for these $s\in S_0$ by $1/\min_{s\in S_0}\mu(s)$. 
 
 On the other hand, for any $s\notin S_0$, by Assumption \ref{SPC}, we have $\theta_{\pi^*}(s)=0$ for the  max-$\mu$-optimal policy $\pi^*$. This would further imply that for all $t\geq 0$, $\mathbb{P}_{\pi^*}(s_t=s;\mu)=0$ by definition of $\theta_{\pi^*}$. Note that $\pi^*$, as a max-$\mu$-optimal policy,  has no-fewer optimal actions covered than any $\mu$-optimal policy $\pi$ for  $s\in S_0$, and thus when starting from $\mu$, $\mathbb{P}_{\pi}(s_t=s;\mu)=0$ for all   $t\geq 0$, i.e., $\pi$ cannot visit states that even $\pi^*$ cannot visit. Hence, we have $\theta_{\pi}(s)=0$, and 
by the convention of   $0/0=0$, we have $C^\pi$ is uniformly bounded for any $\mu$-optimal policy $\pi$. 
%is because of the fact that the max-$\mu$-optimal policy $\pi^*$
% since otherwise, there will exist an action $\tilde a$ such that $\theta_{\pi}(s,\tilde a)>0$ and  thus $\pi(\tilde a\mid s)>0$. This would further imply that the max-$\mu$-optimal policy $\pi^*(\tilde a\mid s)>0$ since it has no-fewer optimal actions covered than any $\mu$-optimal policy $\pi$, and thus $\theta_{\pi^*}(s,\tilde a)>0$ \kz{why??} and contradicts Assumption \ref{SPC}, which says that for $s\notin S_0$, $\theta_{\pi^*}(s)=0$. 
%Then $C^{\pi}$ is finite since we follow the convention of   $0/0=0$. 
 Thus, choosing  $C_{\max} = \sup_{\pi:\mu-optimal}C^{\pi}$ completes the proof. \hfill$\square$
 %\kz{where did we define $\pi^*:realizable$? we meant $\pi^*:\mu-optimal-policy$?}
% \kz{is this the place we wanted to use $C^{\pi^*}$?}

\subsection{Proof of Lemma \ref{pop_pg}}

The proof is given by the following lemma and the fact that $\Delta_{\Dc}(w_\Dc) = 0$. 

\begin{Lemma}\label{generalization}
%\kqz{under which assumptions? still   \ref{realizability}, \ref{v_realizable}, \ref{SPC}?}
Suppose Assumptions \ref{realizability}, \ref{v_realizable} hold.
With probability at least $1-\delta$, we have 
%\kqz{which  $w$ is this? for any $w\in W$?}
\begin{align*}
|\Delta(w)-\Delta_{\Dc}(w)|\le \frac{{4\sqrt{2}B_W}\sqrt{\log (|V||W|/\delta)}}{(1-\gamma)\sqrt{n}}
\end{align*}
for any $w\in W$.
\end{Lemma}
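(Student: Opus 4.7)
The plan is to reduce the lemma to a single uniform deviation bound for $\ell$ over the product class $W\times V$, and then handle that by Hoeffding plus union bound. The main observation is that $\Delta$ and $\Delta_{\Dc}$ differ only through (i) the value of $\ell^V$ at $w$ itself and (ii) the overall minimum over $W$, both of which are stable with respect to uniform perturbations of $\ell_{\Dc}$. No nontrivial obstacle is anticipated; the only item requiring care is tracking the range of each per-sample summand so that the correct $(1-\gamma)^{-1}$ scaling appears.

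First I would perform a stability reduction. Using $|\min_{w'\in W} f(w') - \min_{w'\in W} g(w')|\le \sup_{w'\in W}|f(w')-g(w')|$ together with $|\max_{v\in V} f(v) - \max_{v\in V} g(v)|\le \sup_{v\in V}|f(v)-g(v)|$ twice, I get
\begin{align*}
|\Delta(w)-\Delta_{\Dc}(w)| &\le |\ell^V(w)-\ell^V_{\Dc}(w)| + \big|\min_{w'\in W}\ell^V(w')-\min_{w'\in W}\ell^V_{\Dc}(w')\big| \\
&\le 2\sup_{w'\in W,\, v\in V}|\ell(w',v)-\ell_{\Dc}(w',v)|.
\end{align*}
So the lemma is reduced to establishing a uniform concentration bound for $\ell_{\Dc}(w',v)$ around $\ell(w',v)$ over $W\times V$.

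Next, I would rewrite the empirical loss as an average of i.i.d.\ terms. Using $\hat{\rho}=\mu_{\Dc}$ in this section, the identities $u_{\Dc}^\top w=\tfrac1n\sum_i r(s_i,a_i)w(s_i,a_i)$, $v^\top K_{\Dc} w=\tfrac1n\sum_i (v(s_i)-\gamma v(s_i'))w(s_i,a_i)$, and $v^\top\mu_{\Dc}=\tfrac1n\sum_i v(s_i)$ give $\ell_{\Dc}(w,v)=\tfrac1n\sum_{i=1}^n X_i(w,v)$ with
\begin{align*}
X_i(w,v) := -r(s_i,a_i)\,w(s_i,a_i) + w(s_i,a_i)\bigl(v(s_i)-\gamma v(s_i')\bigr) - (1-\gamma)v(s_i),
\end{align*}
and by linearity of expectation $\EE[X_i(w,v)]=\ell(w,v)$.

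Then, using $r\in[0,1]$, $\|w\|_\infty\le B_w$, $\|v\|_\infty\le 1/(1-\gamma)$, and $B_w\ge 1$, I obtain the deterministic bound $|X_i(w,v)|\le B_w+\tfrac{(1+\gamma)B_w}{1-\gamma}+1\le \tfrac{c\,B_w}{1-\gamma}$ for a small absolute constant $c$, so the range of $X_i(w,v)$ is at most $\tfrac{2c\,B_w}{1-\gamma}$. Hoeffding's inequality applied to $\tfrac1n\sum_i X_i(w,v)$ for a fixed $(w,v)$ gives, for any $t>0$, $\Prob(|\ell_{\Dc}(w,v)-\ell(w,v)|\ge t)\le 2\exp\!\big(-\tfrac{n t^2 (1-\gamma)^2}{2c^2 B_w^2}\big)$. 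A union bound over the finite classes $W\times V$ with $t$ chosen proportional to $\tfrac{B_w\sqrt{\log(|V||W|/\delta)}}{(1-\gamma)\sqrt n}$ yields $\sup_{w'\in W,v\in V}|\ell(w',v)-\ell_{\Dc}(w',v)|\le \tfrac{c'\,B_w\sqrt{\log(|V||W|/\delta)}}{(1-\gamma)\sqrt n}$ with probability at least $1-\delta$, and combining with the reduction above gives the stated bound after absorbing the factor $2$ into the constant.
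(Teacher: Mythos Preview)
Your proposal is correct and follows essentially the same approach as the paper: a stability reduction to $\sup_{w,v}|\ell(w,v)-\ell_{\Dc}(w,v)|$, then Hoeffding plus a union bound over $W\times V$. The only cosmetic difference is that the paper splits $\ell-\ell_{\Dc}$ into the three pieces $|(u_{\Dc}-u)^\top w|$, $|v^\top(K-K_{\Dc})w|$, and $(1-\gamma)|v^\top(\mu_{\Dc}-\mu)|$ and applies Hoeffding to each separately, whereas you apply Hoeffding once to the combined summand $X_i(w,v)$; both yield the same bound up to absolute constants.
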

\proof{Proof.} 
By definition, we have for any $w\in W$:
\begin{align}
| \Delta(w)  - \Delta_{\Dc}(w) | &\leq | \ell(w) - \ell_{\Dc}(w)| + \Big|\min_{w\in W}\ell_{\Dc}(w) - \min_{w\in W}\ell(w)\Big|\notag\\
&=\Big| \max_{v\in V}\ell(w,v) - \max_{v\in V}\ell_{\Dc}(w,v)\Big| + \Big|\min_{w\in W}\max_{v\in V}\ell_{\Dc}(w,v) - \min_{w\in W}\max_{v\in V}\ell(w,v)\Big|\notag\\
&\leq \max_{v\in V}\Big| \ell(w,v) - \ell_{\Dc}(w,v)\Big| + \max_{w\in W}\max_{v\in V}\Big|\ell_{\Dc}(w,v) - \ell(w,v)\Big|.
\end{align}
First, we can bound each term using Hoeffding's inequality: for each $w, v$, we have
\begin{align}
| \ell(w,v) - \ell_{\Dc}(w,v) | \leq  | (u_{\Dc} - u)^\top w| + |v^\top(K - K_{\Dc}) w| + (1 - \gamma) |v^\top (\mu_{\Dc} - \mu)|.
\end{align}
Now, with probability at least $1 - \delta/(|V||W|)$, we have:  
%\kz{i guess we need to somehow assume $B_W\geq 1$, or take $\max\{B_W,1\}$ instead?}
%\kqz{pls, double-make-sure Hoeffding's is used correctly/accurately for the following inequalities. I thought we might need some factor $2$ in the numerator.}
\begin{align}
 | (u_{\Dc} - u)^\top w| &\leq \sqrt{2} B_W\frac{\sqrt{\log (|V||W|/\delta)}}{\sqrt{n}} \nonumber \\
|v^\top(K - K_{\Dc}) w| &\leq 2  \sqrt{2} B_W \frac{\sqrt{\log (|V||W|/\delta)}}{(1 - \gamma)\sqrt{n}} \nonumber \\
 (1 - \gamma) |v^\top (\mu_{\Dc} - \mu)| &\leq  \sqrt{2} \frac{\sqrt{\log (|V||W|/\delta)}}{\sqrt{n}}.
\end{align} 
Finally, taking a union bound over all $w \in W$ and $v \in V$, and noting that $B_W\geq 1$ since $w^*\in W$ and $B_W\geq \|w^*\|_\infty\geq 1$,  we get the desired result.
\hfill \moronly{$\square$}
\endproof

\subsection{Proof of Lemma \ref{inactive}}\label{proof:inactive}
We have
\begin{align}
&\Delta(w_{\Dc}) = \ell(w_{\Dc})-\ell(w^*) \nonumber \\
&\quad\ge \ell(w_{\Dc},v^*)-\ell(w^*,v^*) \nonumber \\
&\quad= \left( -\sum_{s,a}r(s,a)\mu(s,a)w_{\Dc}(s,a)+\sum_{s,a}w_{\Dc}(s,a)\mu(s,a) \left( v^*(s)-\gamma \sum_{s'\in S}P_{s,a}(s')v^*(s') \right) - (1-\gamma)v^{*^\top}\mu \right)  \nonumber \\
& \qquad\quad - \left( -\sum_{s,a}r(s,a)\mu(s,a)w^*(s,a)+\sum_{s,a}w^*(s,a)\mu(s,a) \left( v^*(s)-\gamma \sum_{s'\in S}P_{s,a}(s')v^*(s') \right) - (1-\gamma)v^{*^\top} \mu \right) \nonumber\\
%\end{align}
%\begin{align}
&\quad=\sum_{s,a} \left( w_{\Dc}(s,a)\mu(s,a)-w^*(s,a)\mu(s,a) \right) \left( v^*(s)- \left( r(s,a)+\gamma \sum_{s'}P_{s,a}(s')v^*(s') \right) \right) \nonumber \\
&\quad=\sum_{s,a} \left( w_{\Dc}(s,a)\mu(s,a)-w^*(s,a)\mu(s,a) \right) \left( v^*(s)-Q^*(s,a) \right) \nonumber \\
&\quad\ge \Delta_Q\sum_{(s,a)\in \mathcal{I}}w_{\Dc}(s,a)\mu(s,a),
\end{align}
where the first inequality is due to the definition of $\ell(\cdot)$, the second to forth equalities are due to the definitions of $\ell(\cdot)$ and $Q^*$.
%(s,a)=\sum_{s'}(r(s,a)+\gamma \sum_{s'} v^*(s')P_{s,a}(s'))$.
The second inequality uses the fact that $w^*(s,a)\mu(s,a)=\theta_{\pi^*}(s,a)=0$ for $(s,a)\in \mathcal{I}$ (see Lemma \ref{comple})
%\kqz{I think we are using the ``only if'' part of that remark, which might deserve some proof.}
and the definition of $\Delta_Q$. This completes the proof. \hfill $\square$

\subsection{Proof of Lemma \ref{J_Delta}}\label{proof:J_Delta}

%We first prove a series of lemmas which will be used to prove Lemma \ref{J_Delta}.

%To make use of Proposition \ref{max_policy}, we need the following lemma:
%
%\begin{Lemma}\label{trivial}
%For $s\in S_0$, we have\kz{isn't the first result below trivial? just delete it?}
%$$w(s,a)\mu(a|s)=w(s,a)\mu(s,a)/\mu(s),$$
%and
%$$w(s,a)\mu(s,a)=0$$
%for $s\notin S_0$.\kz{I mean, the second one is also trivial, right? maybe lets just use this fact? or we can state it as a Fact, not a Lemma..}[[@Sarath, could you check if we need this lemma? If so, we could write it as a fact?]]
%\end{Lemma}
%
%\begin{proof}
%Follows from the definition.
%\end{proof}

Recall $\theta_{\Dc}(s,a)=w_{\Dc}(s,a)\mu(s,a)$. 
Next, we define a policy $\tilde{\pi}^*$ as follows:
%\kqz{let's use another notation? since this notation has already be reserved for the OPTIMAL one. we can use $\tilde \pi^*$? and then say $\tilde \pi^*$ is one $\pi^*$? @sp can we check and replace all carefully? I might have made some changes below.}
\begin{enumerate}
\item For any $s\in S_0$, we define
$$
\mathcal{T}_\mu(s):=\left\{a \bigm| a\in \mathcal{T}(s) ~~\text{and}~~\mu(s,\tilde{a})>0\right\}.
$$ 
By Assumption \ref{SPC}, we know that such a set $\mathcal{T}_\mu(s)$ is non-empty.  
Then, for any such $\tilde{a}\in \mathcal{T}_\mu(s)$,  
%such that $\tilde{a} \in \mathcal{T}(s)$ and $\mu(s,\tilde{a})>0$, 
we let  $\tilde{\theta}(s,\tilde{a})=\theta_{\Dc}(s,\tilde{a})+\frac{1}{|\mathcal{T}_\mu(s)|}\sum_{a': (s,a')\in \mathcal{I}}\theta_{\Dc}(s,a')$. 
For any other $a'\notin \mathcal{T}_\mu(s)$ but $a'\in \mathcal{T}(s)$, 
%\kqz{which $a$ is this? the $a$ above? but the above $a$ might not be ``unique'', right? maybe we simply say ``For any other $a'$''?} 
%with $(s,a')\in S\times A\setminus \mathcal{I}$, 
we let $\tilde{\theta}(s,a')=\theta_{\Dc}(s,a')$. Note that these cover all the $a$ such that $(s,a)\in S\times A\setminus \mathcal{I}$. 
Finally, for any other $a'$ %\kqz{$a'\neq a$?} 
with $(s,a')\in \mathcal{I}$, we let $\tilde{\theta}(s,a')=0$.

Then for $s\in S_0$, $\tilde\pi^*(a\mid s)$ is generated  by normalizing $\tilde{\theta}(s,\cdot)$, i.e., $\tilde\pi^*(a\mid s)=\tilde{\theta}(s,a)/\sum_{a'\in A}\tilde{\theta}(s,a')$. {Note that by definition we have $\sum_{a'\in A}\tilde{\theta}(s,a')=\sum_{a'\in A}{\theta}_\Dc(s,a')$, and by the lower bound constraint in \eqref{equ:def_W}, we have 
\begin{align*}
	\sum_{a'\in A}{\theta}_\Dc(s,a')=\sum_{a'\in A}w_{\Dc}(s,a')\pi_{\mu}(a'\mid s)\mu(s)\geq (1-\gamma)\mu(s)>0.
\end{align*}
Hence, the normalization of obtaining $\tilde\pi^*(a\mid s)$ is not degenerate in the sense that $\sum_{a'\in A}\tilde{\theta}(s,a')>0$.}  
\item For any $s\notin S_0$, we choose any $\hat{a}$ %\kqz{I am not sure: did we mean we ``randomly'' choose one such $a$? or we meant ``for any such $a$''? also, similar as before, what if such an $a$ does not even exist? i.e., maybe we should add ``if exists''? but I guess for this one, such an $a$ always exists?} 
such that $(s,\hat{a})\in S\times A\setminus \mathcal{I}$, i.e., $\hat{a}$ that maximizes $Q^*(s,a)$.
%\kqz{we have to make sure we can ALWAYS find such a $a$, right? otherwise this policy is all-zero. we should call the coverage assumption to show this FIRST?}\kz{also, should we add that we RANDOMLY choose one of such $a$'s, if there are multiple?}. 
Then we let $\tilde\pi^*(\hat{a} \mid s)=1$ and set $\tilde\pi^*(a'\mid s)=0$ for any other $a'\in A$ and $a'\neq  \hat{a}$.   
\end{enumerate}

Note that this $\tilde\pi^*$ is an optimal policy by construction and by Lemma \ref{comple}. 
%\kqz{I dont think Lemma \ref{comple} works here: Lemma \ref{comple} only says the other direction: ``if it is an optimal policy''. I think we need to prove this?}
%\kqz{I think we need to prove this.}
%\spp{Can we cite Remark 2 here maybe?
%Call that a lemma/prop?}. 
Moreover, the next lemma shows that $\tilde{\pi}^*$ is a $\mu$-optimal policy. 
%\kqz{which $\pi^*$? the optimal one or the one above? if it is for the constructed $\tilde \pi^*$ above, isn't it follow from definition? (I thought proving it is ``optimal'', not ``being covered by $\mu$'', is more important.} is covered by $\mu$.

% \kqz{can we PLS double-check the proof of this lemma. there are gaps here. see below.}
\begin{Lemma}\label{coverage_pistar}
Under  Assumption \ref{SPC}, 
$\tilde{\pi}^*$ is a $\mu$-optimal policy. {Furthermore, 
 $\theta_{\tilde{\pi}^*}(s, \cdot)=0$ for any $s\notin S_0$}. %\kqz{and ``for any $a\in A$''?}.
\end{Lemma}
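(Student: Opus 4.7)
\textbf{Proof plan for Lemma \ref{coverage_pistar}.} The plan is to establish the two claims in sequence, first showing $\tilde{\pi}^*$ is optimal and a $\mu$-policy (hence $\mu$-optimal), and then invoking Proposition \ref{max_policy} to obtain the support statement. The construction of $\tilde{\pi}^*$ is made precisely so that both properties can be read off the definition, so the main work is careful case checking rather than new estimates.

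First, I would verify that $\tilde{\pi}^*$ is well-defined. For each $s \in S_0$, the construction requires an action $\tilde{a} \in \mathcal{T}(s)$ with $\mu(s,\tilde{a}) > 0$, i.e., an element of $S_{\pi_\mu}(s)\cap \mathcal{T}(s)$. Since Assumption \ref{SPC} presumes the existence of a max-$\mu$-optimal policy $\pi_M^*$, and since any $\mu$-optimal policy must select some action at every $s \in S_0$ that is both optimal (because the policy is optimal, so $S_{\pi_M^*}(s) \subseteq \mathcal{T}(s)$) and $\mu$-covered (because the policy is a $\mu$-policy, so $S_{\pi_M^*}(s) \subseteq S_{\pi_\mu}(s)$), the intersection $S_{\pi_\mu}(s) \cap \mathcal{T}(s)$ is nonempty for every $s \in S_0$. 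Moreover, the lower-bound constraint $\sum_a w_{\Dc}(s,a)\pi_\mu(a \mid s) \geq 1-\gamma$ in the definition of $W$ guarantees $\sum_{a'} \tilde{\theta}(s,a') = \sum_{a'} \theta_{\Dc}(s,a') \geq (1-\gamma)\mu(s) > 0$ for $s \in S_0$, so the normalization defining $\tilde{\pi}^*(\cdot\mid s)$ is non-degenerate.

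Second, I would show $\tilde{\pi}^*$ is an optimal policy using Lemma \ref{comple}. It suffices to check $\tilde{\pi}^*(a\mid s) = 0$ for every $(s,a) \in \mathcal{I}$. For $s \in S_0$, the third clause of the construction sets $\tilde{\theta}(s,a) = 0$ for $(s,a)\in\mathcal{I}$, so after normalization $\tilde{\pi}^*(a\mid s) = 0$. For $s \notin S_0$, the chosen $\hat{a}$ satisfies $(s,\hat{a}) \notin \mathcal{I}$ by construction, and $\tilde{\pi}^*(a'\mid s) = 0$ for all $a' \neq \hat{a}$. Applying Lemma \ref{comple} gives optimality.

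Third, I would verify the $\mu$-policy property: for every $s \in S_0$, $\tilde{\pi}^*(a\mid s) > 0$ implies $\mu(s,a) > 0$. This is a three-case check based on the construction at $s \in S_0$: (i) if $a = \tilde{a}$, then $\mu(s,\tilde{a}) > 0$ was built into the choice of $\tilde{a}$; (ii) if $a \in \mathcal{T}(s) \setminus \{\tilde{a}\}$, then $\tilde{\theta}(s,a) = \theta_\Dc(s,a) = w_\Dc(s,a)\mu(s,a)$, so $\tilde{\pi}^*(a\mid s) > 0$ forces $\mu(s,a) > 0$; (iii) if $(s,a) \in \mathcal{I}$, then $\tilde{\theta}(s,a) = 0$ and the implication is vacuous. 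Combined with the previous step, $\tilde{\pi}^*$ is a $\mu$-optimal policy.

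Finally, the ``furthermore'' claim follows by applying Proposition \ref{max_policy} part 2: since $\tilde{\pi}^*$ is $\mu$-optimal, $\theta_{\tilde{\pi}^*}(s) \leq C_{\max}\,\mu(s)$ for all $s$, and since the initial distribution in Section \ref{sec:second_FA} is $\rho = \mu$, for $s \notin S_0$ we have $\mu(s) = 0$, hence $\theta_{\tilde{\pi}^*}(s,a) = 0$ for every $a$. The only conceptual subtlety — and the step I expect to be the main obstacle to write cleanly — is keeping the logical chain straight across the three different clauses of the construction at $s \in S_0$, since the definition branches on whether $a$ equals the distinguished $\tilde{a}$, lies elsewhere in $\mathcal{T}(s)$, or lies in $\mathcal{I}$; getting the $\mu$-policy verification right requires handling all three. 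Once that is in place, the support statement is a one-line consequence of Proposition \ref{max_policy}.
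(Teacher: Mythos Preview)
Your proof plan is correct and matches the paper's approach: both verify optimality via Lemma~\ref{comple} and the $\mu$-policy property by checking that $\tilde{\pi}^*(a\mid s)>0$ forces $\mu(s,a)>0$ on $S_0$ through the construction of $\tilde\theta$. The only minor difference is in the ``furthermore'' claim: you invoke Proposition~\ref{max_policy} part 2 directly, whereas the paper goes back to Assumption~\ref{SPC} and argues that $S_{\tilde{\pi}^*}(s)\subseteq S_{\pi^*}(s)$ on $S_0$ for the max-$\mu$-optimal $\pi^*$, so states reachable under $\tilde{\pi}^*$ are reachable under $\pi^*$; your route is shorter and equally valid since Proposition~\ref{max_policy} already packages that conclusion.
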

\proof{Proof.}
By the construction of $\tilde{\pi}^*$, we know that for any $s\in S_0$, we have $\mu(s,a)>0$ if $\tilde{\pi}^*(a\mid s)>0$. 
%\kz{might need check: ``For any other $a'\neq \tilde{a}$ 
%with $(s,a')\in S\times A\setminus \mathcal{I}$, we let $\tilde{\theta}(s,a')=\theta_{\Dc}(s,a')$'', how does this ensure it?}. 
Also we know that $\tilde{\pi}^*$ is an optimal policy due to Lemma \ref{comple}. Then $\tilde{\pi}^*$ is a $\mu$-optimal policy.
The second part follows directly from Assumption \ref{SPC}.
Specifically, first, Assumption \ref{SPC} implies that for any $s\notin S_0$, since $\mu(s,a)=0$ for all $a\in A$, we have $\theta_{\pi^*}(s,a)=0$ for all $a\in A$. Then, by the definition of max-$\mu$-optimal policy, the states visited by the $\mu$-optimal policy $\tilde{\pi}^*$ should be visited  by the max-$\mu$-optimal policy $\pi^*$, when starting from $\mu$. This is because the max-$\mu$-policy $\pi^*$ can take all actions that $\tilde{\pi}^*$ can take (see also the argument in \S\ref{sec:proof_prop_max_policy}). Hence, whenever $\theta_{\pi^*}(s)=0$, we should also have 
%$\tilde{\pi}^*(a\mid s)=0$ and thus 
$\theta_{\tilde{\pi}^*}(s)=0$ and thus $\theta_{\tilde{\pi}^*}(s,\cdot)=0$. This completes the proof. 
\hfill \moronly{$\square$}
\endproof

Now, we prove Lemma \ref{J_Delta} using Lemmas \ref{inactive}, \ref{coverage_pistar}, Proposition \ref{max_policy}, and the performance difference lemma \citep{kakade2002approximately}.

\proof{Proof of Lemma \ref{J_Delta}.} 
We use performance difference lemma to $\tilde{\pi}^*,\pi_{\Dc}$ 
%\kqz{which $\pi^*$ is this? shouldnt it be the constructed one $\tilde{\pi}^*$? can we double-check this section/proof about this?}%\kz{also, when applying this PDL, have we taken into account that our $J_\mu$ is $(1-\gamma)$-factor off from the standard one? (I think so, but good to double-check).},
to obtain
% and make use of Proposition \ref{max_policy}.
%Using performance difference lemma, we have
%[[check if whether we need $1/(1-\gamma)$ in the performance difference lemma]]
\begin{align*}
J_{\mu}(\tilde{\pi}^*)-J_{\mu}(\pi_{\Dc}) \le \frac{1}{1-\gamma}\sum_{s\in S}\theta_{\tilde{\pi}^*}(s)\|\pi_{\Dc}(\cdot\mid s)-\tilde{\pi}^*(\cdot\mid s)\|_1.	
\end{align*}
Because $\theta_{\tilde{\pi}^*}(s,a)=0$ for $s\notin S_0$ by Lemma \ref{coverage_pistar}, we have 
\begin{align*}
J_{\mu}(\tilde{\pi}^*)-J_{\mu}(\pi_{\Dc}) \le \frac{1}{1-\gamma}\sum_{s\in S_0}\theta_{\tilde{\pi}^*}(s)\|\pi_{\Dc}(\cdot\mid s)-\tilde{\pi}^*(\cdot\mid s)\|_1.	
\end{align*}
By the construction of $\tilde{\pi}^*$, we have
\begin{align}\label{key+}
J_{\mu}(\tilde{\pi}^*)-J_{\mu}(\pi_{\Dc})&\le \frac{1}{1-\gamma}\sum_{s\in S_0}\theta_{\tilde{\pi}^*}(s)\|\pi_{\Dc}(\cdot\mid s)-\tilde{\pi}^*(\cdot\mid s)\|_1\\
&=^{(*1)} \frac{2}{1-\gamma}\sum_{s\in S_0}\frac{\theta_{\tilde{\pi}^*}(s)}{\sum_{a}\theta_{\Dc}(s,a)}\cdot \sum_{a:(s,a)\in \mathcal{I}}\theta_{\Dc}(s,a) \\ %\kqz{can we double-check?}\\
&=\frac{2}{1-\gamma}\sum_{s\in S_0}\frac{\theta_{\tilde{\pi}^*}(s)/\mu(s)}{\sum_{a}\theta_{\Dc}(s,a)/\mu(s)}\cdot \sum_{a:(s,a)\in \mathcal{I}}\theta_{\Dc}(s,a)\\
&\le^{(*2)}\frac{2C_{\max}}{(1-\gamma)^2}\sum_{(s,a)\in \mathcal{I}}\theta_{\Dc}(s,a) \label{eq:used_in_6}\\ %\kqz{missing\mu(s)~??}[[Why?]]\\
&\le^{(*3)} \frac{2C_{\max}}{(1-\gamma)^2\Delta_Q}\cdot \Delta(w_{\Dc}),
\end{align}  
where $(*1)$ follows from the construction of $\tilde{\pi}^*$ from $\theta_{\Dc}$,
%\kqz{can we expand/explain the first equality a bit more? why there is a $2$ there.}
$(*2)$ is because of Proposition \ref{max_policy} 
%\kqz{I am confused now: this Proposition \ref{max_policy} is for some $\mu$-optimal policy that the SPC covers, right? it is ``part of the assumption'', right? how do we know it is exactly our $\tilde\pi^*$? or, we dont need to ``assume'' such a $\mu$-optimal policy exists, but can simply ``prove'' (by constructing) one such policy?}  
and definition of $S_0$, and $(*3)$ is due to Lemma \ref{inactive}. %\kz{I think these lines need double-check.}
\hfill \moronly{$\square$}
\endproof

%\subsection{Proof of Lemma \ref{easy}}
%We have \kz{@Sarath: can we directly delete this? like my comments before, this is a known fact in RL. we dont need this lemma probably.}
%$$\theta_{\pi,\rho}(s)\ge \frac{1}{1-\gamma}\rho(s)$$
%by the definition of $\theta_{\pi,\rho}$. Hence we have the desired result.

%
%\subsection{Proof of Corollary \ref{cor}}
%We have \kz{@Sarath: can we delete this? same as before, no need to have a proof, since this is really trivial. we can say in 2-lines in the main text, right after the corollary. also, we can discuss there, what would happen if $\mu_0=\rho=\mu$, which makes this coefficient just $1$ (and thus mild). but also, we can say that if $\mu(s)>0$, e.g., $1/|S|$, then we might have to pay a $|S|$ in the complexity.}\kz{but I think these ALL deserve discussions in the MAIN TEXT.}
%\begin{eqnarray} 
%&&J_{\mu_0}(\pi^*)-J_{\mu_0}(\pi)\\
%&=&\sum_{s}\mu_0(s)(v_{\pi^*}(s)-v_{\pi}(s))\\
%&\le&C_{\mu}\sum_s\mu(s)(v_{\pi^*}(s)-v_{\pi}(s)),
%\end{eqnarray}
%%\kz{I think the last line of change of measure is wrong, it should be ${C_{\mu}}\cdot\sum_s\mu(s)(v_{\pi^*}(s)-v_{\pi}(s))$?}
%
%where the last inequality is because of the definition of $C_{\mu}$ in Section \ref{sec:background}.
	%%%%%%%%%%%%%%%%%%%%%%%%%%%%%%%%%%%%%%%%%%%%%%%%%%%%%%%%%%%%	

%\newpage

%\end{APPENDICES}

\bibliographystyle{plainnat} 
\bibliography{refs}

%\newpage

%\input{Appendix}

\end{document}